\newcommand{\eat}[1]{}
\newtheorem{result}{Theorem} 
\newtheorem{theorem}{Theorem}[section]
\newtheorem{lemma}[theorem]{Lemma}
\newtheorem{definition}[theorem]{Definition}
\newtheorem{corollary}[theorem]{Corollary}
\newtheorem{claim}[theorem]{Claim}
\newcommand{\X}{\mathcal{X}}
\newcommand{\mC}{\mathcal{C}}
\newcommand{\mD}{\mathcal{D}}
\newcommand{\mU}{\mathcal{U}}
\newcommand{\mH}{\mathcal{H}}
\newcommand{\mA}{\mathcal{A}}
\newcommand{\mL}{\mathcal{L}}
\newcommand{\mLc}{\mathcal{L}_\mathsf{cvx}}
\newcommand{\mP}{\mathcal{P}}
\newcommand{\mLC}{\mathrm{Lin}_\mathcal{C}}
\newcommand{\Bad}{\mathrm{Bad}}
\DeclareMathOperator{\Var}{Var}
\DeclareMathOperator{\Cov}{Cov}
\renewcommand{\Im}{\mathrm{Im}}
\newcommand{\tf}{\tilde{p}}
\newcommand{\tp}{\tilde{p}}
\newcommand{\ty}{\mathbf{\tilde{y}}}
\newcommand{\R}{{\mathbb{R}}}
\newcommand{\Z}{{\mathbb{Z}}}
\newcommand{\y}{\mathbf{y}}
\newcommand{\sy}{\mathbf{y^*}}
\newcommand{\bv}{\mathbf{v}}
\newcommand{\x}{\mathbf{x}}
\newcommand{\z}{\mathbf{z}}
\newcommand{\rgta}{\rightarrow}
\newcommand{\lt}{\left}
\newcommand{\rt}{\right}
\newcommand{\zo}{\ensuremath{\{0,1\}}}
\newcommand{\izo}{\ensuremath{[0,1]}}
\newcommand{\infnorm}[1]{\left\lVert#1\right\rVert_\infty}
\newcommand{\fD}{\mathcal{D}_{\tp}}
\newcommand{\Dv}{{\mathcal{D}|_{\bv}}}
\newcommand{\Pell}{\partial \ell}
\newcommand{\eps}{\varepsilon}
\newcommand{\pmo}{\ensuremath{ \{\pm 1\} }}
\newcommand{\fr}[1]{\ensuremath{\frac{1}{#1}}}
\DeclareMathOperator{\poly}{poly}
\DeclareMathOperator*{\E}{\mathbf{E}}
\DeclareMathOperator*{\argmin}{arg\,min}
\DeclareMathOperator{\Lin}{Lin}
\newcommand{\Ber}{\mathrm{Ber}}
\newcommand{\set}[1] {\left\{ #1 \right\}}
\newcommand{\mce}{\mathsf{MCE}}
\newcommand{\umce}{\mathsf{sMCE}}
\newcommand{\lglm}{\mathcal{L}_\mathsf{GLM}}
\title{Swap Agnostic Learning,\\
or Characterizing Omniprediction via Multicalibration\thanks{This manuscript represents the full version of \cite{swap-neurips}. It subsumes arXiv:2302.06726v1 by the authors, under the title \emph{Characterizing notions of omniprediction via multicalibration}.}}
\author{Parikshit Gopalan\\
Apple
\and Michael P. Kim\\
Cornell University\thanks{This work was completed when the author was at the Miller Institute for Basic Research in Science and the Simons Insitute for the Theory of Computing at UC Berkeley.}
\and Omer Reingold\\
Stanford
}
\date{}
\begin{document}

\maketitle

\begin{abstract}

We introduce and study Swap Agnostic Learning.
The problem can be phrased as a game between a \emph{predictor} and an \emph{adversary}:  first, the predictor selects a hypothesis $h$; then, the adversary plays in response, and for each level set of the predictor $\{x \in \mathcal{X} : h(x) = v\}$ selects a (different) loss-minimizing hypothesis $c_v \in \mathcal{C}$; the predictor wins if $h$ competes with the adaptive adversary's loss.
Despite the strength of the adversary, we demonstrate the feasibility Swap Agnostic Learning for any convex loss.

Somewhat surprisingly, the result follows through an investigation into the connections between Omniprediction \cite{omni} and Multicalibration \cite{hkrr2018}.
Omniprediction is a new notion of optimality for predictors that strengthtens classical notions such as agnostic learning.
It asks for loss minimization guarantees (relative to a hypothesis class) that apply not just for a specific loss function, but for any loss belonging to a rich family of losses.
A recent line of work shows that omniprediction is implied by multicalibration and related multi-group fairness notions \cite{omni, lossoi}. This unexpected connection raises the question: {\em is multi-group fairness necessary for omniprediction?} 

Our work gives the first affirmative answer to this question.
We establish an \emph{equivalence} between swap variants of omniprediction and multicalibration and swap agnostic learning.
Further, swap multicalibration is essentially equivalent to the standard notion of multicalibration, so existing learning algorithms can be used to achieve any of the three notions.
Building on this characterization, we paint a complete picture of the relationship between different variants of multi-group fairness, omniprediction, and Outcome Indistinguishability \cite{oi}.
This inquiry reveals a unified notion of OI that captures all existing notions of omniprediction and multicalibration. \end{abstract}

\thispagestyle{empty}
\newpage
\thispagestyle{empty}

{
  \hypersetup{linkcolor=black}
  \tableofcontents
}
\newpage
\setcounter{page}{1}

\section{Introduction}
\label{sec:intro}

Since its inception as an extension to Valiant's PAC framework \cite{valiant1984theory,haussler1992decision,kearns1994introduction}, Agnostic Learning has been the central problem of supervised learning theory.
Agnostic learning frames the task of supervised learning through loss minimization:
given a loss function $\ell$, a hypothesis class $\mC$, and $\eps \ge 0$, a predictor $h$ is an agnostic learner if it achieves loss that competes with the minimal achievable within the hypothesis class.
\begin{gather}
\label{def:AL}
    \E[\ell(\y,h(\x))] \le \min_{c \in \mC}\ \E[\ell(\y,c(\x))] + \eps
\end{gather}
While the agnostic learning paradigm has been remarkably successful, in recent years, researchers have investigated alternative learning paradigms to address concerns of the modern prediction pipeline, including fairness and robustness.
The work of \cite{hkrr2018} introduced \emph{multicalibration} as a new paradigm for learning fair predictors.
Multicalibration asserts fairness as a first-order goal, requiring that predictions appear calibrated even conditioned on membership in one of a potentially-huge collection of subgroups $\mC$ of the domain.\footnote{This collection of subgroups is suggestively denoted by $\mC$, indicating that (in correspondence with a hypothesis class) subgroup membership can be computed using models of bounded capacity, such as small decision trees, halfspaces, or neural networks.}
As a solution concept, multicalibration can be achieved efficiently using a weak learner for $\mC$ to identify subgroups where predictions are miscalibrated.
In contrast to agnostic learning, multicalibration does not make reference to minimizing any loss.

Yet, it turns out that there are deep connections between multicalibration and loss minimization.
This connection was first discovered in the work of \cite{omni}, who introduced the notion of \emph{omniprediction} as a new solution concept in supervised learning.
Intuitively, an omnipredictor is a single predictor that provides an optimality guarantee for many losses \emph{simultaneously}.
More formally, for a collection of loss functions $\mL$ and a hypothesis class $\mC$, a predictor is an $(\mL,\mC)$-omnipredictor if \emph{for any loss in the collection} $\ell \in \mL$, the predictions achieve loss that competes with the minimal achievable within the class $\mC$.
On a technical level, the omniprediction guarantee is made possible by post-processing the predictions after the loss is revealed. 
Importantly though, the post-processing is a data-free optimization that can be performed efficiently, based only on the loss function and prediction, and not the data distribution.
In this way, an omnipredictor can be trained on the dataset once and for all without knowing the specific target loss in advance (beyond the fact that it belongs to $\mL$).

The surprising main result of \cite{omni} demonstrates that multicalibrated predictors are omnipredictors for $\mLc$, the class of all convex loss functions.
In other words, the multicalibration framework is capable of guaranteeing loss minimization in a very strong sense:
learning a single multicalibrated predictor gives an agnostic learner, for every target convex loss $\ell \in \mLc$.
The results of \cite{omni} stand in contrast to the convential wisdom that optimizing predictions for different loss functions requires a separate training procedure for each loss.

On the surface, multicalibration seems like a fundamentally different approach to supervised learning than agnostic learning.
The implication of omniprediction from multicalibration, however, suggests a deeper connection between the notions.
In fact, follow-up work of \cite{lossoi} gave new constructions of omnipredictors (for different loss classes $\mL$), similarly deriving the guarantees from notions of multi-group fairness.
To summarize the state of the art, we have constructions of omnipredictors of various flavors, and all such constructions rely on some variant of multicalibration.

While multicalibration suffices to guarantee omniprediction, a glaring question remains in the development of this theory:
{\em  Is multicalibration necessary for omniprediction?}
A closely-related question stems from the observation that multicalibration guarantees omniprediction, despite the fact that the definition of multicalibration does not reference any loss function to minimize.
It is natural to ask whether we can arrive at the same solution concept through a more standard supervised learning framing:
{\em Can we characterize multicalibration in the language of loss minimization?}
We investigate these questions, exploring the connections between agnostic learning, omniprediction, and multicalibration.

\paragraph{Tight charcteriztions via swap notions.}
To understand the relationship between omniprediction and multicalibraiton, we formulate a new learning task, which we call Swap Agnostic Learning. 
This \emph{swap} variant of agnostic learning is inspired by the notion of swap regret in the online learning literature \cite{FosterV98, BlumM07}, where the learner must achieve vanishing regret, not simply overall, but even conditioned on their decisions. Swap agnostic learning naturally suggests swap notions of omniprediction (and generalizations such as swap loss OI) and multicalibration.
At first glance, swap agnostic learning is a considerably more ambitious goal than standard agnostic learning.
Nevertheless, our work demonstrates an efficient algorithm for swap agnostic learning for any convex loss function, leveraging only a weak agnostic learner for the hypothesis class.

Our main result about swap agnostic learning actually follows as a corollary of a much broader discovery:  \emph{swap agnostic learning and swap variants of omniprediction and multicalibration are all equivalent.}
Concretely, we introduce new swap variants of omniprediction and multicalibration (also inspired by swap regret) and show that each of these definitions---despite different technical framings---actually encodes the same solution concept.
In other words, once we move to swap variants, multicalibration \emph{is} necessary (and sufficient) for omniprediction, and necessary even for the ``weaker'' goal of swap agnostic learning.
Further, we argue that the original formulation of multicalibration is essentially equivalent to our new swap variant; in particular, the original multicalibration algorithm of \cite{hkrr2018} actually produces a swap multicalibrated predictor.
Thus, all three swap variants can be achieved efficiently given a weak agnostic learner for the class $\mC$.

While the swap and standard variants of multicalibration are quite similar, for agnostic learning and omniprediction, the distinction is substantial.
We prove separations between standard and swap learning for these notions.
In combination, our results shed light on the question of whether (standard) omniprediction implies multicalibration, suggesting that the answer is no:  (swap) multicalibration is equivalent to the stronger notion of swap omniprediction, which is provably stronger than standard omniprediction.

Our results provide an exact characterization of these swap learning notions, as well as relationships between other learning desiderata explored in recent works.
At the extreme, we introduce a swap variant of Loss Outcome Indistinguishability \cite{oi,lossoi} and show how it is expressive enough to capture all existing notions in the study of multicalibration and omniprediction.
We present a tight chrarectization of it in terms of swap multicalibration for a suitably defined class $\mC'$ which depends on both the loss family $\mL$ and the hypothesis class $\mC$. Standard Loss OI (without the swap) was known to be implied by calibrated multiaccuracy for the same class $\mC'$.
Our result shows that strenghting to swap loss OI, gives a tight characterizaiton in terms of swap multicalibration.\footnote{At this point, we have introduced multiplce notions of swap learning and omniprediction. For the curios reader, the precise relationship between the different notions is depicted in Figure~\ref{fig:rel}.}
We describe each of the notions, and our contributions, in detail in the next section.

\section{Our Contributions:  Defining and Characterizing Swap Learning}
\label{sec:defs}

In this work, we give new definitions of swap learning notions.
A major contribution of this work is defining these notions precisely, within the context of prior work on supervised learning and algorithmic fairness.
After defining these notions, we give a tight characterization of them, showing key equivalences and key separations.
In this section, we give a technical overview of the definitions and the results.

We begin with the definitions we introduce to study swap notions of learning.
We recall prior notions---agnostic learning, omniprediction, multicalibration, and loss OI---and we present our new swap variants of each.
Along the way, we include a number of claims about the relationship between the standard and swap variants.
Then, with the definitions in place, we give an overview of our results characterizing the notions.

\paragraph{Preliminaries.}
We work in the {\em agnostic} learning setting, where we assume a data distribution $(\x,\y) \sim \mD$ supported on $\X \times \zo$.
We take the objects of study in agnostic learning to be real-valued hypotheses $h:\X \to \R$.
Omniprediction and multicalibration study the special case of predictors $\tp:\X \to [0,1]$ that map domain elements to probabilities.
We denote the Bayes optimal predictor as $p^*(x) = \Pr[\y = 1|\x = x]$.
As is standard in agnostic learning, we make no assumptions about the complexity of $p^*:\X \rgta [0,1]$.
While, in principle, predictors and hypotheses may take on continuous values in $\R$, we restrict our attention to functions supported on finitely-many values.
We let $\Im(\tf) \subseteq [0,1]$ denote the set of values taken on by $\tf(x)$.

Given a hypothesis $\tp:\X \to \R$, it will be useful to imagine drawing samples from $\mD$ in two steps:  first, we sample a prediction $\bv \in \R$ according to the distribution of $\tp(\x)$ under $\mD$; then, we draw the random variables $(\x,\y)$ according to the conditional distribution $\mD \vert \tp(\x) = \bv$.
Accordingly, let $\mD_{\tp}$ denote the distribution of $\tp(\x)$ where $\x \sim \mD$.
When the predictor $\tp$ is clear from context, we use the following notation:  for each $v \in \Im(\tf)$, let $\mD|_v$ denote the conditional distribution $\mD|\tp(\x) =v$.
Note that the distribution $\Dv$ for randomly drawn $\bv \sim \fD$ recovers the data distribution $\mD$.

Throughout, we consider loss functions $\ell: \zo \times \R \rgta \R$, which take a binary label $y$ and a real-valued action $t$ and assign to it a loss value.
Two common examples are the squared loss $\ell_2(y, t) = (y -t)^2$ and logistic loss $\ell_{\mathrm{lg}}(y,t) = \log(1 + \exp((1-2y)t)$.
Of key interest is the loss class $\mLc$, which denotes the set of all convex losses satisyfing a set of ``niceness'' conditions, defined formally in Definition~\ref{def:nice-loss}.

\subsection{Definitions of Swap Learning and Swap Multicalibration}

\subsubsection*{Swap Agnostic Learning}

Swap agnostic learning is defined with respect to a loss function $\ell:\zo \times \R \to \R$ and a hypothesis class $\mC \subseteq \{c:\X \to \R\}$, and can be viewed as a game with two participants:
\begin{itemize}
\item {\bf The Predictor $\mP$} selects a hypothesis $h:\X \to \R$, aiming to minimize the loss $\ell$.
\item {\bf The Adversary $\mA$} plays in response:  for each level set $\{x \in \X : h(x) = v\}$, the adversary may choose a separate loss-minimizing hypothesis $c_v \in \mC$.
\end{itemize}

Swap agnostic learning requires that the predictions of $h$ are competitive with the adaptive strategy determined by the adversary $\set{c_v : v \in \Im(h)}$, even though the predictor plays first.
\begin{definition}[Swap Agnostic Learning]
    For a loss function $\ell$, hypothesis class $\mC$, and error $\eps \ge 0$, a hypothesis $h$ is a $(\ell,\mC,\eps)$-swap agnostic learner if
    \begin{gather}
    \label{def:eqn:SwapAL}
        \E[\ell(\y,h(\x))] \le \E_{\bv \sim \mD_h}\left[\min_{c_v \in \mC}~ \E[\ell(\y,c_{v}(\x) \ \vert \ h(\x) = \bv] \right]  + \eps.
    \end{gather}
\end{definition}
We borrow nomenclature from online learning: a predictor using a swap agnostic learner $h$ has no incentive to ``swap'' any of their fixed predictions $h(\x) = v$ to predict according to $c_v \in \mC$.
Contrasting the requirement in (\ref{def:eqn:SwapAL}) to that of agnostic learning in (\ref{def:AL}), we have switched the order of quantifiers, such that the minimization is taken after the expectation over the choice of $h(\x) = \bv$.

Swap agnostic learning strengthens standard agnostic learning, where the predictor only competes against the single best hypothesis $c \in \mC$.
\begin{claim}
If $\tp$ is a $(\ell,\mC,\eps)$-swap agnostic learner, it is a $(\ell,\mC,\eps)$-agnostic learner.
\end{claim}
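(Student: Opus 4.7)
The proof is a short exchange-of-operations argument: the swap-agnostic bound uses $\min$ inside the expectation over level sets, while the standard agnostic bound uses $\min$ outside. Since $\min\le$ any fixed choice, bounding the inner minimum by a single global competitor $c\in\mC$ yields the standard guarantee.

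Concretely, the plan is to start from the swap agnostic learning inequality
\[
\E[\ell(\y,h(\x))] \le \E_{\bv \sim \mD_h}\left[\min_{c_v \in \mC}\ \E[\ell(\y,c_{v}(\x)) \mid h(\x) = \bv] \right] + \eps,
\]
and upper bound the right-hand side. For any fixed hypothesis $c \in \mC$, picking $c_v = c$ for every $v\in\Im(h)$ is a valid (non-adaptive) adversary strategy, so
\[
\min_{c_v \in \mC}\ \E[\ell(\y,c_{v}(\x)) \mid h(\x) = \bv] \le \E[\ell(\y,c(\x)) \mid h(\x) = \bv].
\]

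Taking expectations over $\bv \sim \mD_h$ and using the tower property (since $\mD|_{\bv}$ for random $\bv \sim \mD_h$ recovers $\mD$, as noted in the preliminaries),
\[
\E_{\bv \sim \mD_h}\!\left[\min_{c_v \in \mC}\ \E[\ell(\y,c_{v}(\x)) \mid h(\x) = \bv]\right] \le \E_{\bv \sim \mD_h}\!\left[\E[\ell(\y,c(\x)) \mid h(\x) = \bv]\right] = \E[\ell(\y,c(\x))].
\]
Since this holds for every $c \in \mC$, we may take the minimum of the right-hand side over $c \in \mC$, and substitute back into the swap bound to conclude
\[
\E[\ell(\y,h(\x))] \le \min_{c \in \mC} \E[\ell(\y,c(\x))] + \eps,
\]
which is exactly the $(\ell,\mC,\eps)$-agnostic learning guarantee.

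There is no real obstacle here: the claim is essentially the observation that $\E[\min(\cdot)] \le \min \E[\cdot]$ specialized to this setting. The only thing worth being careful about is ensuring that the constant-adversary strategy $c_v \equiv c$ is indeed admissible in the definition of swap agnostic learning (it is, since the adversary is free to pick any $c_v\in\mC$ on each level set), and that the tower property applies to recombine the conditional expectations into the unconditional one.
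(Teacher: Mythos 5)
Your proposal is correct and matches the argument the paper treats as immediate: restricting the adversary to the constant assignment $c_v \equiv c$, applying the tower property to recover $\E[\ell(\y,c(\x))]$, and then minimizing over $c\in\mC$. This is exactly the quantifier-exchange observation the paper relies on, so no further comment is needed.
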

Indeed, swap agnostic learning seems to be a much more stringent condition.
Provided the class $\mC$ contains all constant functions, the adversary can simply imitate the predictor when $h(x) = v$ by choosing $c_v(x) = v$.
For such $\mC$, (\ref{def:eqn:SwapAL}) implies that imitation is the best strategy for the adversary (up to $\eps$).

\paragraph{Notation.}
In this presentation, we use the notation $c_v \in \mC$ to emphasize the fact that the minimization occurs after conditioning on the predicted value $\bv = v$.
Of course, the choice of $c_v$ is still a search over $\mC$, and thus, equivalently could be written as a minimization over $c \in \mC$.
We use these notations interchangeably.

\subsubsection*{Swap Omniprediction}
We recall the original notion of omniprediction proposed by \cite{omni}, which requires a predictor to yield an agnostic learner simultaneously for every loss in a collection $\mL$.
As in \cite{omni, lossoi}, we observe that for any loss function $\ell$ and known distribution on outcomes $\y \sim \Ber(p)$ for $p \in [0,1]$, there is an optimal action $k_\ell(p)$ that minimizes the expected loss $\ell$.
Formally, we define the optimal post-processing of predictions by the function $k_\ell:[0,1] \to \R$, which specifies the action that minimizes the expected loss.\footnote{If there are multiple optima, we break ties arbitrarily.}
\begin{align}
\label{eq:k-ell}
    k_\ell(p) = \argmin_{t \in \R} \E_{\y \sim \Ber(p)}[\ell(\y, t)].
\end{align}
This observation is particularly powerful because the function $k_\ell$ is given by a simple, univariate optimization that can be used as a data-free post-processing procedure on top of a predictor $\tp$.
Concretely, it allows us to define  omniprediction, which requires that the function $h_\ell = k_\ell \circ \tp$ is an $(\ell, \mC, \eps)$-agnostic learner for every $\ell \in \mL$. 
 
\begin{definition}[Omnipredictor, \cite{omni}]
    For a collection of loss functions $\mL$, a hypothesis class $\mC$, and error $\eps > 0$,
    a predictor $\tp: \X \rgta \izo$ is an $(\mL, \mC, \eps)$-omnipredictor if for every $\ell \in \mL$, 
    \begin{gather}
    \E[\ell(\y, k_\ell(\tp(\x))] \le \min_{c \in \mC}\ \E[\ell(\y, c(\x))] + \eps.
    \end{gather}
\end{definition}
We propose a strengthened notion of Swap Omniprediction, where the adversary may choose both the loss $\ell_v \in \mL$ and hypothesis $c_v \in \mC$ based on the prediction $\tp(x) = v$.
For each supported $v \in \Im(\tf)$, we allow an adversarial choice of loss $\ell_v \in \mL$; we call the collection $\set{\ell_v \in \mL : v \in \Im(\tf)}$ an assignment of loss functions.
For each $v$, se compare the omnipredictor's action $k_{\ell_v}(v)$ with the optimal choice of hypothesis $c_v$ tailored to the assigned loss $\ell_v$ over the conditional distribution $\mD|\tp(\x) = v$. 
\begin{definition}[Swap Omnipredictor]
\label{def:sr-omni}
For a collection of loss functions $\mL$, a hypothesis class $\mC$, and error $\eps > 0$,
a predictor $\tp: \X \rgta \izo$ is a $(\mL,\mC,\eps)$-swap omnipredictor if for any assignment of loss functions $\{\ell_v \in \mL\}_{v \in \Im(\tp)}$,
\begin{gather}
    \label{def:eqn:swap:omni}
    \E_{\bv \sim \mD_{\tp}}\Big[ \E[ \ell_{\bv}(\y,k_{\ell_{\bv}}(\bv)) \ \vert \ \tp(\x) = \bv]\Big] \le
    \E_{\bv \sim \mD_{\tp}}\left[\min_{c \in \mC}\ \E[\ell_{\bv}(\y, c(\x)) \ \vert \ \tp(\x) = \bv]\right] + \eps.
\end{gather}
\end{definition}
Swap omniprediction gives the adversary considerable power.
For instance, the special case where we restrict the adversary's choice of losses to be constant, $\ell_v = \ell$, realizes swap agnostic learning for $\ell$.
\begin{claim}
\label{claim:swap-strong}
    If $\tf$ is a $(\mL, \mC, \eps)$-swap omnipredictor, it is a $(\ell, \mC, \eps)$-swap agnostic learner for every $\ell \in \mL$, and hence a $(\mL, \mC, \eps)$-omnipredictor.
\end{claim}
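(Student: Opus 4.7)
The plan is to prove both implications by specializing the swap omniprediction inequality to a constant loss assignment $\ell_v = \ell$ for a fixed $\ell \in \mL$, and then exploiting the fact that the level sets of the post-processed predictor $h_\ell \defeq k_\ell \circ \tp$ form a coarsening of the level sets of $\tp$ itself.

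First I would fix $\ell \in \mL$ and set $\ell_v = \ell$ for every $v \in \Im(\tp)$ in \eqref{def:eqn:swap:omni}. On the left-hand side the inner action becomes $k_\ell(\bv)$, so after the outer expectation we recover $\E[\ell(\y, k_\ell(\tp(\x)))] = \E[\ell(\y, h_\ell(\x))]$, which is exactly the LHS of the swap agnostic learner condition \eqref{def:eqn:SwapAL} for $h = h_\ell$. So it suffices to relate the two right-hand sides, namely to show
\begin{equation*}
\E_{\bv \sim \mD_{\tp}}\!\left[\min_{c \in \mC} \E[\ell(\y, c(\x)) \mid \tp(\x) = \bv]\right] \;\le\; \E_{\bu \sim \mD_{h_\ell}}\!\left[\min_{c \in \mC} \E[\ell(\y, c(\x)) \mid h_\ell(\x) = \bu]\right].
\end{equation*}
This is the heart of the argument, and the main (minor) subtlety. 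The key observation is that every level set $\{h_\ell(\x) = u\}$ is a disjoint union of the level sets $\{\tp(\x) = v\}$ over $v \in k_\ell^{-1}(u)$. Writing the coarse-conditioned expectation as a convex combination of the fine-conditioned expectations and pushing $\min_c$ inside the sum gives the inequality, since a single $c$ that is good on the whole coarse block is in particular a candidate for each fine sub-block.

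Combining these two observations shows that $h_\ell$ satisfies \eqref{def:eqn:SwapAL}, i.e.\ $h_\ell$ is a $(\ell, \mC, \eps)$-swap agnostic learner, establishing the first half of the claim. For the second half, I would simply invoke the earlier claim that swap agnostic learning implies standard agnostic learning: applied to $h_\ell$, it yields $\E[\ell(\y, k_\ell(\tp(\x)))] \le \min_{c \in \mC} \E[\ell(\y, c(\x))] + \eps$, which is precisely the omniprediction guarantee for $\ell$. Since $\ell \in \mL$ was arbitrary, $\tp$ is a $(\mL, \mC, \eps)$-omnipredictor.

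I do not anticipate a serious obstacle: both parts are essentially bookkeeping once one notices the coarsening relationship between the level sets of $\tp$ and of $k_\ell \circ \tp$. The only place to be careful is to ensure that in the swap omnipredictor definition the outer expectation is over $\mD_{\tp}$ (the finer partition), so that restricting to a single loss $\ell$ yields a potentially stronger RHS than the one appearing in the swap agnostic learning condition for $h_\ell$, which is what we need.
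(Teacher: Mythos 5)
Your proposal is correct and matches the paper's (implicit) argument: the paper proves this claim only by the one-line remark that restricting the adversary to the constant assignment $\ell_v = \ell$ recovers swap agnostic learning, and then chains Claim 2.2 to get omniprediction. Your additional step --- comparing the fine partition by level sets of $\tp$ with the coarser one by level sets of $k_\ell \circ \tp$ via the min-of-averages versus average-of-mins inequality --- correctly fills in the one detail the paper glosses over, and the inequality goes in the needed direction.
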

Analogous to the standard notions (where omniprediction implies agnostic learning), swap omniprediction implies swap agnostic learning for every $\ell$; however, swap omniprediction gives an even stronger guarantee since the adversary's loss $\ell_v$ may be chosen in response to the prediction $\tp(\x) = v$.

\subsubsection*{Swap Multicalibration}

Multicalibration was introduced in the work of \cite{hkrr2018} as a notion of algorithmic fairness following \cite{KleinbergMR17}, but has since seen broad application across many learning applications (e.g., \cite{oi,omni,kim2022universal,GuptaJNPR22}).
Informally, multicalibration requires predictions to appear calibrated, not simply overall, but also when we restrict our attention to subgroups within some broad collection $\mC$.
The formulation below appears in \cite{lossoi}. 

\begin{definition}[Multicalibration, \cite{hkrr2018}]
\label{def:s-mc}
For a hypothesis class $\mC$ and $\alpha \ge 0$, 
a predictor $\tp:\X \rgta [0,1]$ is $(\mC, \alpha)$-multicalibrated if
\begin{align}
    \max_{c \in \mC} \E_{\bv \sim \fD} \lt[\ \lt| \E_{\phantom{\mD}} [c(\x)(\y - \bv) \ \vert \ \tp(\x) = \bv] \rt|\ \rt] \leq \alpha. 
\end{align}
\end{definition}
When $c: \X \to \zo$ is Boolean (and has sufficiently large measure), this definition says that conditioned on $c(\x) = 1$, the calibration violation is small, recovering the definition in \cite{hkrr2018}.

Swap Multicalibration strengthens multicalibration, extending the pseudorandomness perspective on multicalibration developed in \cite{oi, omni}.
Swap multicalibration requires that for the typical prediction $\bv \sim \fD$, no hypothesis in $c_v \in \mC$ achieves good correlation with the residual labels $\y - \bv$ over $\mD$ conditioned on $\tp(\x) = \bv$.

\begin{definition}[Swap Multicalibration]
\label{def:u-mc}
For a hypothesis class $\mC$ and $\alpha \ge 0$, 
a predictor $\tp:\X \rgta [0,1]$ is $(\mC, \alpha)$-swap multicalibrated if
\begin{align}
        \E_{\bv \sim \fD} \lt[\ \max_{c \in \mC}\ \lt|\E_{\phantom{\mD}} [c(\x)(\y - \bv) \ \vert \ \tp(\x) = \bv]  \rt|\ \rt] \leq \alpha. 
    \end{align}
\end{definition}

Again, the difference between swap and standard multicalibration is in the order of quantifiers.
The standard definition requires that for every $c \in \mC$, the correlation $|\E_{\Dv}[c(\x)(\y - \bv)]|$ is small in expectation over $\bv\sim \fD$.
Swap multicalibration considers the maximum of $|\E_{\Dv}[c(\x)(\y - \bv)]|$ over all $c \in \mC$ for each fixing of $\bv =v$ and requires this to be small in expectation over $\bv \sim \fD$. 
It follows that swap multicalibration implies standard multicalibration.
\begin{claim}
\label{claim:swap-mc}
    If $\tp$ is $(\mC,\alpha)$-swap multicalibrated, it is $(\mC,\alpha)$-multicalibrated.
\end{claim}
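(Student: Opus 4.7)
The plan is to observe that the only difference between the two definitions is the order of the $\max_{c \in \mC}$ and $\E_{\bv \sim \fD}$ operators, and that swapping these in the direction from ``max outside'' to ``max inside'' is always valid. Concretely, for any function $F(c,v)$, we have the pointwise inequality $F(c^*,v) \le \max_{c \in \mC} F(c,v)$ for each fixed $c^* \in \mC$, and taking expectation over $\bv \sim \fD$ and then the maximum over $c^*$ yields $\max_{c^* \in \mC} \E_{\bv}[F(c^*,\bv)] \le \E_{\bv}[\max_{c \in \mC} F(c,\bv)]$.

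In more detail, first I would set $F(c,v) = \big| \E[c(\x)(\y - v) \mid \tp(\x) = v] \big|$ so that standard $(\mC,\alpha)$-multicalibration is the statement $\max_{c \in \mC} \E_{\bv \sim \fD}[F(c,\bv)] \le \alpha$, while $(\mC,\alpha)$-swap multicalibration is $\E_{\bv \sim \fD}[\max_{c \in \mC} F(c,\bv)] \le \alpha$. Then I would fix an arbitrary $c^* \in \mC$, note that $F(c^*,v) \le \max_{c \in \mC} F(c,v)$ for every $v \in \Im(\tp)$, and take expectation over $\bv \sim \fD$ to conclude $\E_{\bv}[F(c^*,\bv)] \le \E_{\bv}[\max_{c \in \mC} F(c,\bv)] \le \alpha$, where the second inequality is the swap multicalibration hypothesis. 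Finally, since $c^* \in \mC$ was arbitrary, taking the maximum over $c^*$ preserves the inequality and gives exactly the standard multicalibration condition.

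There is essentially no obstacle; the argument is a one-line min-max inequality (a special case of $\sup \E \le \E \sup$). The content of the claim is purely to record the direction of implication between the two quantifier orderings, so the proof is just the direct quantifier-swap observation described above.
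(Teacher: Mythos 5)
Your proof is correct and matches the paper's intended argument exactly: the claim is the elementary inequality $\max_{c}\E_{\bv}[F(c,\bv)] \le \E_{\bv}[\max_{c}F(c,\bv)]$ applied to $F(c,v) = \lt|\E[c(\x)(\y-v)\mid\tp(\x)=v]\rt|$, which is precisely the quantifier-swap observation the paper records without further elaboration.
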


Swap multicalibration allows an auditor to use a different choice of $c_\bv \in \mC$ to audit our predictor, conditioned on each value $\bv$ of the predictions.
In this sense, it is reminiscent of our definitions of swap loss minimization.
But while swap multicalibration is nominally a stronger notion that than standard multicalibration, the complexity of achieving swap multicalibration is essentially the same as multicalibration.
We present two arguments to support this view. 
\begin{itemize}
\item All known algorithms for achieving standard multicalibration \cite{hkrr2018, omni} already give swap multicalibration. They ensure multicalibration by conditioning on each prediction value, and then using a weak agnostic learner for $\mC$ to audit for a violation of the multicalibration condition. This approach naturally yields swap multicalibration. Thus, the same primitive of weak agnostic learning \cite{SBD2} needed for standard multicalibration (and standard agnostic learning) also suffices for swap multicalibration.

\item We show that starting from any $(\mC, \alpha)$-multicalibrated predictor $\tf$, by suitably discretizing its values, we obtain a predictor that is close to $\tf$ and $(\mC, \alpha'$)-swap multicalibrated, with some degradation in the value of $\alpha'$.

\end{itemize}

We formalize both these arguments in Section~\ref{sec:properties}.

\subsubsection*{Swap Loss Outcome Indistinguishability}

The connections between multi-group fairness and omniprediction were extended in \cite{lossoi}.
Building on the original Outcome Indistinguishability framework of \cite{oi}, \cite{lossoi} introduced a related notion they referred to as Loss Outcome Indistinguishability (Loss OI).
Intuitively, Loss OI requires that outcomes sampled from the predictive model $\tp$ are indistinguishable from Nature's outcomes, according to tests defined by the loss class $\mL$ and hypothesis class $\mC$.
They showed that $(\mL, \mC)$-Loss OI implies the standard $(\mL,\mC)$-omniprediction from \cite{omni}, and that it is implied by a multi-group fairness notion called calibrated multiaccuracy (for a class of hypotheses $\mC'$ that may depend on both $\mL$ and $\mC$).
For a fixed hypothesis class, calibrated multiaccuracy implies multiaccuracy but is weaker than multicalibration.
The Loss OI framework can be used to derive a wide range of omniprediction guarantees starting from multigroup fairness notions of varying strength.
In this work, we extend the framework by introducing a swap variant.

Recalling the definition put forth by \cite{oi,lossoi}, Outcome Indistinguishability requires predictors $\tf$ to fool a family $\mU$ of statistical tests $u: \X \times \izo \times \zo$ that take a point $\x \in \X$, a prediction $\tf(\x) \in \izo$ and a label $\y \in \zo$ as their arguments.
Formally, we use $(\x,\y^*)$ to denote a sample from the true joint distribution over $\X \times \zo$.
Then, given a predictor $\tp$, we associate it  with the random variable with $\E[\ty|x] = \tp(x)$, i.e., where $\ty|x \sim \Ber(\tp(x))$.
The variable $\ty$ can be viewed as $\tp$'s simulation of Nature's label $\sy$.\footnote{In this presentation, we abuse notation and let $\mD$ to denote the joint distribution $(\x, \sy, \ty)$, where $\E[\y^*|x] = p^*(x)$ and $\E[\ty|x] = \tp(x)$.
While the joint distribution of $(\y^*, \ty)$ is not important to us, for simplicity we assume they are independent given $\x =x$.}
The goal is distinguish between the scenarios where $\y = \y^*$ is generated by {\em nature} versus where $\y =\ty$ is a simulation of nature according to the predictor $\tf$.

Formally, we require than for every $u \in \mU$, 
\[ \E_{\mD} [u(\x, \tf(\x), \sy)] \approx_\eps \E_{\mD} [u(\x, \tf(\x), \ty)]. \] 
Loss OI specializes this to a specific family of tests arising in the analysis of omnipredictors.
\begin{definition}[Loss OI, \cite{lossoi}]
\label{def:loss-oi}
    For a collection of loss functions $\mL$, hypothesis class $\mC$, and $\eps \ge 0$, define the family of tests $\mU(\mL, \mC) = \{u_{\ell,c}\}_{\ell \in \mL, c \in \mC}$ where
    \begin{align} 
    \label{eq:ulc}
        u_{\ell, c}(x, v , y) = \ell(y, k_\ell(v)) - \ell(y, c(x)). 
    \end{align}
    A predictor $\tf:\X \to \izo$ is $(\mL, \mC, \eps)$-loss OI if for every $u \in \mU(\mL, \mC)$, it holds that
    \begin{align} 
    \label{eq:loss-oi} 
        \lt| \E_{(\x, \y^*) \sim \mD} [u(\x, \tf(\x), \y^*)] - \E_{(\x, \ty) \sim \mD} [u(\x, \tf(\x), \ty)] \rt| \leq \eps.
    \end{align}
\end{definition}
By design, Loss OI implies omniprediction.
\begin{claim}[Proposition 4.5, \cite{lossoi}]
    If the predictor $\tf$ is $(\mL, \mC, \eps)$-loss OI, then it is an $(\mL, \mC, \eps)$-omnipredictor.
\end{claim}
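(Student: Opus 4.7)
The plan is to unpack the definition of Loss OI when applied to the specific test $u_{\ell,c}$ for a loss $\ell \in \mL$ and the hypothesis $c^\star \in \mC$ that minimizes $\E[\ell(\y^\star, c(\x))]$, and then read off omniprediction essentially by inspection.

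First I would observe that the ``simulated'' side of Loss OI is automatically non-positive by the very definition of $k_\ell$. Concretely, conditioning on $\x = x$, the random label $\ty$ is distributed as $\Ber(\tp(x))$, and by (\ref{eq:k-ell}) the action $k_\ell(\tp(x))$ minimizes $\E_{\y \sim \Ber(\tp(x))}[\ell(\y, t)]$ over all $t \in \R$. In particular, taking $t = c(x)$ for any $c \in \mC$ gives
\[
\E[\ell(\ty, k_\ell(\tp(\x))) \mid \x = x] \le \E[\ell(\ty, c(x)) \mid \x = x],
\]
and averaging over $\x$ shows $\E_{(\x,\ty) \sim \mD}[u_{\ell, c}(\x, \tp(\x), \ty)] \le 0$ for every $c \in \mC$.

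Next I would apply Loss OI with the test $u_{\ell, c^\star}$. Since the simulated expectation is at most $0$, (\ref{eq:loss-oi}) forces the ``Nature'' side to be at most $\eps$:
\[
\E_{(\x,\y^\star) \sim \mD}\bigl[\ell(\y^\star, k_\ell(\tp(\x))) - \ell(\y^\star, c^\star(\x))\bigr] \le \eps.
\]
Rearranging and recalling the choice of $c^\star$ gives
\[
\E[\ell(\y^\star, k_\ell(\tp(\x)))] \le \E[\ell(\y^\star, c^\star(\x))] + \eps = \min_{c \in \mC} \E[\ell(\y^\star, c(\x))] + \eps,
\]
which is exactly the $(\mL, \mC, \eps)$-omniprediction guarantee for $\ell$. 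Since $\ell \in \mL$ was arbitrary, we are done.

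There is no real obstacle here: the proof is just a two-line chase through the definitions, with the only subtlety being to pair the $\ty$-side of Loss OI with the optimality of $k_\ell$ pointwise in $x$ so that one side of the indistinguishability inequality vanishes (in a favorable direction). The slight care needed is to verify that $k_\ell(\tp(x))$ is a valid choice of $t$ for each $x$ in the pointwise optimality step, which is immediate from the definition of $k_\ell$.
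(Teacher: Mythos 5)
Your proof is correct and follows exactly the argument the paper sketches (and uses again in the proof of Lemma~\ref{lem:swap-strong2}): the simulated side of the test $u_{\ell,c}$ is pointwise nonpositive by the optimality of $k_\ell$ under $\Ber(\tp(x))$, so indistinguishability bounds the Nature side by $\eps$, and instantiating $c = c^\star$ yields omniprediction. No differences worth noting.
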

Indeed, if the expected value of $u$ is nonpositive for all $u \in \mU(\mL,\mC)$, then $\tp$ must achieve loss competitive with all $c \in \mC$.
The argument leverages the fact that $u$ must be nonpositive when $\ty \sim \Ber(\tp(\x))$---after all, in this world $\tp$ is the Bayes optimal.
By indistinguishability, $\tp$ must also be optimal in the world where outcomes are drawn as $\sy$.
The converse, however, is not always true.

Here, we introduce Swap Loss OI.
In this swap variant, we allow the choice of distinguisher to depend on the predicted value.
\begin{definition}[Swap Loss OI]
For a collection of loss functions $\mL$, hypothesis class $\mC$ and $\eps \ge 0$, for an assignment of loss functions $\{\ell_v \in \mL\}_{v \in \Im(\tf)}$ and hypotheses $\{ h_v \in \mH\}_{v \in \Im(\tf)}$, denote $u_v = u_{\ell_v, c_v} \in \mU(\mL, \mC)$.
A predictor $\tf$ is $(\mL, \mC, \alpha)$-swap loss OI if for all such assignments,
\begin{align*}
    \E_{\bv \sim \fD} \lt| \E_{\Dv}[u_\bv(\x, \bv, \y^*)- u_\bv(\x, \bv, \ty)] \rt| \leq \alpha.
\end{align*}
\end{definition}
Swap Loss OI naturally strenthens the standard variant of Loss OI:
in Loss OI, $\ell, c$ are fixed and cannot depend on $\bv$.
In fact, Swap Loss OI also strengthens swap omniprediction.
\begin{lemma}
\label{lem:swap-strong2}
If the predictor $\tf$ satisfies $(\mL, \mC, \alpha)$-swap loss OI, then
\begin{itemize}
\item it is $(\mL, \mC, \alpha)$-loss OI.
\item it is an $(\mL, \mC, \alpha)$-swap omnipredictor.
\end{itemize}
\end{lemma}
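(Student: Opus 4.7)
\medskip\noindent
\textbf{Proof plan.}
The plan is to derive each implication directly from the definition of swap Loss OI by a judicious choice of the assignment $\{\ell_v, c_v\}$ combined with Jensen's inequality, and by exploiting the optimality of the post-processing $k_\ell$ against Bernoulli labels.

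For the first item (swap Loss OI implies Loss OI), I would fix an arbitrary $\ell \in \mL$ and $c \in \mC$, and plug in the constant assignment $\ell_v = \ell$, $c_v = c$ for every $v \in \Im(\tf)$. The corresponding distinguisher $u_v = u_{\ell,c}$ no longer depends on $v$. Since conditioning on $\tf(\x) = \bv$ fixes the second argument of $u_{\ell,c}(\x, \tf(\x), \cdot)$ to $\bv$, the quantity inside the standard Loss OI guarantee becomes
\begin{align*}
\left| \E_{\mD}[u_{\ell, c}(\x, \tf(\x), \y^*) - u_{\ell, c}(\x, \tf(\x), \ty)] \right|
 = \left| \E_{\bv \sim \fD} \E_{\Dv}[u_{\ell,c}(\x, \bv, \y^*) - u_{\ell,c}(\x, \bv, \ty)] \right|.
\end{align*}
Pulling the absolute value inside the outer expectation (Jensen/triangle inequality) and applying swap Loss OI with the constant assignment yields the bound of $\alpha$.

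For the second item (swap Loss OI implies swap omniprediction), I would start from an arbitrary loss assignment $\{\ell_v \in \mL\}_{v \in \Im(\tf)}$, and for each $v$ choose $c_v^* \in \argmin_{c \in \mC} \E_{\Dv}[\ell_v(\y^*, c(\x))]$ attaining the right-hand side of (\ref{def:eqn:swap:omni}). The key observation is that since $\ty \mid \x \sim \Ber(\tf(\x))$, conditioning on $\tf(\x) = \bv$ makes $\ty$ a $\Ber(\bv)$ random variable, so the defining optimality of $k_{\ell_v}$ gives
\begin{align*}
\E_{\Dv}[\ell_v(\ty, k_{\ell_v}(\bv))] = \min_{t \in \R} \E_{\ty \sim \Ber(\bv)}[\ell_v(\ty, t)] \le \E_{\Dv}[\ell_v(\ty, c_v^*(\x))],
\end{align*}
i.e.\ $\E_{\Dv}[u_{\ell_v, c_v^*}(\x, \bv, \ty)] \le 0$ for every $v$. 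Averaging over $\bv \sim \fD$ and adding the swap Loss OI bound for the assignment $\{(\ell_v, c_v^*)\}$ then gives
\begin{align*}
\E_{\bv \sim \fD} \E_{\Dv}[u_{\ell_v, c_v^*}(\x, \bv, \y^*)]
 \le \E_{\bv \sim \fD} \E_{\Dv}[u_{\ell_v, c_v^*}(\x, \bv, \ty)] + \E_{\bv \sim \fD}\bigl|\E_{\Dv}[u_{\ell_v, c_v^*}(\x, \bv, \y^*) - u_{\ell_v, c_v^*}(\x, \bv, \ty)]\bigr| \le 0 + \alpha,
\end{align*}
which rearranges to exactly (\ref{def:eqn:swap:omni}).

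Neither step is truly hard once the setup is right; the main conceptual point to get correctly is the second, where one must recognize that the joint distribution of $(\x,\ty)$ conditioned on $\tf(\x) = \bv$ has $\ty \sim \Ber(\bv)$ marginally, so that $k_{\ell_v}(\bv)$ genuinely minimizes the conditional loss in the simulated world. This is what reduces ``competing with $c_v^*$ under $\y^*$'' to ``indistinguishability between $\y^*$ and $\ty$,'' letting swap Loss OI absorb the gap.
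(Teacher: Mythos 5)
Your proof is correct and follows essentially the same route as the paper's: the loss OI implication via a constant assignment plus Jensen's inequality, and the swap omniprediction implication via the pointwise optimality of $k_{\ell_v}(v)$ against $\ty\sim\Ber(v)$, which makes $\E_{\mD|_v}[u_{\ell_v,c_v}(\x,v,\ty)]\le 0$ so that swap loss OI transfers the bound to $\y^*$. If anything your write-up is slightly cleaner about which specialization yields which part of the lemma.
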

We prove this lemma in Section~\ref{sec:lossOI}.

\subsection{Characterizing Swap Learning}
\label{sec:overview}

With the definitions of swap learning notions in place, we can state our main results.

\paragraph{Equivalence of swap notions.}
Our first main result (Theorem \ref{thm:main}) shows the following equivalence between swap notions of multicalibration, omniprediction and agnostic learning.
Specifically, the theorem shows that multicalibration is intimately related to omniprediction for the set of convex loss functions $\mLc$.\footnote{For techincal reasons, the loss class $\mLc$ consists of all convex loss functions that also satisfy a few ``niceness'' properties, stated formally in Definition~\ref{def:nice-loss}).}
\begin{result}[Informal statement of Theorem \ref{thm:main}]
\label{result:main}
 For any hypothesis class $\mC$ and predictor $\tp: \X \to \zo$, the following notions are equivalent:
 \begin{itemize}
 \item $\tp$ is $\mC$-swap multicalibrated.
 \item $\tp$ is an $(\mLc, \mC)$-swap omnipredictor.
 \item $\tp$ is an $(\ell_2, \mC)$-swap agnostic learner.
 \end{itemize}
\end{result}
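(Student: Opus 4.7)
The plan is to prove the three equivalences cyclically: (i) swap multicalibration implies swap omniprediction for $\mLc$; (ii) swap omniprediction for $\mLc$ implies swap agnostic learning for $\ell_2$; and (iii) swap agnostic learning for $\ell_2$ implies swap multicalibration. Direction (ii) is immediate from Claim~\ref{claim:swap-strong} applied to $\ell_2 \in \mLc$.

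For direction (i), I would extend the multicalibration-implies-omniprediction argument of \cite{omni} to the swap regime. Fix an arbitrary assignment $\{\ell_v \in \mLc\}$, $\{c_v \in \mC\}$, and condition on $\tp(\x) = v$. Because $\y \in \zo$, the partial $\partial_a \ell_v(y,a)$ is affine in $y$, so write it as $y\,A_v(a) + B_v(a)$; the first-order condition defining $k_{\ell_v}(v)$ under $\y \sim \Ber(v)$ forces $B_v(k_{\ell_v}(v)) = -v\,A_v(k_{\ell_v}(v))$. Substituting into the convexity inequality $\ell_v(\y, c_v(\x)) - \ell_v(\y, k_{\ell_v}(v)) \ge \partial_a \ell_v(\y, k_{\ell_v}(v))(c_v(\x) - k_{\ell_v}(v))$ reduces the per-level-set regret to a constant multiple of $\E[(\y - v)(c_v(\x) - k_{\ell_v}(v)) \mid \tp(\x) = v]$. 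Averaging over $\bv \sim \fD$, this is controlled by the swap multicalibration hypothesis (together with a calibration bound obtained by applying swap MC to constant hypotheses in $\mC$), with the Lipschitz/boundedness constants absorbed into the niceness conditions of $\mLc$.

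For direction (iii), the substantive new content, I would start from the quadratic identity
\[
  (\y - v)^2 - (\y - c(\x))^2 \;=\; 2(\y - v)(c(\x) - v) - (c(\x) - v)^2.
\]
Swap agnostic learning for $\ell_2$ then rewrites, for any assignment $\{c_v \in \mC\}$, as
\[
  \E_{\bv \sim \fD}\!\left[\,2\,\E\bigl[(\y - \bv)(c_\bv(\x) - \bv) \mid \tp(\x) = \bv\bigr] - \E\bigl[(c_\bv(\x) - \bv)^2 \mid \tp(\x) = \bv\bigr]\right] \le \eps.
\]
To extract swap multicalibration, for each $v$ pick $c^*_v \in \mC$ witnessing $\max_{c} \lvert \E[(c(\x) - v)(\y - v) \mid \tp(\x) = v]\rvert$, and apply the swap AL inequality to the rescaled hypothesis $c_{v,\lambda}(\x) = v + \lambda(c^*_v(\x) - v)$, optimizing $\lambda \in [0,1]$. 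This yields $\E[(\y - v)(c^*_v(\x) - v) \mid \tp(\x) = v]^2 \le \eps_v \cdot \E[(c^*_v(\x) - v)^2 \mid \tp(\x) = v]$; boundedness of $c^*_v$ and $v$, followed by Jensen's inequality on the outer average over $\bv$, translates into swap multicalibration with error $O(\sqrt{\eps})$. The calibration term needed to pass between $\E[(c - v)(\y - v)]$ and $\E[c(\y - v)]$ is obtained by plugging constant hypotheses into the same AL inequality, giving $\E_\bv[(\mu_\bv - \bv)^2] \le \eps$ for $\mu_\bv = \E[\y \mid \tp(\x) = \bv]$.

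The main obstacle in direction (iii) is that the rescaling step requires $c_{v,\lambda} \in \mC$, a closure property a bare hypothesis class may not possess. I would resolve this by working with the natural closure of $\mC$ under convex combinations with constants in $[0,1]$; the paper's companion claim in Section~\ref{sec:properties}, that swap multicalibration is essentially equivalent to standard multicalibration, supports the view that such closures do not change the notion in any essential way. Direction (i) is technically routine modulo carrying the niceness constants of $\mLc$ through the final error bound.
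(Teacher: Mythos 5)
Your overall architecture matches the paper's: the cycle (1)$\Rightarrow$(2)$\Rightarrow$(3)$\Rightarrow$(1), with (2)$\Rightarrow$(3) trivial since $\ell_2 \in \mLc$. Your direction (iii) is essentially the paper's argument for $(3)\Rightarrow(1)$: the paper perturbs to $h'(x) = v + \alpha(v)c_v(x)$ (fixing the step size at the correlation $\alpha(v)$ rather than optimizing $\lambda$), extracts a squared-loss gain of $\alpha(v)^2$ from the same quadratic identity, and applies Jensen to the outer expectation, incurring the same quadratic loss in parameters. Your closure worry is exactly why the formal theorem states condition (3) with $\Lin(\mC,2)$ in place of $\mC$: the perturbed hypothesis is a $2$-sparse affine combination of $c_v$ and the constant $1$, so no further closure assumption is needed.

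Direction (i) is where you genuinely diverge. The paper does not use the subgradient inequality at $k_{\ell_v}(v)$; instead it uses Jensen's inequality to replace $h(\x)$ by the constant $\mu(h{:}v,y) = \E[h(\x)\mid \tp(\x)=v, \y=y]$, a covariance bound to collapse this to $\mu(h{:}v)$, projection onto $I_\ell$, and then compares $k_{\ell}(v)$ against $\Pi_\ell(\mu(h{:}v))$ by exploiting that $\ell(p,t)$ is $B$-Lipschitz in $p$ and that $|p^*_v - v|\le \alpha(v)$. Your first-order/convexity route is cleaner and more in the spirit of the GLM arguments of the loss-OI paper, but it has one quantitative gap: after substituting the stationarity condition, the regret bound contains the cross term $A_v(k_{\ell_v}(v))\cdot k_{\ell_v}(v)\cdot\E[\y - v\mid\tp(\x)=v]$, which requires a bound on $|k_{\ell_v}(v)|$ itself. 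The paper's niceness conditions (projection optimality, $1$-Lipschitzness on $I_\ell$, and $|\partial\ell|\le B$ on $I_\ell$) bound $|A_v|$ but do not bound the interval $I_\ell$ or the magnitude of the optimal action, which is precisely what the paper's Lipschitz-in-$p$ detour avoids needing. For the informal statement and for losses with bounded $I_\ell$ this is immaterial, but to recover the paper's $O((W+B)\alpha)$ dependence you would need either an added boundedness hypothesis on $I_\ell$ or to route the calibration error through the first argument of $\ell$ as the paper does. The paper's approach buys uniform constants over all $B$-nice losses; yours buys a shorter proof when actions are bounded.
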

In other words, once we move to the swap variants of these notions, minimizing the squared error ($\ell_2$) suffices to minimize \emph{every} convex loss function.
By the equivalence to multicalibration, we immediately obtain an efficient algorithm for achieving swap agnostic learning for any convex loss, using only an oracle for standard agnostic learning (see Section~\ref{sec:properties}).

\paragraph{Swap Loss OI is multicalibration over augmented class.}
Our second main result (Theorem~\ref{thm:swap-eq}) gives a characterization of Swap Loss OI in terms of swap multicalibration.
As we discussed, swap loss OI is the strongest notion of omniprediction to date: all existing notions of omniprediction can be implemented via swap loss OI.
We show that this indistinguishability notion can actually be defined as a variant of multicalibration.
In order to state the result, we recall the notion of discrete derivatives of loss functions used in \cite{lossoi}.
\begin{definition}
    For a loss function $\ell: \zo \times \R \to \R$, define $\partial \ell: \R \to \R$ as
    \[ \Pell (t) = \ell(1, t) - \ell(0, t).\]
    For a family of loss functions $\mL$ and  a hypothesis class $\mC$, let the classes of discrete derivatives be given as $\partial \mL = \{\Pell\}_{\ell \in \mL}$ and $\partial \mL \circ \mC = \{ \Pell \circ c\}_{\ell \in \mL, c\in \mC}$.  
\end{definition}
We prove the equivalence of $(\mL, \mC)$-swap loss OI (for loss classes $\mL$ that satisfy the same niceness property as above and contain the squared loss) and $\partial \mL \circ \mC$-swap multicalibration.

\begin{result}[Informal statement of Theorem \ref{thm:swap-eq}]
Let $\mL$ be a family of ``nice'' loss functions containing the squared loss $\ell_2$.  For any hypothesis class $\mC$, the following notions are equivalent:
\begin{itemize}
    \item $(\mL, \mC)$-swap loss OI.
    \item $(\partial \mL \circ \mC)$-swap multicalibration.
\end{itemize}
\end{result}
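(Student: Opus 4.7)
The plan is to unwind the swap loss OI condition via a clean identity that decomposes the distinguishing test $u_{\ell,c}$ into a pure calibration piece and a piece that is exactly the swap multicalibration correlation against $\partial\mL \circ \mC$. Once the identity is in hand, both directions follow by the triangle inequality, with the ``niceness'' of $\mL$ and the inclusion $\ell_2 \in \mL$ doing the work of bridging the calibration term.

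\textbf{Step 1: The key identity.} For any $y \in \zo$ and any loss $\ell$, one has $\ell(y, t) = \ell(0, t) + y \cdot \Pell(t)$. Applying this to $\sy$ and $\ty$ separately and subtracting,
\begin{align*}
u_{\ell, c}(x, v, \sy) - u_{\ell, c}(x, v, \ty) = (\sy - \ty) \cdot \bigl[\Pell(k_\ell(v)) - \Pell(c(x))\bigr].
\end{align*}
Conditioned on $\tp(\x) = \bv$, the simulated label satisfies $\ty \sim \Ber(\bv)$, while $\sy$ is drawn by nature, so $\E[\sy - \ty \mid \x, \tp(\x) = \bv] = p^*(\x) - \bv$. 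Taking expectation over $\Dv$ therefore yields
\begin{align*}
\E_{\Dv}\bigl[u_{\ell,c}(\x, \bv, \sy) - u_{\ell,c}(\x, \bv, \ty)\bigr] = \Pell(k_\ell(\bv)) \cdot \E_{\Dv}[\y - \bv] - \E_{\Dv}\bigl[\Pell(c(\x))(\y - \bv)\bigr].
\end{align*}

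\textbf{Step 2: Swap MC $\Rightarrow$ Swap loss OI.} Given any assignment $\{\ell_v, c_v\}$, take the absolute value of the display above and apply the triangle inequality inside the outer expectation over $\bv$. The second term is precisely the swap multicalibration correlation against $\Pell_v \circ c_v \in \partial \mL \circ \mC$, so it is bounded by $\alpha$ on average. For the first term, since $\ell_2 \in \mL$, the affine function $\Pell_2 \circ c = 1 - 2c(\cdot)$ lies in $\partial \mL \circ \mC$ for each $c \in \mC$; applying swap MC to constant choices of $c$ (so that $\Pell_2 \circ c$ becomes a nonzero constant) yields the calibration bound $\E_{\bv \sim \fD}\bigl|\E_{\Dv}[\y - \bv]\bigr| \leq O(\alpha)$, which, together with a uniform bound on $\Pell_v(k_{\ell_v}(\bv))$ furnished by the niceness hypotheses, controls the first term.

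\textbf{Step 3: Swap loss OI $\Rightarrow$ Swap MC.} In the reverse direction, the identity turns swap loss OI into the bound
\begin{align*}
\E_{\bv \sim \fD}\bigl|\Pell_\bv(k_{\ell_\bv}(\bv)) \E_{\Dv}[\y - \bv] - \E_{\Dv}[\Pell_\bv(c_\bv(\x))(\y - \bv)]\bigr| \leq \alpha
\end{align*}
for every assignment. Plugging in $\ell_v = \ell_2$ together with the same constant $c_v$'s as in Step 2 extracts calibration $\E_{\bv}\bigl|\E_{\Dv}[\y - \bv]\bigr| \leq O(\alpha)$ (the two test-specific pieces combine to give this). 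For an arbitrary assignment, calibration plus the boundedness of $\Pell_v(k_{\ell_v}(\bv))$ tames the first term, and a second application of the triangle inequality leaves $\E_{\bv}\bigl|\E_{\Dv}[\Pell_v(c_v(\x))(\y - \bv)]\bigr| \leq O(\alpha)$. Since every $h \in \partial\mL \circ \mC$ has the form $\Pell_v \circ c_v$, taking pointwise maximizers of $|\E_{\Dv}[\Pell_v(c_v(\x))(\y - \bv)]|$ at each $\bv$ delivers $(\partial\mL \circ \mC)$-swap multicalibration.

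\textbf{Main obstacle.} The delicate part will be cleanly disentangling the two terms produced by the key identity in Step 1: (i) ensuring that $\Pell_v(k_{\ell_v}(\bv))$ is uniformly bounded, which is exactly where the ``niceness'' of $\mL$ enters; and (ii) realizing the calibration condition itself as an instance of swap multicalibration against $\partial\mL \circ \mC$, which relies on the closure property that $\ell_2 \in \mL$ together with the presence of (or approximations to) the constants $0, 1$ in $\mC$. Once these closure and boundedness conditions are carefully pinned down, the rest is triangle-inequality bookkeeping around the identity in Step 1.
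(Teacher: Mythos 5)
Your proposal is correct and follows essentially the same route as the paper: the identity $\ell(y,t)-\ell(y',t)=(y-y')\,\partial\ell(t)$ splits the distinguisher into a calibration term (weighted by $\partial\ell_v(k_{\ell_v}(v))$, bounded by $B$ via niceness) plus exactly the $\partial\mL\circ\mC$-swap-multicalibration correlation, with calibration extracted in both directions by instantiating $\ell_2$ with constant $c_v$'s. The paper packages the two pieces as separate corollaries (swap analogues of hypothesis OI and decision OI) and a standalone lemma that $\ell_2$-swap loss OI implies calibration, but the decomposition, the use of $\ell_2\in\mL$ and constants in $\mC$, and the triangle-inequality bookkeeping are the same.
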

Interestingly, the same class $\partial L \circ \mC$ appears in the work of \cite{lossoi}, where they show that $(\mL, \mC)$-loss OI is implied by calibration and $\partial L \circ \mC$-multiaccuracy.
We show that the stronger notion of $\partial L \circ \mC$-multicalibration in fact gives an equivalence to $(\mL, \mC)$-swap loss OI.
Along the way, we prove that swap loss OI for just the squared loss implies calibration.

\begin{figure}[t]
    \centering
    \includegraphics[width=0.75\textwidth]{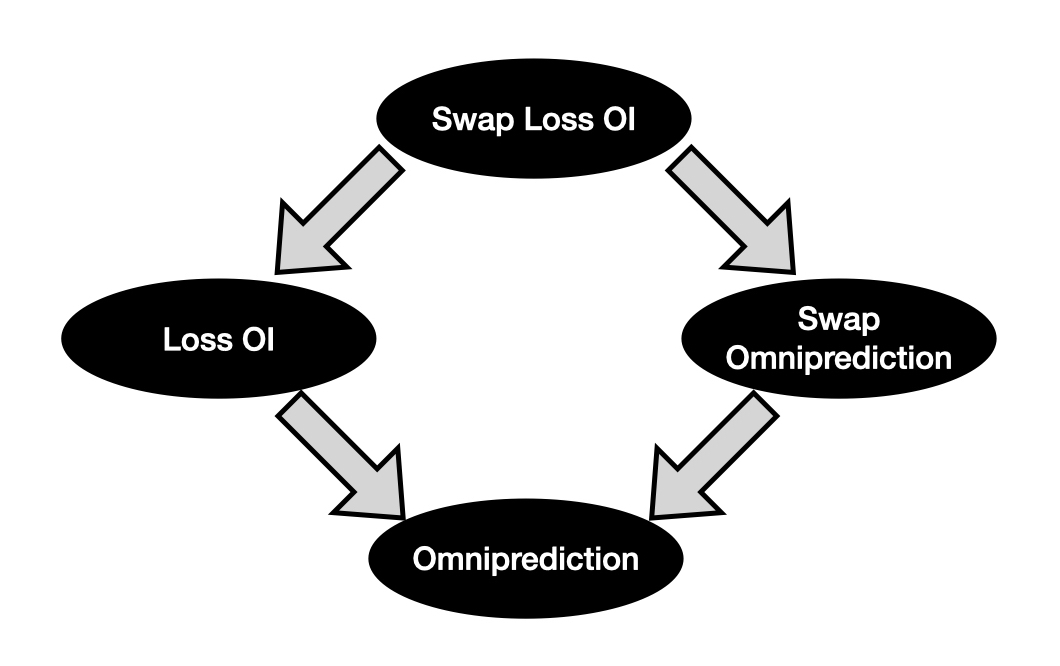}
    \caption{Relation between notions of omniprediction}
     \label{fig:rel}
\end{figure}

\paragraph{Separating notions of omniprediction.}
Rounding out our main results, we paint a complete picture of the relationships between various omniprediction notions by establishing implications and separations between them.
The implications are illustrated in Figure \ref{fig:rel}.
That loss OI implies omniprediction (without swap) was proved in \cite{lossoi}; the remaining implications are proved in Section \ref{sec:example}.
We complement these implications with the following separation result.
\begin{result} 
$(\mL, \mC)$-Loss OI and $(\mL, \mC)$-swap omniprediction are incomparable.
\end{result}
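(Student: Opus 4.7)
The plan is to establish incomparability via two explicit small witnesses, one for each direction.

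For the direction $(\mL, \mC)$-Loss OI $\not\Rightarrow$ $(\mL, \mC)$-swap omniprediction, I would exploit the gap between multiaccuracy and multicalibration. Let $\X = \{x_1, x_2, x_3, x_4\}$ be uniform, define $\tp$ with two level sets $\tp(x_1) = \tp(x_2) = 0.4$ and $\tp(x_3) = \tp(x_4) = 0.6$, and set $p^*(x_1) = 0.6$, $p^*(x_2) = 0.2$, $p^*(x_3) = 0.8$, $p^*(x_4) = 0.4$. Let $\mC = \{h_1, h_2\}$ with $h_1(x) = \mathbbm{1}\{x \in \{x_1, x_4\}\}$ and $h_2 = 1 - h_1$. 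Direct computation gives that $\tp$ is calibrated on each level set and that $\E[h_i(\sy - \tp)] = 0$, i.e., $\tp$ is $\mC$-multiaccurate. Since the $h_i$ are Boolean, every element of $\partial \mLc \circ \mC$ depends only on $h_1$ and so lies in $\mathrm{span}(\mathbf{1}, h_1, h_2)$; multiaccuracy for this span combined with calibration then gives $(\mLc, \mC)$-Loss OI via the implication of \cite{lossoi}. However, $\E[h_1 (\sy - \bv) \mid \tp(\x) = \bv]$ equals $0.1$ at $\bv = 0.4$ and $-0.1$ at $\bv = 0.6$, so $\tp$ is not $\mC$-swap multicalibrated. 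By Theorem~\ref{thm:main}, $\tp$ therefore fails to be an $(\mLc, \mC)$-swap omnipredictor; an explicit witness is the logistic loss, where per-level-set evaluation shows $\tp$'s loss $\approx 0.673$ while the best adaptive choice in $\mC$ achieves $\approx 0.653$.

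For the opposite direction, I would construct a degenerate but miscalibrated predictor for which swap multicalibration is trivial yet Loss OI nonetheless detects the miscalibration. Take $\X = \{x_0\}$ (singleton), $p^*(x_0) = 0.3$, $\tp(x_0) = 0.5$, and $\mC = \{c_0\}$ with $c_0 \equiv 0$. Then $c_0 (\y - \bv) \equiv 0$ makes $\tp$ trivially $(\mC, 0)$-swap multicalibrated, and Theorem~\ref{thm:main} promotes this to $(\mLc, \mC)$-swap omniprediction. Yet the squared-loss distinguisher $u_{\ell_2, c_0}(x, 0.5, y) = (y-0.5)^2 - y^2$ has expectation $-0.05$ under $\sy \sim \Ber(0.3)$ and $-0.25$ under $\ty \sim \Ber(0.5)$, giving a distinguishing gap of $0.2$. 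Hence $\tp$ fails $(\{\ell_2\}, \mC)$-Loss OI, and a fortiori $(\mLc, \mC)$-Loss OI.

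The principal technical care is in the first construction: one must show Loss OI against the entire rich class $\mLc$ rather than only a single loss. The Boolean structure of $\mC$ is what resolves this, keeping $\partial \mLc \circ \mC$ inside the three-dimensional span $\mathrm{span}(\mathbf{1}, h_1, h_2)$ so that multiaccuracy for this span reduces to three explicit linear identities verified by direct calculation. The Loss OI implication of \cite{lossoi} then applies immediately, and the swap omniprediction failure comes from Theorem~\ref{thm:main} applied to the direct multicalibration violation.
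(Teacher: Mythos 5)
Your first direction (Loss OI $\not\Rightarrow$ swap omniprediction) is correct and takes a genuinely different, somewhat more elementary route than the paper. The paper separates these notions with a Fourier example on $\{\pm 1\}^2$ where $\tp$ is the projection of $p^*$ onto $\Lin(\mC)$ (hence calibrated and multiaccurate, hence loss OI for GLM losses), but carries a multicalibration violation of $1/8$ on each level set, yielding a $1/64$ gap for $\ell_2$ under the swap quantifier. Your four-point construction implements the same idea --- calibration plus multiaccuracy without multicalibration --- and your numbers check out: $p^*-\tp$ is orthogonal to $\mathrm{span}(\mathbf{1},h_1,h_2)$, the Booleanness of $\mC$ keeps $\partial \mLc \circ \mC$ inside that span so the loss OI error is exactly $0$, and for the logistic loss the adversary's per-level-set minima average to about $0.653$ (it loses on the $v=0.4$ level set but wins on $v=0.6$) against $\tp$'s $0.673$. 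One caution: Theorem~\ref{thm:main} converts a swap-multicalibration failure into a swap agnostic learning failure over $\Lin(\mC,2)$, not over $\mC$ itself, so your explicit logistic witness is doing necessary work there, not merely illustrating.

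The second direction contains a genuine gap. The class $\mC=\{c_0\equiv 0\}$ violates the paper's standing assumption that $\mC$ contains the constants $0$ and $1$, and this is not cosmetic: the implication from swap multicalibration to swap omniprediction in Theorem~\ref{thm:main} uses $1\in\mC$ through the bound $|p^*_v-v|\le\alpha(v)$ of Claim~\ref{lem:linearity}, which is what makes $k_\ell(v)$ a near-optimal action under $\mD|_v$. Without that, your claimed promotion to $(\mLc,\mC)$-swap omniprediction is false: for the absolute loss $\ell(y,t)=|y-t|$, which is convex and $1$-nice, $k_\ell(1/2)$ is any point of $[0,1]$ (say $1/2$), giving expected loss $1/2$ under $\Ber(0.3)$, while $c_0=0$ achieves $0.3$; so your $\tp$ is not even a standard $(\{\ell\},\mC,\eps)$-omnipredictor for $\eps<0.2$. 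Restoring $1\in\mC$ destroys the example in the other way, since then $|\E[1\cdot(\y-0.5)]|=0.2$ means $\tp$ is not swap multicalibrated. More fundamentally, under the paper's conventions swap multicalibration already implies calibration, and calibration is exactly what the decision-OI half of loss OI requires, so a calibration failure can never be the source of this separation. The true source is the hypothesis-OI half: loss OI for non-GLM losses needs multiaccuracy for the enlarged class $\partial\mL\circ\mC$, which $\mC$-swap multicalibration does not supply. This is what the paper exploits in Lemma~\ref{lem:sep1}: $\tp\equiv 1/2$ with $p^*(x)=(1+x_1x_2x_3)/2$ on the uniform cube and $\mC=\{1,x_1,x_2,x_3\}$ is perfectly swap multicalibrated (hence a swap omnipredictor for all nice convex losses), yet fails loss OI for $\ell_4$ because $\partial\ell_4\circ c$ contains a cubic term correlating with the residual. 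You need an example of that flavor to close this direction.
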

The separation result is proved using the $\ell_2$ and $\ell_4$ loss, both of which are convex.
The theorem statement follows as a corollary of the lemmas proved in Section~\ref{sec:example}.
Our results show that Figure~\ref{fig:rel} cannot be simplified further for generic choices of $\mL$ and $\mC$ (although other implications could hold for specific loss and hypothesis classes).
In all, we conclude the following statements:
\begin{itemize}
\item Swap loss OI is more powerful than loss OI, and swap omniprediction is more powerful than standard omniprediction. 
\item Swap loss OI is the most powerful of the four notions, whereas standard omniprediction is the least powerful. 
\end{itemize}

\paragraph{Summary of contributions.}
Our work adds a new and significant connection in the theory of agnostic learning and multi-group fairness.
In particular, by borrowing the notion of swapping from online learning, we uncover surprisingly-powerful, but feasible solution concepts for supervised learning.
Namely, we can achieve the swap agnostic learning notions assuming only sample access to the data source and a weak agnostic learner for the collection $\mC$.
The equivalence we draw between swap agnostic learning for the squared error and multicalibration with swap omniprediction \emph{for all convex losses} highlights the power of simple goals like squared error minimization and calibration.

Our work gives a complete characterization of the notions we study at the ``upper end'' of strength (i.e., swap variants).
A fascinating outstanding question to address in future research is whether there is a similar characterization of \emph{standard} omniprediction in terms of multi-group fairness notions.

\subsection{Discussion and Related Work}
\label{sec:intro-sep}

Theorem~\ref{result:main} shows the first {\em reverse} connection between an omniprediction notion and multi-group fairness.
It tells us that multicalibration, while it does not explicitly mention any loss function, actually \emph{requires} loss minimization in a deep sense. 
Note that convexity plays a key role in the statement; indeed, an example from \cite{omni} shows that $\mC$-swap multicalibration does not give (even standard) omniprediction for all non-convex losses.

The equivalence to swap agnostic $\ell_2$-loss minimization shows that the problem of finding a multicalibrated predictor can be formulated as a loss minimization problem, using the language of swap agnostic learning. 
One can also show  that $(\ell_2, \mC)$-agnostic learning (without the swap) results in a predictor that guarantees $\mC$-multiaccuracy.
A similar result appears in \cite[Theorem 5.6]{lossoi}.
In a sense, our results characterizing multicalibration can be viewed as a lifting of the earlier result on multiaccuracy to a more powerful (swap) model.
This ``hierarchy'' of implications bears similarity to results from the outcome indistinguishability literature, where multicalibration corresponds to indistinguishability in the prediction-access model, whereas multiaccuracy corresponds to indistinguishability in the  weaker no-access model \cite{oi}.

\paragraph{Multicalibration as no-swap-regret relative to $\mC$.}

Our notions of swap loss minimization and swap omniprediction are inspired by notions of swap regret minimization in online learning \cite{FosterV98, FV99, BlumM07}. These classic results of \cite{FosterV98, FV99} relate calibration and swap regret, whereas \cite{BlumM07} show a generic way to convert a low external regret algorithm to ones with low internal and swap regret. The study of online learning and regret minimization is extensive, with deep connections to calibration and equilibria in game theory, we refer the readers to \cite{CBL, BMchapter} for comprehensive surveys.

Concretely in our language, Foster and Vohra \cite{FosterV98} characterize calibration as $(\ell_2, V)$-swap agnostic learning, where $V = \{v \in [0,1]\}$ is the set of all constant predictors.
Since they work in the online setting, they phrase their result in terms of swap regret relative to strategies in $V$ (the terminology of swap regret comes from \cite{BlumM07}). Since we work in the batch setting, we can rephrase their result in terms of swap agnostic learning, which is a batch analogue of swap regret.  \cite[Theorem 3]{FosterV98} characterizes the calibration score of a predictor $\tp$ in terms of the regret it suffers relative to all constant predictions in $V$ measured under the squared loss (which measures how much it gains by predicting $w \in [0,1]$ in place of $v \in \Im(\tp)$).
Theorem \ref{result:main} gives a similar characterization for $\mC$-multicalibrated predictors, where the no-regret guarantee now applies to strategies $c \in \mC$.
A $\mC$-multicalibrated predictor cannot reduce its squared loss by predicting $c \in \mC$ in place of $v \in \Im(\tp)$, moreover this connection goes both ways.

Recent work has established rich connections between online learning and regret minimization on one hand, and multigroup fairness notions on the other: 
multicalibration in the online setting has been considered in the work of \cite{GuptaJNPR22}, while \cite{BlumLyk20, GillenJKR18} relate multigroup fairness to questions in online learning. Our work contributes a new facet to these connections: it introduces new notions motivated by the regret minimization literature to agnostic loss minimization and omniprediction, and shows that these new notions have tight characterizations in terms of multi-group fairness notions.

\paragraph{Comparing notions for $\mLc$.}

Convex losses play a key role in many of the proofs and equivalences that we establish.
In particular, for each notion of omniprediction / OI we consider, instantiated with $\mL = \mLc$ (for arbitrary $\mC$), there is a corresponding multi-group fairness notion that guarantees the notion.
We list the simplest known multi-group guarantee for achieving each notion in Table \ref{tab:compare}.
Observe that the implications for the swap notions go both ways---the multi-group guarantee actually characterizes the notion of loss minimization.
For the non-swap versions of omniprediction and loss-OI, a tight characterization in terms of some multigroup notion is unknown.
Concretely, we leave open the question of whether there is a multi-group characterization of $(\mLc,\mC)$-omniprediction, which has been alluded to since the original omniprediction work \cite{omni,lossoi}.

\begin{table}
\centering
\begin{tabular}{||l | l |l ||} 
\hline
 Loss minimization notion & Multi-group fairness guarantee & Reference\\
 \hline
 \hline 
 $(\mLc, \mC)$-swap loss OI & $\partial \mLc \circ \mC$-swap multicalibration & Theorem \ref{thm:swap-eq} ($\Leftrightarrow$)\\ [0.5ex]
 $(\mLc, \mC)$-swap omniprediction & $\mC$-swap multicalibration  & Theorem \ref{thm:main} ($\Leftrightarrow$)\\ [0.5ex]
 $(\mLc, \mC)$-loss OI & Calibration + $\partial \mLc \circ \mC$- multiaccuracy  & \cite{lossoi} ($\Leftarrow$)\\ [0.5ex]
 $(\mLc, \mC)$-Omniprediction & $\mC$- multicalibration & \cite{omni} ($\Leftarrow$)\\ [0.5ex]
 \hline
\end{tabular}
\caption{Computational guarantees needed for $(\mLc, \mC)$-loss minimization}
\label{tab:compare}
\end{table}

 \label{sec:related}

\paragraph{Multi-group fairness.}
Notions of multi-group fairness were introduced in the work of \cite{hkrr2018,kgz} and \cite{KNRW17}, following \cite{KleinbergMR17}. The flagship notion of multicalibration has been extended to several other settings including multiclass predictions \cite{dwork2022beyond}, real-valued labels \cite{GuptaJNPR22, JungLPRV21}, online learning \cite{GuptaJNPR22} and importance weights \cite{gopalan2021multicalibrated}.  Alternate definitions and extensions of the standard notions of multicalibration have been proposed in \cite{omni, GopalanKSZ22, DengDZ23, DworkLLT}. Multicalibration has also proved to have unexpected connections to many other domains, including computational indistinguishability \cite{oi}, domain adaptation \cite{kim2022universal}, and boosting \cite{omni, GHHKRS}.

\paragraph{Independent concurrent work.}

Independent work of Dwork, Lee, Lin and Tankala \cite{DworkLLT} defines the notion of \emph{strict multicalibration}.
They connect this notion to the Szemeredi regularity lemma and its weaker version due to Frieze and Kannan. Their notions bears important similarities to swap multicalibration, but it is different.
Like swap multicalibration,  strict multicalibration involves switching the order of expectations and max.
But they require a statistical closeness guarantee, whereas we only require a {\em first order guarantee}, that $c(\x)$ be uncorrelated with $\y - \bv$ conditioned on $\bv$.
For Boolean $c$, these notions are essentially equivalent.
In the setting of real-valued functions $\mC$, however, they seem rather different. Swap multicalibration is closed under linear/convex combinations of $\mC$, whereas strict multicalibration might not be. In the real-valued setting,  strict multicalibration seems to require a stronger learning primitive than weak agnostic learner for $\mC$, which suffices for swap multicalibration. The analogous statistical distance notion of multiaccuracy is at least as powerful as the notion of hypothesis OI for arbitrary loss families $\mL$, which is known to require weak learning for $\partial \mL \circ \mC$ \cite[Theorem 4.9]{lossoi}.

The independent work of Globus-Harris, Harrison, Kearns, Roth and Sorrell \cite{GHHKRS} relates multicalibration to real-valued boosting to minimize $\ell_2$ loss.
The implication that $(\ell_2, \mC)$ swap-agnostic learning implies multicalibration follows from Part(1) of Theorem 3.2 in their paper.
They prove that a violation of multicalibration leads to a better strategy for the adversary in the $(\ell_2, \mC)$-swap minimization game. They do not consider the notion of swap multicalibration, so their result is not a tight characterization unlike our equivalence of Items (1) and (3) in Theorem \ref{thm:main}.

\subsection*{Organization of Manuscript}
The remainder of the manuscript is structured as follows.
In Section~\ref{sec:properties}, we give a deeper overview of swap multicalibration, arguing that it should be thought of as essentially equivalent to standard multicalibration.
In Section~\ref{sec:main}, we prove the main equivalence of the three notions of swap learning.
In Section~\ref{sec:lossOI}, we give the characterization of swap loss OI in terms of swap mulitcalibration over the augmented class.
Finally, in Section~\ref{sec:example}, we prove separation results, establishing the remaining non-implications depicted in Figure~\ref{fig:rel}.

\section{Properties of Swap Multicalibration}
\label{sec:properties}

In this section, we formalize the intuitive claim that standard multicalibration as defined by \cite{hkrr2018} is essentially equivalent to swap multicalibration.
We start, however, with a concrete difference between the standard definition of multicalibration and the swap variant.
Starting from the definitions, we denote the multicalibration error of $\tf$ with respect to $\mC$ under $\mD$ as
\[  \mce_\mD(f, \mC) = \max_{c \in \mC} \E_{\bv \sim \fD} \lt[ \lt| \E_{\Dv} [c(\x)(\y - \bv)] \rt| \rt] \]
and the swap multicalibration error of $\tf$ with respect to $\mC$ under $\mD$ as
\[  \umce_\mD(\tf, \mC) = \E_{\bv \sim \fD} \lt[ \max_{c \in \mC} \lt|\E_{\Dv} [c(\x)(\y - \bv)]  \rt| \rt]. \]

To contrast the notions, consider the following notions of ``bad'' prediction intervals for a given $\mC$ and predictor $\tf$.
\begin{definition}
    Fix a hypothesis class $\mC$.
    Let $\beta \in \izo$.
    For each $c \in \mC$, let the $\beta$-bad prediction intervals for $c$ be given as the $v \in \izo$ with multicalibration violation of at least $\beta$.
    \begin{align*} 
        \Bad_{c,\beta}(\tf) = \lt\{v \in [0,1]: \lt|\E_{\mD|_v}[c(\x)(\y - v)]\rt| \geq \beta \rt\}
    \end{align*}
    Further, let the global $\beta$-bad prediction intervals for $\mC$ be the union over $c \in \mC$.
    \begin{align*}
        \Bad_{\mC,\beta}(\tf) = \bigcup_{c \in \mC} \Bad_{c,\beta}(\tf) = \lt\{v \in [0,1]: \exists c \in \mC\  \mathrm{s.t.}\  \lt|\E_{\mD|_v}[c(\x)(\y - v)]\rt| \geq \beta \rt\}. 
    \end{align*}
\end{definition}

Under standard multicalibration, we can bound the probability of $\Bad_{c,\beta}(\tf)$ under $\fD$ for each $c \in \mC$ by an application of Markov's inequality.
Note that the set $\Bad_{c,\beta}(\tf)$ depends on our choice of $c \in \mC$.
That is, different prediction intervals might be bad for different choices of $c$.
In principle, one might hope to take a union bound over all $c \in \mC$ to bound the probability of landing in a globally-bad set $\Bad_{\mC,\beta}(\tf)$.
But for large (or infinite) hypothesis classes $\mC$, the resulting bound becomes vacuous quickly.

In contrast, swap multicalibration allows us to define a global set $\Bad_{\mC,\beta}(\tf)$ of bad prediction intervals $v \subseteq \Im(\tf)$ with small probability under $\fD$.
That is, with good probability under $\fD$, the correlation between \emph{every} $c\in \mC$ and the residue $\y - v$ conditioned on $\tf(\x) =v$ will be small.
This difference is summarized in the following lemma.

\begin{lemma}
    If $\tf$ is $(\mC, \alpha)$-multicalibrated, then for every $c \in \mC$, 
    \[ \Pr_{\bv \sim \fD} [\bv \in \Bad_{c,\beta}(\tf)] \leq \frac{\alpha}{\beta}.\]
    If $\tf$ is $(\mC, \alpha)$-swap multicalibrated, then 
    \[ \Pr_{\bv \sim \fD} [\bv \in \Bad_{\mC,\beta}(\tf)] \leq \frac{\alpha}{\beta}.\]
\end{lemma}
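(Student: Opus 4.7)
The plan is to obtain both bounds as direct applications of Markov's inequality to the quantities that appear inside the definitions of (swap) multicalibration. The only real work is identifying the right nonnegative random variable on the probability space $\bv \sim \fD$ in each case, and then observing that its defining expectation is exactly the (swap) multicalibration error bounded by $\alpha$.

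For the first part, I would fix an arbitrary $c \in \mC$ and define the nonnegative random variable
\[ X_c(\bv) = \lt| \E_{\Dv}[c(\x)(\y - \bv)] \rt|. \]
By the definition of $(\mC,\alpha)$-multicalibration, $\E_{\bv \sim \fD}[X_c(\bv)] \leq \mce_\mD(\tf,\mC) \leq \alpha$. Then Markov's inequality gives $\Pr_{\bv \sim \fD}[X_c(\bv) \geq \beta] \leq \alpha/\beta$, and the event $\{X_c(\bv) \geq \beta\}$ is exactly the event $\{\bv \in \Bad_{c,\beta}(\tf)\}$ by unrolling the definition of $\Bad_{c,\beta}(\tf)$.

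For the second part, I would instead take the pointwise maximum over $\mC$:
\[ Y(\bv) = \max_{c \in \mC} \lt| \E_{\Dv}[c(\x)(\y - \bv)] \rt|. \]
Swap multicalibration of $\tf$ is precisely the statement that $\E_{\bv \sim \fD}[Y(\bv)] \leq \umce_\mD(\tf,\mC) \leq \alpha$, so Markov's inequality yields $\Pr_{\bv \sim \fD}[Y(\bv) \geq \beta] \leq \alpha/\beta$. The crucial observation is that here the maximum is inside the expectation, so we can pull it out to define a \emph{single} random variable that dominates the correlation against every $c \in \mC$ simultaneously; by the definition of $\Bad_{\mC,\beta}(\tf)$, the event $\{Y(\bv) \geq \beta\}$ coincides with $\{\bv \in \Bad_{\mC,\beta}(\tf)\}$.

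There is essentially no obstacle: both claims are one-line Markov arguments, and the contrast highlighted by the lemma lives entirely in the order of quantifiers in the two definitions. The only minor care needed is, in the swap case, justifying that the pointwise maximum $Y(\bv)$ is a well-defined, measurable nonnegative function so that Markov applies; for finite or otherwise tame $\mC$ this is immediate, and the paper's presentation tacitly assumes this regularity (implicit already in writing $\umce_\mD(\tf,\mC)$ as an expectation of a max).
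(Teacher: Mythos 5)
Your proposal is correct and is exactly the argument the paper intends: the lemma is a direct application of Markov's inequality to $\lt|\E_{\Dv}[c(\x)(\y-\bv)]\rt|$ in the standard case and to $\max_{c\in\mC}\lt|\E_{\Dv}[c(\x)(\y-\bv)]\rt|$ in the swap case, whose expectations over $\bv\sim\fD$ are the quantities $\mce_\mD$ and $\umce_\mD$ bounded by $\alpha$. Your identification of the events $\{X_c(\bv)\ge\beta\}$ and $\{Y(\bv)\ge\beta\}$ with the respective $\Bad$ sets matches the definitions, so nothing is missing.
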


\paragraph{Standard versus swap multicalibration.}
In definition, swap multicalibration is a stronger notion that than standard multicalibration.
That said, we show that, both analytically and algorithmically, the two notions are tightly connected.
In particular, any method (existing and hypothetical) for achieving (standard) $\mC$-multicalibration can be used to achieve $\mC$-swap multicalibration without significant deterioration in the accuracy parameter.

First, we show that
any predictor $\tf$ that guarantees standard multicalibration can be converted into a predictor $\bar{p}$ that guarantees swap multicalibration, using a simple bucketing procedure.
This bucketing causes some degradation in the multicalibration error parameter, but the class $\mC$ remains unaffected.
\begin{definition}
    Let $\delta \in [0,1]$ so that $m = 1/\delta \in \Z$. Define $B_j = [(j-1)\delta, j \delta)$ for $j \in [m-1]$ and $B_m = [1 - \delta, 1]$. Define the predictor $\bar{p}_\delta$ where for every $x$ such that $\tf(x) \in B_j$, $\bar{p}_\delta(x) = j\delta$.  
\end{definition}

\begin{claim}
\label{claim:to-uniform}
Let $\tf:\X \rgta [0,1]$ be a $(\mC, \alpha)$-multicalibrated.
    For any $\delta \in [0,1]$ where $1/\delta \in \Z$, the predictor $\bar{p}_\delta$ is $(\mC, 2\sqrt{\alpha/\delta} + \delta)$-swap multicalibrated, and $\max_{x \in \X}|\tp(x) - \bar{p}_\delta(x)| \leq \delta$.
\end{claim}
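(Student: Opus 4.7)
The plan splits into two parts. The uniform bound $\max_{x \in \X}|\tp(x) - \bar{p}_\delta(x)| \leq \delta$ is immediate: by construction, whenever $\tp(x) \in B_j = [(j-1)\delta, j\delta)$, we set $\bar{p}_\delta(x) = j\delta$, so $|\tp(x) - \bar{p}_\delta(x)| < \delta$. For the swap multicalibration bound, let $q_j = \Pr_{\x \sim \mD}[\tp(\x) \in B_j]$, so that $\mD_{\bar{p}_\delta}$ is supported on $\{j\delta : j \in [m]\}$ with mass $q_j$ on $j\delta$. Writing $c(\x)(\y - j\delta) = c(\x)(\y - \tp(\x)) + c(\x)(\tp(\x) - j\delta)$ and bounding the second summand by $\delta$ in absolute value (using $|c(x)| \leq 1$ and $|\tp(x) - j\delta| < \delta$ on $B_j$), I reduce the swap MC error of $\bar{p}_\delta$ to $\delta + \sum_j q_j R_j$, where
\[ R_j \defeq \sup_{c \in \mC} \Big| \E\big[c(\x)(\y - \tp(\x)) \,\big|\, \tp(\x) \in B_j\big] \Big|. \]

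The heart of the argument is bounding $\sum_j q_j R_j$ using only the standard multicalibration hypothesis on $\tp$. For each fixed $c \in \mC$, set $g_c(v') = \E[c(\x)(\y - v') \mid \tp(\x) = v']$; then $\mce_\mD(\tp, \mC) \leq \alpha$ is precisely $\E_{\bv' \sim \fD}|g_c(\bv')| \leq \alpha$ for every $c$. By the tower rule, for any $c$ and any bucket $j$,
\[ q_j \cdot \Big|\E\big[c(\x)(\y - \tp(\x)) \,\big|\, \tp(\x) \in B_j\big]\Big| = \Big| \int_{B_j} g_c(v')\, d\fD(v') \Big| \leq \int_{B_j} |g_c(v')|\, d\fD(v') \leq \alpha. \]
Taking the supremum over $c$ gives the universal per-bucket bound $q_j R_j \leq \alpha$; in particular, on any bucket with $R_j > \beta$ one must have $q_j \leq \alpha/\beta$.

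Now threshold at a parameter $\beta > 0$: split the sum $\sum_j q_j R_j$ into buckets with $R_j \leq \beta$ and buckets with $R_j > \beta$. The first group contributes at most $\beta \sum_j q_j = \beta$. For the second, using $R_j \leq 1$ together with the per-bucket bound $q_j \leq \alpha/\beta$ and the fact that there are at most $m = 1/\delta$ buckets in total, the contribution is at most $(1/\delta)(\alpha/\beta) = \alpha/(\delta\beta)$. Hence $\sum_j q_j R_j \leq \beta + \alpha/(\delta\beta)$; optimizing with $\beta = \sqrt{\alpha/\delta}$ gives $2\sqrt{\alpha/\delta}$. Combined with the earlier $\delta$ term, this yields $\umce_\mD(\bar{p}_\delta, \mC) \leq 2\sqrt{\alpha/\delta} + \delta$, as claimed.

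The main obstacle is the inability to commute the supremum over $c \in \mC$ past the expectation over $\bv$ using standard multicalibration alone: the hypothesis only controls $\E_{\bv'}|g_c(\bv')|$ for each $c$ separately. The trick that makes the proof go through is that after the bucketing there are only $m = 1/\delta$ distinct prediction values, so a per-bucket witness $c_j$ combined with the universal bound $q_j R_j \leq \alpha$ yields a Markov-type constraint on the mass of each bad bucket, which can then be summed by trivial counting rather than a union bound over the possibly infinite class $\mC$.
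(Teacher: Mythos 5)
Your proof is correct and follows essentially the same route as the paper's: the same decomposition into a $\delta$ discretization term plus the bucket-conditional multicalibration error, the same Markov-type thresholding at a level $\beta$ exploiting that there are only $m=1/\delta$ buckets, and the same optimization $\beta=\sqrt{\alpha/\delta}$. Your per-bucket identity $q_jR_j\le\alpha$ via the tower rule is a slightly cleaner packaging of the paper's argument through the sets $\Bad_{c,\beta}(\tf)$, but the substance is identical.
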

While the bound of the theorem is valid for all $\delta$, the swap multicalibration guarantee is only meaningful when $\delta \geq \Omega(\alpha)$.
Thus, by discretizing a $(\mC,\alpha)$-multicalibrated predictor, we obtain a $(\mC,O(\alpha^{1/3}))$-swap multicalibrated predictor.
Note that the resulting predictor $\bar{p}$ is actually close to the original $\tf$ in $\ell_\infty$; in other words, any (standard) multicalibrated predictor is actually close to a swap multicalibrated predictor.

\begin{proof}[Proof of Claim~\ref{claim:to-uniform}]
The $\ell_\infty$ bound is immediate from the definition of $\bar{p}$. We bound the swap multicalibration error of $tf$. 
We have $\bar{p}(\x) = j\delta$ iff $\tf(\x) \in B_j$, so that $|\tf(\x) - j\delta| \leq \delta$ holds conditioned on this event. So 
\begin{align}
    \umce_\mD(\bar{p}, \mC) & = \sum_{j \in [m]}\Pr[\bar{p}(\x) =j\delta]\max_{c \in \mC} \lt|\E_{\mD} [c(\x)(\y - j\delta)|\bar{p}(\x) =j \delta]\rt| \notag\\
    &  = \sum_{j \in [m]}\Pr[\tf(\x) \in B_j]\max_{c \in \mC} \lt|\E_{\mD} [c(\x)(\y - j\delta)|\tf(\x) \in B_j]\rt| \notag\\
    & \leq \sum_{j \in [m]}\Pr[\tf(\x) \in B_j]\lt(\delta + \max_{c \in \mC} \lt|\E_{\mD} [c(\x)(\y - \tf(\x))|\tf(\x) \in B_j]\rt| \rt)\notag\\
    &\leq \delta + \sum_{j \in [m]}\Pr[\tf(\x) \in B_j]\max_{c \in \mC} \lt|\E_{\mD} [c(\x)(\y - \tf(\x))|\tf(\x) \in B_j]\rt|    \label{eq:gap-delta}
\end{align}

Let us fix a bucket $B_j$  and a particular $c \in \mC$. For $\beta \geq \alpha$ to be specified later we have
\begin{align*}
\lt|\E[ c(\x)(\y - \tf(\x))|\tf(\x) \in B_j]\rt| & \leq  \Pr[c(\x)(\y - \tf(\x)) \geq \beta |\tf(\x) \in B_j] + \beta \Pr[c(\x)(\y - \tf(\x)) \leq \beta |\tf(\x) \in B_j] \\ 
&\leq \frac{\Pr[\tf(\x) \in \Bad_{c,\beta}(\tf) \cap B_j]}{\Pr[\tf(\x) \in B_j]} + \beta\\
&\leq \frac{\Pr[\tf(\x) \in \Bad_{c,\beta}(\tf)]}{\Pr[\tf(\x) \in B_j]} + \beta\\
&\leq \frac{\alpha/\beta}{\Pr[\tf(\x) \in B_j]} +\beta.
\end{align*}
Since this bound holds for every $c$, it holds for the max over $c \in  \mC$ conditioned on $\tf(\x) \in B_j$. Hence
\begin{align*}
 \sum_{j \in [m]}\Pr[\tf(\x) \in B_j]\max_{c \in \mC} \lt|\E_{\mD} [c(\x)(\y - \tf(\x))|\tf(\x) \in B_j]\rt|    & \leq \sum_{j \in [m]}\Pr[\tf(\x) \in B_j] \lt(\frac{\alpha/\beta}{\Pr[\tf(\x) \in B_j]} +\beta\rt)\notag\\
 & \leq \frac{\alpha}{\beta\delta} + \beta,
\end{align*}
where we use $m = 1/\delta$. Plugging this back into Equation \eqref{eq:gap-delta} gives
\begin{align*}
  \umce_\mD(\bar{p}, \mC)  = \E_{\bv \sim \bar{p}_\mD} \lt[ \max_{c \in \mC} \lt|\E_{\Dv} [c(\x)(\y - \bv)]\rt|\rt] \leq     \frac{\alpha}{\beta\delta} + \beta + \delta.
\end{align*}
Taking $\beta = \sqrt{\alpha/\delta}$ gives the desired claim.
\end{proof}

While this generic transformation suffers a polynomial loss in the accuracy parameter, such a loss seems not to be necessary algorithmically.
As we describe next, known algorithms for achieving multicalibration \cite{hkrr2018, omni} actually guarantee swap multicalibration without any modification.

\subsection{Algorithms for Swap Multicalibration}
In Lemma~\ref{lem:algo}, we show that swap multicalibration is feasible, assuming an oracle for the agnostic learning problem on $\mC$.
First, we give a brief overview of the prior work on algorithms for standard multicalibration.

The work introducing multicalibration \cite{hkrr2018} gives a boosting-style algorithm for learning multicalibrated predictors that has come to be known as $\mathrm{MCBoost}$.
The algorithm is an iterative procedure:  starting with a trivial predictor, the $\mathrm{MCBoost}$ searches for a supported value $v \in \Im(\tp)$ and ``subgroup'' $c_v \in \mC$ that violate the multicalibration condition.
\begin{algorithm}[t]
\caption{MCBoost \cite{hkrr2018}}\label{alg:mcboost}
\textbf{Parameters:} hypothesis class $\mC$ and $\alpha > 0$\\
\textbf{Given:}  Dataset $S$ sampled from $\mD$\\
\textbf{Initialize:}  $\tp(x) \gets 1/2$.\\
\textbf{Repeat:}\\
if $\exists v \in \Im(\tp)$ and $c_v \in \mC$ such that
\begin{gather}
\label{algo:searchstep}
\E[ c_v(\x) \cdot (\y - v) \ \vert \ \tp(\x) = v] > \poly(\alpha)
\end{gather}
update $\tp(x) \gets \tp(x) + \eta c_v(x) \cdot \mathbf{1}[\tp(x) = v]$\\
\textbf{Return:} $\tp$
\end{algorithm}
Note that some care has to be taken to ensure that the predictor $\tp$ stays supported on finitely many values, and that each of these values maintains significant measure in the data distribution $\mD_{\tp}$.
In this pseudocode, we ignore these issues; \cite{hkrr2018} handles them in full detail.

Importantly, the search over $\mC$ for condition (\ref{algo:searchstep}) can be reduced to weak agnostic learning, defined formally below.
In each iteration, we pass the learner samples drawn from the data distribution, but labeled according to $\z = \y - v$ when $\tp(\x) = v$.
\begin{definition}[Weak agnostic learning]
    Suppose $\mD$ is a data distribution supported on $\X \times [-1,1]$.
    For a hypothesis class $\mC$, a weak agnostic learner $\mathrm{WAL}$ solves the following promise problem:
    for some accuracy parameter $\alpha > 0$,
    if there exists some $c \in \mC$ such that
    \begin{gather*}
        \E_{(\x,\z) \sim \mD}[c(\x) \cdot \z] \ge \alpha
    \end{gather*}
    then $\mathrm{WAL}_\alpha$ returns some $h:\X \to \R$ such that\footnote{We are informal about the polynomial factor in the guarantee of the weak agnostic learner. The smaller the exponent, the stronger the learning guarantee; i.e., we want $\mathrm{WAL}_\alpha$ to return a hypothesis with correlation with $\z$ as close to $\Omega(\alpha)$ as possible.}
    \begin{gather*}
        \E_{(\x,\z) \sim \mD}[h(\x) \cdot \z] \ge \mathrm{poly}(\alpha).
    \end{gather*}
\end{definition}
Weak agnostic learning is a basic supervised learning primitive used in boosting algorithms.
Through the connection to boosting, weak agnostic learning is polynomial-time equivalent to agnostic learning, and inherits its statistical efficiency (scaling with the VC-dimension of $\mC$), and also its worst-case computational hardness \cite{pietrzak2012cryptography}.

$\mathrm{MCBoost}$ reduces the problem of swap multicalibration to the standard weak agnostic learning task.
It is not hard to see that while $\mathrm{MCBoost}$ was designed to guarantee multicalibration, it actually guarantees swap multicalibration, using an oracle for weak agnostic learning.
By this reduction, we establish the feasability of $(\mC,\alpha)$-swap multicalibration for any learnable $\mC$.
\begin{lemma}
\label{lem:algo}
    Suppose the collection $\mC$ has a weak agnostic learner, $\mathrm{WAL}$.
    For any $\alpha > 0$, $\mathrm{MCBoost}$ makes at most $\mathrm{poly}(1/\alpha)$ calls to $\mathrm{WAL}_{\alpha}$, and returns a $(\mC,\alpha)$-swap multicalibrated predictor.
\end{lemma}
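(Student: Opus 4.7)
The plan is to show that Algorithm \ref{alg:mcboost} (i) can be implemented with $\mathrm{poly}(1/\alpha)$ calls to $\mathrm{WAL}_\alpha$ per iteration, (ii) terminates in $\mathrm{poly}(1/\alpha)$ iterations, and (iii) returns a predictor that satisfies the swap multicalibration bound by direct inspection of the termination condition.

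First I would argue that $\mathrm{WAL}_\alpha$ suffices to implement the search step. The update condition asks for some $v \in \Im(\tp)$ and $c_v \in \mC$ with $|\E[c_v(\x)(\y - v) \mid \tp(\x) = v]| > \mathrm{poly}(\alpha)$. Since $\Im(\tp)$ stays finite (we can explicitly discretize, absorbing negligible buckets as in \cite{hkrr2018}), we can loop over each value $v$ with non-trivial probability mass $\mu_v = \Pr[\tp(\x) = v]$ and invoke $\mathrm{WAL}_\alpha$ on the conditional distribution $\mD|_v$ with residual labels $\z = \y - v \in [-1,1]$. By the weak learner guarantee, if some $c \in \mC$ achieves correlation $\ge \alpha$ against $\y - v$ on $\mD|_v$, then $\mathrm{WAL}_\alpha$ returns a candidate $c_v$ with correlation $\ge \mathrm{poly}(\alpha)$, which is exactly the violation certificate the algorithm needs.

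Next, the termination bound follows from a standard squared-error potential argument. Let $\Phi(\tp) = \E[(\tp(\x) - \y)^2] \in [0,1]$. When the algorithm updates $\tp(x) \gets \tp(x) + \eta c_v(x)\mathbf{1}[\tp(x)=v]$ in response to a violation of magnitude $\gamma = \mathrm{poly}(\alpha)$ at level $v$, a direct expansion gives
\begin{align*}
\Phi(\tp) - \Phi(\tp_{\mathrm{new}}) = 2\eta\, \mu_v\, \E[c_v(\x)(\y - v) \mid \tp(\x) = v] - \eta^2\, \E[c_v(\x)^2 \mathbf{1}[\tp(\x) = v]] \ge 2\eta \mu_v \gamma - \eta^2 \mu_v,
\end{align*}
using $|c_v| \le 1$. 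Picking $\eta = \gamma$ yields a per-iteration decrease of at least $\mu_v \gamma^2$. Since the algorithm only acts on level sets with $\mu_v$ above a threshold $\mu_0 = \mathrm{poly}(\alpha)$ (otherwise small buckets contribute negligibly to $\umce$ and can be ignored in post-processing), the potential drops by $\mathrm{poly}(\alpha)$ each step, giving at most $T = \mathrm{poly}(1/\alpha)$ iterations and hence $\mathrm{poly}(1/\alpha)$ calls to $\mathrm{WAL}_\alpha$ total.

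Finally, I would argue the swap guarantee from the stopping condition. Upon termination, for every $v \in \Im(\tp)$ with $\mu_v \ge \mu_0$, $\mathrm{WAL}_\alpha$ on $(\mD|_v, \y - v)$ failed to return a violator; by the contrapositive of the $\mathrm{WAL}$ promise, $\max_{c \in \mC} |\E[c(\x)(\y - v) \mid \tp(\x) = v]| < \alpha$. Therefore
\begin{align*}
\umce_\mD(\tp, \mC) = \sum_{v} \mu_v \max_{c \in \mC} |\E[c(\x)(\y-v) \mid \tp(\x) = v]| \le \alpha \cdot \Pr[\mu_{\tp(\x)} \ge \mu_0] + 2\mu_0 \cdot |\Im(\tp)|,
\end{align*}
where the second term bounds the contribution of low-mass buckets (using $|\y - v| \le 1$), and can be made $O(\alpha)$ by choosing $\mu_0$ appropriately relative to the discretization. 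Rescaling constants in the internal accuracy parameter gives exactly the claimed $(\mC,\alpha)$-swap multicalibration.

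The main obstacle I expect is the bookkeeping around low-probability level sets: the algorithm as written could in principle create many level sets through repeated updates, so one must either discretize $\Im(\tp)$ after each update to control $|\Im(\tp)|$ or merge buckets of negligible mass. This is handled implicitly in \cite{hkrr2018}, and adapting their argument to the swap setting is straightforward since the swap condition is a convex combination (weighted by $\mu_v$) of the per-bucket conditions that the search step already certifies.
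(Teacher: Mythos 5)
Your proposal is correct and follows essentially the same route as the paper's proof: the paper likewise defers the squared-error potential argument and the reduction of the search step to weak agnostic learning to \cite{hkrr2018}, and then observes that the per-level-set termination condition directly yields the swap guarantee by averaging over $\bv \sim \mD_{\tp}$. You simply spell out the potential decrease and the low-mass-bucket bookkeeping explicitly, which the paper leaves to the cited reference.
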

\begin{proof}
The iteration complexity of $\mathrm{MCBoost}$ is inverse polynomially related to the size of the violations we discover in (\ref{algo:searchstep}).
A standard potential argument and the formal reduction of search to weak agnostic learning can be found in \cite{hkrr2018}.

By the termination condition, we can see that $\tp$ must actually be $(\mC,\alpha)$-swap multicalibrated.
In particular, when the algorithm terminates, then for all $v \in \Im(\tp)$, we have that
\begin{gather*}
    \max_{c_v \in \mC}\ \E[ c_v(\x) \cdot (\y - v) \ \vert \ \tp(\x) = v] \le \poly(\alpha) \le \alpha.
\end{gather*}
Therefore, averaging over $\bv \sim \mD_{\tp}$, we obtain the guarantee.
\end{proof}

\section{Swap Agnostic Learning, Swap Omniprediction, and Swap Multicalibration are Equivalent}
\label{sec:main}

Our main result is an equivalence between swap agnostic learning, swap omniprediction, and swap multicalibration.
Concretely, this equivalence shows that swap agnostic learning for the squared error is sufficient to guarantee swap omniprediction for all (nice) convex loss functions.
We begin with some preliminaries, then formally state and prove the equivalence.
We conclude the section by showing how to use the existing framework for learning multicalibrated predictors to achieve swap agnostic learning for any convex loss.

\paragraph{Nice loss functions.}
For a loss function $\ell : \zo \times \R \rgta \R$,
we extend  $\ell$ linearly to allow the first argument to take values in the range $p \in [0,1]$ as:
\[ \ell(p, t) = \E_{\y \sim \Ber(p)} [\ell(\y,t)] = p \cdot \ell(1, t) + (1- p) \cdot \ell(0,t). \]
We say the loss function is {\em convex} if for $y \in \zo$, $\ell(y, t)$ is a convex function of $t$.
By linearity, this convexity property holds for $\ell(p, t)$ for all $p$. 

As in \cite{lossoi}, for a loss $\ell$, we define the partial difference function $\partial \ell: \R \to \R$ as
    \[ \Pell (t) = \ell(1, t) - \ell(0, t).\]
We define a class of ``nice'' loss functions, which obey a minimal set of boundedness and Lipschitzness conditions.
\begin{definition}    
\label{def:nice-loss}
    For a constant $B > 0$, a loss function is $B$-nice if there exists an interval $I_\ell \subseteq \R$ such that the following conditions hold:
    \begin{enumerate}
        \item (Optimality) If $\Pi_\ell:\R \rgta I_\ell$ denotes projection onto the interval $I_\ell$, then $\ell(p, \Pi_\ell(t)) \leq \ell(p, t)$ for all $t \in \R$ and $p \in [0,1]$.
        \item (Lipschitzness) For $y \in \zo$ and $t \in I_\ell$, $\ell(y,t)$ is $1$-Lipschitz as a function of $t$. 
        \item (Bounded difference) For $t \in I_\ell$, $|\partial \ell(t)| \leq B$.
    \end{enumerate}
    The class of all $B$-nice loss functions by $\mL(B)$. The subset of $B$-nice convex loss functions is denoted by $\mLc(B)$.
\end{definition}
Bounded loss functions generalize the idea of loss functions defined over a fixed interval of $\R$ (by optimality) and of bounded output range (by bounded difference).
By optimality of nice loss functions, we may assume $k_\ell:\izo \to I_\ell$ since we do not increase the loss by projection onto $I_\ell$.
Indeed, for convex losses $\ell$, the natural choice for $I_\ell$ is to take the interval $[k_\ell(0), k_\ell(1)]$.  
The bounded difference condition implies that $\ell$ is Lipschitz in its first argument, a property that will be useful.
\begin{lemma}
\label{lem:y-lipschitz}
    For every $\ell \in \mL(B)$ and $t_0 \in I_\ell$, the function $\ell(p,t_0)$ is $B$-Lipschitz w.r.t.\ $p$.   
\end{lemma}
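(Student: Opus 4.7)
The plan is to unwind the linear extension of $\ell$ in its first argument, and then invoke the bounded-difference condition from \Cref{def:nice-loss}.

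First, I would use the definition of the linear extension to write, for any $p \in [0,1]$ and any fixed $t_0 \in I_\ell$,
\begin{align*}
    \ell(p, t_0) &= p \cdot \ell(1, t_0) + (1-p) \cdot \ell(0, t_0) \\
    &= \ell(0, t_0) + p \cdot \bigl(\ell(1,t_0) - \ell(0,t_0)\bigr) \\
    &= \ell(0, t_0) + p \cdot \partial\ell(t_0).
\end{align*}
Thus, as a function of $p$ with $t_0$ held fixed, $\ell(p, t_0)$ is affine with slope $\partial\ell(t_0)$.

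Next, for any $p, p' \in [0,1]$, this affine formula gives
\begin{align*}
    \lt|\ell(p, t_0) - \ell(p', t_0)\rt| = |p - p'| \cdot |\partial\ell(t_0)|.
\end{align*}
Since $t_0 \in I_\ell$ by hypothesis, the bounded-difference condition of \Cref{def:nice-loss} yields $|\partial\ell(t_0)| \le B$, and so the right-hand side is at most $B \cdot |p - p'|$, which is exactly the claimed $B$-Lipschitz property. The proof will be essentially a one-line computation, so there is no real obstacle here; the lemma is just a clean packaging of the bounded-difference axiom for later use.
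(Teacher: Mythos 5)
Your proof is correct and matches the paper's intent exactly: the paper states this lemma without proof, noting only that it follows from the bounded difference condition, and your one-line computation (the linear extension is affine in $p$ with slope $\partial\ell(t_0)$, which is bounded by $B$ on $I_\ell$) is precisely the omitted argument.
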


\paragraph{Concept classes.}
For a concept class of functions $\mC: \{c: \X \to \R\}$, we assume that $\mC$ is closed under negation, and it contains the constant functions $0$ and $1$.
Denoting
$\infnorm{\mC} = \max|c(x)|$ over $c \in \mC, x \in \X$, we say that $\mC$ is bounded if $\infnorm{c} \leq 1$.
For $W \in \R^+$, let $\Lin(\mC,W)$ be all functions that can be expressed as a ($W$-sparse) linear combination of base concepts from $\mC$,
\[ c_w(x) = \sum_{c \in \mC}w_c \cdot c(x), \ \ \sum_{c \in \mC} |w_c| \leq W.\]
Note that for bounded $\mC$, the norm of linear combinations scales gracefully with the sparsity, $\infnorm{\Lin(\mC, W)} \leq W$.
We define $\Lin(\mC)$ to be the set of all linear combinations with no restriction on the weights of the coefficients.

\subsection{Statement of Equivalence}

With these notions in place, we can formally state the main result.

\begin{theorem}
\label{thm:main}
Let $\tf$ be a predictor, $\mC$ be a bounded hypothesis class, and $\mLc(B)$ be the class of $B$-nice convex loss functions. The following properties are equivalent:\footnote{
When we say these conditions are equivalent, we mean that they imply each other with parameters $\alpha_i$ that are polynomially related. The relations we derive result in at most a quadratic loss in parameters.}
    \begin{enumerate}
        \item $\tf$ is $(\mC, \alpha_1)$-swap multicalibrated.
        \item $\tf$ is an $(\mLc(B), \Lin(\mC, W), O((W+B)\alpha_2))$-swap  omnipredictor, for all $W \geq 1, B \geq 0$.
        \item $\tf$ is an $(\ell_2, \Lin(\mC,2), \alpha_3)$-swap agnostic learner.
    \end{enumerate}
\end{theorem}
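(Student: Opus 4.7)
The plan is to prove the equivalence cyclically via $(2) \Rightarrow (3) \Rightarrow (1) \Rightarrow (2)$. The direction $(2) \Rightarrow (3)$ is immediate from Claim~\ref{claim:swap-strong}, instantiated at $\ell_2 \in \mLc(B)$ (with $B=1$) together with the inclusion $\Lin(\mC,2) \subseteq \Lin(\mC,W)$ for $W \geq 2$: restricting the adversary's loss assignment to the constant $\ell_v = \ell_2$ in the swap-omniprediction definition recovers swap agnostic learning for the squared loss.

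For $(3) \Rightarrow (1)$, I will argue the contrapositive by converting any swap-multicalibration violation into an $\ell_2$ improvement over $\tp$. For each $v \in \Im(\tp)$, let $c^\star_v \in \mC$ attain the maximum $|\E_{\mD|_v}[c(\x)(\y-v)]|$, and let $\sigma_v \in \{\pm 1\}$ be the matching sign. Consider the ``nudged'' comparator $\tilde c_v(\x) = v + \eta\,\sigma_v c^\star_v(\x)$, which lies in $\Lin(\mC,1+\eta) \subseteq \Lin(\mC,2)$ for $\eta \leq 1$ (using $1 \in \mC$). A direct expansion of the squared-loss improvement on the level set $\{\tp(\x)=v\}$ yields
\[ \E_{\mD|_v}\!\big[(\y-v)^2 - (\y - \tilde c_v(\x))^2\big] = 2\eta\,|\E_{\mD|_v}[c^\star_v(\x)(\y-v)]| - \eta^2 \E_{\mD|_v}[c^\star_v(\x)^2] \geq 2\eta\,|\E_{\mD|_v}[c^\star_v(\x)(\y-v)]| - \eta^2,\]
using $\infnorm{c^\star_v} \leq 1$. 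Averaging over $\bv \sim \fD$ and optimizing $\eta = \alpha_1$ forces $\alpha_3 \geq \alpha_1^2$, i.e.\ $\alpha_1 \leq \sqrt{\alpha_3}$.

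The implication $(1) \Rightarrow (2)$ is the technical heart. Fix any loss assignment $\{\ell_v \in \mLc(B)\}$ and any comparators $\{c_v \in \Lin(\mC,W)\}$. Since $\ell_v(\y,\cdot)$ is convex, the tangent inequality at $t = k_{\ell_v}(v)$ gives
\[\ell_v(\y, c_v(\x)) \geq \ell_v(\y, k_{\ell_v}(v)) + \partial_t \ell_v(\y, k_{\ell_v}(v))\cdot(c_v(\x) - k_{\ell_v}(v)).\]
The clean algebraic observation driving the proof is that, combining the optimality $\partial_t \ell_v(v, k_{\ell_v}(v)) = 0$ with linearity of $\ell_v$ in its first argument, one obtains $\partial_t \ell_v(\y, k_{\ell_v}(v)) = d(v)(\y - v)$ where $d(v) := \partial_t \partial \ell_v(k_{\ell_v}(v))$ satisfies $|d(v)| \leq 2$ by the $1$-Lipschitz axiom on $I_{\ell_v}$. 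Taking conditional expectation and rearranging, the per-level omniprediction gap is upper bounded by $2\,|\E_{\mD|_v}[(\y - v)(c_v(\x) - k_{\ell_v}(v))]|$. Splitting this via linearity as $|\E_{\mD|_v}[(\y-v)c_v(\x)]| + |k_{\ell_v}(v)|\cdot|\E_{\mD|_v}[\y - v]|$, the first piece (averaged over $\bv$) is bounded by $O(W\alpha_1)$ because $\mC$-swap multicalibration extends linearly to $\Lin(\mC, W)$ with a factor $W$ loss, while the second piece is $|k_{\ell_v}(v)|$ times the per-level calibration error, which aggregates to $O(B\alpha_1)$ once $|k_{\ell_v}(v)|$ is controlled by $O(B)$ through the niceness axioms.

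The main obstacle I anticipate is the clean bookkeeping for the $k_{\ell_v}(v)$ contribution in the decomposition $c_v(\x) - k_{\ell_v}(v)$: although the identity $\partial_t\ell_v(\y, k_{\ell_v}(v)) = d(v)(\y - v)$ elegantly converts the convexity gap into a correlation between $\y - v$ and a function of $\x$, keeping the final error \emph{additive} in $B$ rather than multiplicative requires extracting a usable $O(B)$-bound on $|k_{\ell_v}(v)|$ from the conjunction of bounded-difference $|\partial \ell_v|\leq B$ on $I_{\ell_v}$ with $2$-Lipschitzness of $\partial \ell_v$ (which in turn follows from the $1$-Lipschitzness of $\ell_v(y,\cdot)$).
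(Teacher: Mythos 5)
Your directions $(2)\Rightarrow(3)$ and $(3)\Rightarrow(1)$ are correct and essentially the paper's own arguments (the paper perturbs by $\alpha(v)\,c_v$ on each level set rather than a uniform $\eta$, but both routes produce the same $\alpha_1^2$ gap via Jensen). The problem is $(1)\Rightarrow(2)$, where you replace the paper's zeroth-order argument by a first-order (tangent-line) one, and the obstacle you flag at the end is not bookkeeping but a genuine gap: \emph{no} bound of the form $|k_{\ell_v}(v)|=O(B)$ follows from Definition~\ref{def:nice-loss}. The niceness axioms (optimality of projection onto $I_\ell$, $1$-Lipschitzness on $I_\ell$, $|\partial\ell|\le B$ on $I_\ell$) are invariant under translating the action variable, $\ell(y,t)\mapsto\ell(y,t-M)$, which shifts $k_\ell(v)$ by an arbitrary $M$ while leaving $B$ unchanged; for instance $\ell(y,t)=\tfrac12|t-y-M|$ is $\tfrac12$-nice with $I_\ell=[M,M+1]$. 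Hence the term $|k_{\ell_v}(v)|\cdot|\E_{\mD|_v}[\y-v]|$ in your decomposition is uncontrolled, and the additive-in-$B$ error cannot be recovered along this path. A secondary issue: the identity $\partial_t\ell_v(\y,k_{\ell_v}(v))=d(v)(\y-v)$ requires $\ell_v(y,\cdot)$ to be differentiable at $k_{\ell_v}(v)$ with the stationarity condition $\partial_t\ell_v(v,k_{\ell_v}(v))=0$; for convex but nonsmooth nice losses (e.g.\ the absolute loss) $k_\ell(v)$ sits at a kink and this fails, whereas the theorem is claimed for all of $\mLc(B)$.

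The paper avoids both problems by never differentiating. Lemma~\ref{main:tech1} uses Jensen's inequality, optimality of $\Pi_\ell$, and Claim~\ref{claim:cov} to replace $h(\x)$ by the constant $\Pi_\ell(\mu(h:v))\in I_\ell$ at cost $2(W+1)\alpha(v)$; Lemma~\ref{lem:main_tech2} then compares the two constants $k_\ell(v)$ and $\Pi_\ell(\mu(h:v))$ by exchanging the first argument $p^*_v\leftrightarrow v$ and invoking Lemma~\ref{lem:y-lipschitz} (the $B$-Lipschitzness of $\ell(\cdot,t)$ for $t\in I_\ell$, which is exactly what $|\partial\ell|\le B$ gives), at cost $2B\alpha(v)$ --- a step whose cost is independent of where $I_\ell$ sits on the real line. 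If you wish to retain a first-order argument, you would need to first project the comparator into $I_{\ell_v}$ and replace the term $k_{\ell_v}(v)\,\E_{\mD|_v}[\y-v]$ by a quantity controlled only through $|\partial\ell_v|$ on $I_{\ell_v}$; as written, the step does not go through.
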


\subsubsection{Preliminary Results}
In preparation for proving the theorem, we establish some preliminary results.
We define a function $\alpha: \Im(\tf) \rgta [-1,1]$ which measures the maximum correlation between $c \in \mC$ and $\y - v$, conditioned on a prediction value $v \in \Im(\tf)$. Let
\[ \alpha(v) = \lt| \max_{c_v \in \mC} \E_{\mD|_v}[c_v(\x)( \y - v)] \rt|.\]
Using this notation, $(\mC, \alpha_0)$-swap multicalibration can be written as 
\[ \E_{\bv \sim \fD}[\alpha(\bv)] \leq \alpha_0.\]

We observe that  swap multicalibration is closed under bounded linear combinations of $\mC$, like with standard multicalibration. 
\begin{claim}
\label{lem:linearity}
For every $h \in \Lin(\mC, W)$ and $v \in \Im(\tf)$, we have
    \[ \max_{h \in \Lin(\mC, W)} \lt| \E_{\mD} [h(\x)(\y - v)|\tf(\x) = v]\rt| \leq W\alpha(v). \]
Let $p^*_v = \E[\y|\tf(\x) = v]$. Then  
    \begin{align}
    \label{eq:close}
        |p^*_v - v| \leq \alpha(v).
    \end{align}
\end{claim}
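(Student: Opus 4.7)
The plan is to prove the two bounds separately, both of which reduce to straightforward consequences of the definition of $\alpha(v)$ together with the two structural assumptions on $\mC$ (closure under negation and containment of the constant functions $0, 1$).

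First, I would establish a useful intermediate observation: for every $c \in \mC$ (not just the maximizer),
\[ \lt|\E_{\mD|_v}[c(\x)(\y-v)]\rt| \leq \alpha(v). \]
This is where negation closure of $\mC$ enters. Since both $c$ and $-c$ are in $\mC$, we have
$\max_{c' \in \mC} \E_{\mD|_v}[c'(\x)(\y-v)] \geq \max\{\E_{\mD|_v}[c(\x)(\y-v)], -\E_{\mD|_v}[c(\x)(\y-v)]\} = |\E_{\mD|_v}[c(\x)(\y-v)]|$, and the outer absolute value in the definition of $\alpha(v)$ is vacuous because $0 \in \mC$ forces the interior max to be nonnegative anyway.

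Given this, the first bound is immediate from linearity. Writing $h \in \Lin(\mC, W)$ as $h = \sum_{c \in \mC} w_c \cdot c$ with $\sum_c |w_c| \leq W$, linearity of conditional expectation and the triangle inequality give
\begin{align*}
\lt| \E_{\mD|_v}[h(\x)(\y-v)] \rt|
&\leq \sum_{c \in \mC} |w_c| \cdot \lt|\E_{\mD|_v}[c(\x)(\y-v)]\rt|
\leq \lt(\sum_{c \in \mC} |w_c|\rt) \alpha(v)
\leq W\alpha(v).
\end{align*}
Taking the maximum over $h \in \Lin(\mC, W)$ yields the claimed inequality.

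The second bound uses that the constant function $1$ lies in $\mC$. Applying the intermediate observation with $c \equiv 1$ gives
\[ |p^*_v - v| = \lt|\E_{\mD|_v}[\y - v]\rt| = \lt|\E_{\mD|_v}[1 \cdot (\y - v)]\rt| \leq \alpha(v), \]
which is \eqref{eq:close}. There is no real obstacle here; the only thing to be careful about is using both structural assumptions on $\mC$ in the right places (negation closure to convert the signed max inside $\alpha(v)$ into an unsigned bound for every $c$, and containment of the constant $1$ to specialize to the residual).
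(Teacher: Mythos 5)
Your proof is correct and follows essentially the same route as the paper: decompose $h$ as a weighted sum over $\mC$, apply the triangle/H\"older inequality to reduce to the per-concept bound, and specialize to the constant $1 \in \mC$ for \eqref{eq:close}. Your intermediate observation that negation closure (plus $0 \in \mC$) lets the signed max in the definition of $\alpha(v)$ control $\lt|\E_{\mD|_v}[c(\x)(\y-v)]\rt|$ for every individual $c$ is a detail the paper's proof glosses over, and it is worth making explicit.
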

\begin{proof}
Suppose that $h \in \mathrm{Lin}(\mC,W)$ of the form $h(x) = \sum_{c \in \mC} w_c \cdot c(x)$.
From Claim~\ref{claim:swap-mc}, we know that the multicalibration violation for $c \in \mC$ is bounded by $\alpha(v)$ for every $v \in \Im(\tp)$.
\begin{align*}
    \left \vert \E[h(\x)(\y - v) \ \vert \ \tp(\x) = v] \right \vert
    &= \left \vert \E\left[\sum_{c \in \mC}w_c \cdot c(\x)(\y - v) \ \vert \ \tp(\x) = v \right ] \right \vert\\
    &\le \left(\sum_{c \in \mC} \vert w_c \vert \right) \cdot \max_{c \in \mC} \left \vert \E[c(\x)(\y - v) \ \vert \ \tp(\x) = v] \right \vert\\
    &\le W \cdot \alpha(v)
\end{align*}
The inequalities follow by Holder's inequality and the assumed bound on the weight of $W$ for $h \in \mathrm{Lin}(\mC,W)$.

Equation (\ref{eq:close}) follows since $1 \in \mC$.
\end{proof}

\begin{claim}
\label{claim:cov}
For $h \in \Lin(\mC, w)$, $v \in \Im(\tf)$ and $y \in \zo$, define the following conditional expectations:
\begin{align*}
    \mu(h:v) &= \E[h(\x)|\tf(\x) = v]\\
    \mu(h: v, y) &= \E[h(\x)|(\tf(\x), \y) = (v, y)].
\end{align*}
Then for each $y \in \zo$
\begin{align}
\label{eq:cond-ex}
 \Pr[\y =y|\tp(\x) =v]\lt| \mu(h:v,y)  - \mu(h:v) \rt|  \leq (W+1)\alpha(v).
\end{align}
\end{claim}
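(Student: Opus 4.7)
The plan is to express the target quantity $\Pr[\y = y \mid \tp(\x) = v]\,(\mu(h:v, y) - \mu(h:v))$ as a conditional covariance, and then decompose that covariance into a term controlled by Claim~\ref{lem:linearity} plus a correction term controlled by the calibration gap bound (\ref{eq:close}).

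First, I would let $p^*_v = \E[\y \mid \tp(\x) = v]$ and use the law of total expectation to write $\mu(h:v) = p^*_v\,\mu(h:v, 1) + (1-p^*_v)\,\mu(h:v, 0)$. A short rearrangement then gives the identity
\[ \Pr[\y = y \mid \tp(\x) = v]\,(\mu(h:v, y) - \mu(h:v)) = (-1)^{1-y}\, p^*_v(1-p^*_v)(\mu(h:v, 1) - \mu(h:v, 0)), \]
so both cases $y \in \zo$ reduce to bounding $|p^*_v(\mu(h:v, 1) - \mu(h:v))|$. The key technical move is to recognize this as a conditional covariance:
\[ p^*_v\bigl(\mu(h:v, 1) - \mu(h:v)\bigr) = \E[h(\x)(\y - p^*_v) \mid \tp(\x) = v]. \]
This is the heart of the argument, because it rewrites the deviation between $\mu(h:v, y)$ and $\mu(h:v)$ as a first-order correlation of $h$ against the residual $\y - p^*_v$, which is exactly the form that swap multicalibration controls.

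Second, I would split $\y - p^*_v = (\y - v) - (p^*_v - v)$ inside the expectation:
\[ \E[h(\x)(\y - p^*_v) \mid \tp(\x) = v] = \E[h(\x)(\y - v) \mid \tp(\x) = v] - (p^*_v - v)\,\mu(h:v). \]
The first summand is bounded by $W\alpha(v)$ directly by Claim~\ref{lem:linearity} applied to $h \in \Lin(\mC, W)$. For the second summand, (\ref{eq:close}) (which uses $\mathbf{1} \in \mC$) gives $|p^*_v - v| \leq \alpha(v)$, and the normalization $|\mu(h:v)| \leq 1$ contributes the remaining unit. Combining yields the claimed $(W+1)\alpha(v)$ bound.

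The only real obstacle is the bookkeeping around the correction term $(p^*_v - v)\,\mu(h:v)$: one must invoke both halves of Claim~\ref{lem:linearity} — the linear-combination bound for $h$ itself and the fact that the constant function $\mathbf{1} \in \mC$ forces $|p^*_v - v| \leq \alpha(v)$. Everything else is elementary manipulation of conditional expectations, so no further algorithmic or probabilistic machinery is needed beyond what (\ref{eq:close}) and Claim~\ref{lem:linearity} already provide.
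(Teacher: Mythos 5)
Your proof is correct and follows essentially the same route as the paper: both arguments identify the target quantity with the conditional covariance $\Cov[\y, h(\x)\mid \tf(\x)=v]=\E[h(\x)(\y-p^*_v)\mid \tf(\x)=v]$, split the residual as $(\y-v)+(v-p^*_v)$, and bound the two pieces via Claim~\ref{lem:linearity} and $|p^*_v-v|\le\alpha(v)$ respectively; you merely spell out the Boolean-covariance identity that the paper delegates to a citation of \cite{omni}. One small caveat, which the paper's own write-up shares: for $h\in\Lin(\mC,W)$ one only has $|\mu(h:v)|\le W$ rather than $\le 1$, so the correction term is bounded by $W\alpha(v)$ and the argument literally yields $2W\alpha(v)$ rather than $(W+1)\alpha(v)$ --- a harmless constant-factor discrepancy that does not affect any downstream use.
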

\begin{proof}
Recall that $\Cov[\y, \z] = \E[\y\z] - \E[\y]\E[\z]$. 
For any $h \in \Lin(\mC,W)$ we have
\begin{align*}
\lt|\Cov[\y, h(\x)|\tf(\x) = v] \rt| &= \lt| \E[h(\x)(\y - \E[\y])|\tf(\x) = v]\rt|\\
&= \lt|\E[h(\x)(\y - v)|\tf(x) =v]\rt| + \lt|\E[(v - \y)|\tf(\x) = v]\rt|\\
& \leq (W+1)\alpha(v)
\end{align*}
where we use the fact that $h \in \Lin(\mC, W)$ and $1 \in \mC$. 
Since $\y \in \zo$, this implies  the claimed bounds by standard properties of covariance (see \cite[Corollary 5.1]{omni}).
\end{proof}

Next we show the following lemma, which shows that one can replace $h(\x)$ by the constant $\Pi_\ell(\mu(h:v))$ without a large increase in the loss.

\begin{lemma}
\label{main:tech1}
 For all $h \in \Lin(\mC, W)$, $v \in \Im(\tf)$ and loss $\ell \in \mLc(B)$, we have
\begin{align}
    \label{eq:main_tech1}
    \E_{\mD|_v}[\ell(\y, \Pi_\ell(\mu(h:v))] \leq \E_{\mD|_v}[\ell(\y, h(\x))] + 2(W +1)\alpha(v).
\end{align}
\end{lemma}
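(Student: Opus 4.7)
The plan is to use convexity of $\ell(y,\cdot)$ to apply Jensen conditioned on $\y$, reducing $h(\x)$ to the scalar conditional mean $\mu(h:v,y)$; then use Lipschitzness on $I_\ell$ to replace $\mu(h:v,y)$ with $\mu(h:v)$ at a cost controlled by Claim~\ref{claim:cov}; and use the optimality property of nice losses to insert the projection $\Pi_\ell$ for free.

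Concretely, I would first split the expectation by conditioning on the value of $\y \in \zo$:
\begin{align*}
\E_{\mD|_v}[\ell(\y,h(\x))] = \sum_{y \in \zo} \Pr[\y = y \mid \tf(\x)=v] \cdot \E[\ell(y,h(\x)) \mid \tf(\x)=v,\y=y].
\end{align*}
For each fixed $y$, convexity of $\ell(y,\cdot)$ together with Jensen's inequality gives
$\E[\ell(y,h(\x)) \mid \tf(\x)=v,\y=y] \geq \ell(y,\mu(h:v,y))$.
The optimality property of $\ell \in \mLc(B)$ then lets me project the argument into $I_\ell$ at no cost, so $\ell(y,\mu(h:v,y)) \geq \ell(y,\Pi_\ell(\mu(h:v,y)))$. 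Both $\Pi_\ell(\mu(h:v,y))$ and $\Pi_\ell(\mu(h:v))$ lie in $I_\ell$, where $\ell(y,\cdot)$ is $1$-Lipschitz; combined with the fact that $\Pi_\ell$ is a contraction this yields
\[
\ell(y,\Pi_\ell(\mu(h:v,y))) \geq \ell(y,\Pi_\ell(\mu(h:v))) - |\mu(h:v,y) - \mu(h:v)|.
\]

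Chaining these inequalities, multiplying by $\Pr[\y=y \mid \tf(\x)=v]$, and summing over $y \in \zo$ gives
\begin{align*}
\E_{\mD|_v}[\ell(\y,h(\x))] \geq \E_{\mD|_v}[\ell(\y,\Pi_\ell(\mu(h:v)))] - \sum_{y \in \zo} \Pr[\y = y \mid \tf(\x)=v]\cdot |\mu(h:v,y) - \mu(h:v)|.
\end{align*}
Finally, Claim~\ref{claim:cov} bounds each of the two summands on the right by $(W+1)\alpha(v)$, so the total error is at most $2(W+1)\alpha(v)$, and rearranging gives \eqref{eq:main_tech1}.

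No step presents a real obstacle; the construction is essentially a bookkeeping argument. The only subtlety is that $\mu(h:v,y)$ and $\mu(h:v)$ may lie outside $I_\ell$ (since $h$ can take values up to $W$ in magnitude), which is exactly why the optimality property and the non-expansiveness of $\Pi_\ell$ are invoked before appealing to Lipschitzness on $I_\ell$. The dependence on $W$ enters only through Claim~\ref{claim:cov} (which in turn comes from Claim~\ref{lem:linearity}), and the absence of any dependence on $B$ in the final bound reflects that we only use the Lipschitz constant in $t$, not the bounded-difference constant.
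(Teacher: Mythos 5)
Your proof is correct and follows essentially the same route as the paper's: Jensen via convexity conditioned on $\y=y$, optimality of projection, $1$-Lipschitzness on $I_\ell$ together with non-expansiveness of $\Pi_\ell$, and finally Claim~\ref{claim:cov} to bound the two probability-weighted deviation terms by $(W+1)\alpha(v)$ each. No differences worth noting.
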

\begin{proof}
    For any $y \in \zo$, 
    \begin{align}
    \E_{\mD|_v}[\ell(\y, h(\x))|(\tf(\x), \y) = (v, y)] &=  \E_{\mD|_v}[\ell(y, h(\x))| (\tf(\x), \y) = (v, y)] \notag\\
    & \geq \ell(y, \E[h(\x)|(\tf(\x), \y) = (v, y)])\label{eq:jensen}\\
    & = \ell(y, \mu(h: v, y))\notag\\
    & \geq \ell(y, \Pi_\ell(\mu(h: v, y)))\label{eq:proj}.
    \end{align} 
    where Equation \eqref{eq:jensen} uses Jensen's inequality, and Equation \eqref{eq:proj} uses the optimality of projection for nice loss functions. 
    Further, by the $1$-Lipschitzness of $\ell$ on $I_\ell$, and of $\Pi_\ell$ on $\R$ 
    \begin{align}
        \ell(y, \Pi_\ell(\mu(h: v, y))) - \ell(y, \Pi_\ell(\mu(h:v))) & \leq \lt|\Pi_\ell(\mu(h: v, y)) - \Pi_\ell(\mu(h:v))\rt| \notag\\
        & \leq \lt|\mu(h: v, y) - \mu(h:v)\rt| \label{eq:lip}
    \end{align}
    Hence we have
    \begin{align*}
    & \E_{\mD|_v}[\ell(\y, \Pi_\ell(\mu(h:v))] - \E_{\mD|_v}[\ell(\y, h(\x))]\\
    &= \sum_{y \in \zo}\Pr[\y =y|\tp(\x) = v]\lt(\ell(y, \Pi_\ell(\mu(h:v))) - \E[\ell(y, h(\x))|(\tf(\x), \y) = (v, y)]\rt)\notag\\
    &\leq \sum_{y \in \zo}\Pr[\y =y|\tp(\x) = v]\lt(\ell(y, \Pi_\ell(\mu(h:v))) -  \ell(y, \Pi_\ell(\mu(h:v, y)))\rt)\tag{By Equation \eqref{eq:proj}}\\
    & \leq \sum_{y \in \zo}\Pr[\y =y|\tp(\x) = v]\lt|\mu(h:v, y) - \mu(h:v)\rt|\tag{by Equation \eqref{eq:lip}}\\
    & \leq 2(W +1)\alpha(v). \tag{By Equation \eqref{eq:cond-ex}}
    \end{align*}
\end{proof}

Next we compare $\Pi_\ell(\mu(h:v))$ with $k_\ell(v)$. It is clear that the latter is better for minimizing loss when $\y \sim \Ber(v)$, by definition. We need to compare the losses when $\y \sim \Ber(p^*_v)$.  But $p^*_v$ and $v$ are at most $\alpha(v)$ apart by Equation \eqref{eq:close}. Hence, by using Lipschitzness, one can infer that $k_\ell(v)$ is better than $\Pi_\ell(\mu(h:v))$ and hence $h(\x)$. This is formalized in the following lemma and its proof.

\begin{lemma}
\label{lem:main_tech2}
For all $v \in \Im(\tf)$,  $\ell \in \mLc(B)$ and $h \in \Lin(\mC, W)$, we have
    \begin{align} 
    \label{eq:main_tech2}
    \E_{\mD|_v}[\ell(\y, k_\ell(\tf(\x)))] \leq \E_{\mD|_v}[\ell(\y, h(\x))] + 2(W + B + 1)\alpha(v).
    \end{align}
\end{lemma}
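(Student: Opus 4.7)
The plan is to combine Lemma~\ref{main:tech1} with a direct two-step comparison between $k_\ell(v)$ and $\Pi_\ell(\mu(h{:}v))$, exploiting the optimality of $k_\ell$ for the distribution $\Ber(v)$ together with Lipschitzness of $\ell$ in its first argument.

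First I would reduce both sides to expressions involving only the marginal $p^*_v = \E[\y \mid \tf(\x) = v]$. Since on $\mD|_v$ the predictor is constant, $k_\ell(\tf(\x)) = k_\ell(v)$, so $\E_{\mD|_v}[\ell(\y, k_\ell(\tf(\x)))] = \ell(p^*_v, k_\ell(v))$. Similarly, $\E_{\mD|_v}[\ell(\y, \Pi_\ell(\mu(h{:}v)))] = \ell(p^*_v, \Pi_\ell(\mu(h{:}v)))$.

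Next I would compare $\ell(p^*_v, k_\ell(v))$ to $\ell(p^*_v, \Pi_\ell(\mu(h{:}v)))$ by going through the $v$-distribution. By definition of $k_\ell$, we have $\ell(v, k_\ell(v)) \le \ell(v, \Pi_\ell(\mu(h{:}v)))$. Both actions lie in $I_\ell$ (by construction of $k_\ell$ and by the projection, respectively), so Lemma~\ref{lem:y-lipschitz} gives $B$-Lipschitzness of $\ell(\cdot, t)$ in its first argument at each of these two points. Equation~\eqref{eq:close} gives $|p^*_v - v| \le \alpha(v)$, so applying the Lipschitz bound twice yields
\begin{align*}
\ell(p^*_v, k_\ell(v)) &\le \ell(v, k_\ell(v)) + B\alpha(v) \\
&\le \ell(v, \Pi_\ell(\mu(h{:}v))) + B\alpha(v) \\
&\le \ell(p^*_v, \Pi_\ell(\mu(h{:}v))) + 2B\alpha(v).
\end{align*}
Chaining this with Lemma~\ref{main:tech1} gives the claim with slack $2(W+1)\alpha(v) + 2B\alpha(v) = 2(W + B + 1)\alpha(v)$.

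There is no substantive obstacle; the only care required is bookkeeping — namely, ensuring that both $k_\ell(v)$ and $\Pi_\ell(\mu(h{:}v))$ are inside $I_\ell$ so that Lemma~\ref{lem:y-lipschitz} applies, and confirming that Lemma~\ref{main:tech1} already packages up the harder step of replacing the random action $h(\x)$ by the constant $\Pi_\ell(\mu(h{:}v))$ via Jensen's inequality and the covariance bound from Claim~\ref{claim:cov}.
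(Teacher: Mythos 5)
Your proposal is correct and follows essentially the same route as the paper's proof: optimality of $k_\ell(v)$ under $\Ber(v)$, two applications of the $B$-Lipschitz bound from Lemma~\ref{lem:y-lipschitz} together with $|p^*_v - v| \le \alpha(v)$ from Equation~\eqref{eq:close}, then chaining with Lemma~\ref{main:tech1}. The bookkeeping, including the observation that $k_\ell(v)$ and $\Pi_\ell(\mu(h{:}v))$ lie in $I_\ell$, matches the paper exactly.
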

\begin{proof}
    By the definition of $k_\ell$, $k_\ell(v)$ minimizes expected loss when $\y \sim \Ber(v)$, so
    \begin{align} 
        \label{eq:min_kl}
        \ell(v, k_\ell(v)) \leq \ell(v, \Pi_\ell(\mu(h:v))
    \end{align}
    On the other hand, 
    \[ \E_{\mD|_v}[\ell(\y, t)] = \ell(p^*_v, t), \ \text{where} \ p^*_v = \E_{\mD|_v}[\y].\] 
    Thus our goal is compare the losses $\ell(p^*_v, t)$ for $t = k_\ell(v)$ and $t = \Pi_\ell(\mu(h:v))$.
    Hence, applying Lemma \ref{lem:y-lipschitz} gives
    \begin{align*}
    \ell(p^*_v, k_\ell(v)) & \leq  \ell(v, k_\ell(v)) + \alpha(v)B\\
    - \ell(p^*_v, \Pi_\ell(\mu(h:v)) & \leq  -\ell(v, \Pi_\ell(\mu(h:v)) + \alpha(v)B
    \end{align*}
    Subtracting these inequalities and then using Equation \eqref{eq:min_kl} gives
    \begin{align} 
    \ell(p^*_v, k_\ell(v)) - \ell(p^*_v, \Pi_\ell(\mu(h:v)) &\leq  \ell(v, k_\ell(v)) - \ell(v, \Pi_\ell(\mu(h:v)) + 2B\alpha(v) \\
    & \leq 2\alpha(v)B. \label{eq:missing}
    \end{align}
    We can now write
    \begin{align*} 
    \E_{\mD|_v}[\ell(\y, k_\ell(v))] & = \ell(p^*_v, k_\ell(v))\\
    & \leq\ell(p^*_v, \Pi_\ell(\mu(h:v)) + 2\alpha(v) B\tag{by Equation \eqref{eq:missing}}\\
    & = \E_{\mD|_v}[\ell(\y, \Pi_\ell(\mu(h:v))] + 2\alpha(v)B \\
    & \leq \E_{\mD|_v}[\ell(\y, h(\x))] + 2(W +1)\alpha(v) +2B\alpha(v). \tag{by Equation \eqref{eq:main_tech1}}
    \end{align*}
\end{proof}

\subsubsection{Proof of Equivalence}
We now complete the proof of Theorem \ref{thm:main}.

\begin{proof}[Proof of Theorem \ref{thm:main}]
(1) $\implies$ (2) Fix a $(\mC, \alpha_1)$-swap multicalibrated predictor $\tf$. Fix a choice of loss functions $\{\ell_v \in \mLc\}_{v \in \Im(f)}$ and hypotheses $\{h_v \in \mH\}_{v \in \Im(f)}$. For each $v$, we apply Equation \eqref{eq:main_tech2} with the loss $\ell =\ell_v$, hypothesis $h = h_v$ to get
 \begin{align*} 
    \E_{\mD|_v}[\ell_v(\y, k_{\ell_v}(v))] \leq \E_{\mD|_v}[\ell_v(\y, h_v(\x))] + 2(W + B + 1)\alpha(v).
\end{align*}
We now take expectations over $\bv \sim \fD$, and use the $\E[\alpha(\bv)] \leq \alpha_1$ to derive the desired implication.

(2) $\implies$ (3) with $\alpha_3 = 7\alpha_2$ because $\ell_2$ is a $1/2$-nice loss function, so we plug in $B = 1/2$ and $W =2$ into claim (2). 

It remains to prove that $(3) \implies (1)$. We show the contrapositive, that if $\tf$ is not $\alpha_1$ multicalibrated, then $f = \tf$ is not a $(\ell_2, \Lin(\mC, 2), \alpha_3)$-swap agnostic learner.  By the definition of multicalibration, for every $v \in \Im(\tf)$, there exist $c_v$ such that
\begin{align*}
    \E_{\mD|_v}[c_v(\x)(\y - \tf(\x))] &= \alpha(v)\\
    \E_{\fD}[\alpha(\bv)] &\geq \alpha_1. 
\end{align*}
By negating $c_v$ if needed, we may assume $\alpha(v) \geq 0$ for all $v$. We now define the updated hypothesis $h'$ where
\[ h'(x) = v + \alpha(v) c_v(x) \ \text{for} \ x \in \tf^{-1}(v) \]

A standard calculation (included in the Appendix) shows that
\begin{align*} 
\E_{\mD|_v}[(\y - v)^2] - \E_{\mD|_v}[(\y - h'(\x))^2]
\geq  \alpha(v)^2.
\end{align*}
Taking expectation over $\bv \sim \fD$, we have
\begin{align*} 
\E_{\fD}\lt(\E_{\Dv}[(\y - v)^2] - \E_{\Dv}[(\y - h'(\x))^2] \rt) & \geq \E_{\fD}[\alpha(\bv)^2] \geq \E_{\fD}[\alpha(\bv)]^2 \geq \alpha_1^2.
\end{align*}

It remains to show that $v + \alpha_vc_v(x) \in \Lin(\mC, 2)$. Note that 
\[ \alpha_v = \E_{\mD|_v}[c(\x)(\y- v)v] \leq \max|c(\x)| \max(|y - \tf(x)| \leq 1 \] 
since $c(\x), y - v \in [-1, 1]$. 
Hence $h'(x) =  w_1\cdot 1 + w_2 c(v)$ where $|w_1| + |w_2| \leq 2$. This contradicts the definition of an $(\ell_2, \Lin(\mC,2), \alpha_3)$-swap agnostic learner if $\alpha_3 < \alpha_1^2$. 
\end{proof}

\subsection{An algorithm for Swap Agnostic Learning}

The equivalence from Theorem~\ref{thm:main} suggests an immediate strategy for obtaining a swap agnostic learner.
First, learn a swap multicalibrated predictor; then, return the predictor, post-processed to an optimal hypothesis according to $\ell$.
Importantly, $\mathrm{MCBoost}$ reduces the problem of swap multicalibration (and thus, swap agnostic learning) to a standard agnostic learning task.
In all, we can combine the $\mathrm{MCBoost}$ algorithm for a class $\mC$ with a specific loss $\ell$ to obtain a $(\ell,\mC)$-swap agnostic learner.
\begin{corollary}[Informal]
\label{cor:sal}
For any (nice) convex loss $\ell$, hypothesis class $\mC$, and $\eps > 0$,
Algorithm~\ref{alg:sal} returns a $(\ell,\mC,\eps)$-swap agnostic learner from a sample of $m \le \mathrm{VC}(\mC) \cdot \poly(1/\eps)$ data points drawn from $\mD$, after making $\le \poly(1/\eps)$ calls to weak agnostic learner for $\mC$.
\end{corollary}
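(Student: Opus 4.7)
The plan is to realize Algorithm~\ref{alg:sal} as the composition of $\mathrm{MCBoost}$ with the data-free post-processing $k_\ell$, and to extract its guarantee from the chain $(1)\Rightarrow(2)$ in Theorem~\ref{thm:main} followed by Claim~\ref{claim:swap-strong}. Concretely, on input a sample $S \sim \mD^m$ and target error $\eps$, the algorithm runs $\mathrm{MCBoost}$ of Lemma~\ref{lem:algo} with target swap multicalibration parameter $\alpha_1 = \Theta(\eps/(B+1))$ to produce a predictor $\tp$, and then returns the hypothesis $h(x) = k_\ell(\tp(x))$, where $k_\ell$ is the univariate optimum defined in~(\ref{eq:k-ell}). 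Note that $k_\ell$ depends only on the loss, so step~(ii) uses no additional samples.

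For correctness, Lemma~\ref{lem:algo} ensures $\tp$ is $(\mC,\alpha_1)$-swap multicalibrated. Theorem~\ref{thm:main}(2) with $W=1$ then implies that $\tp$ is an $(\mLc(B),\Lin(\mC,1),O((1+B)\alpha_1))$-swap omnipredictor. Since every $c \in \mC$ belongs to $\Lin(\mC,1)$ (as a trivial $1$-sparse combination) and the target $\ell$ lies in $\mLc(B)$, Claim~\ref{claim:swap-strong} yields that $h = k_\ell \circ \tp$ is an $(\ell, \mC, O((1+B)\alpha_1))$-swap agnostic learner; our choice of $\alpha_1$ makes this at most $\eps$. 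The oracle complexity is then immediate: Lemma~\ref{lem:algo} guarantees $\poly(1/\alpha_1) = \poly(1/\eps)$ calls to $\mathrm{WAL}_{\alpha_1}$.

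For the sample complexity, I would appeal to standard uniform convergence over $\mC$: each audit statistic $\E[c(\x)(\y-v)\mid\tp(\x)=v]$ inspected by $\mathrm{MCBoost}$ is an expectation of a bounded product involving a single $c \in \mC$, so a sample of size $\VC(\mC)\cdot\poly(1/\eps)$ suffices to estimate all such statistics within $\eps/\poly$ of their population values simultaneously, by the VC-based generalization of weak agnostic learning. The main technical obstacle I anticipate is that these statistics are conditioned on the event $\{\tp(\x)=v\}$, where $\tp$ itself evolves during boosting and $\Im(\tp)$ may expand. The standard workaround is to carry out $\mathrm{MCBoost}$ on a fixed discretization of $[0,1]$ at granularity $\poly(\eps)$, so that $|\Im(\tp)| = \poly(1/\eps)$ throughout, and to take a union bound over this bucketing, adding only logarithmic factors absorbed into $\poly(1/\eps)$. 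This transfers the sample-level swap multicalibration guarantee to the distribution $\mD$ with only an additive $O(\eps)$ loss, which together with the correctness argument above yields the stated bound.
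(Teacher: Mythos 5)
Your proposal matches the paper's argument: the corollary is derived exactly by composing Lemma~\ref{lem:algo} (MCBoost yields $(\mC,\alpha)$-swap multicalibration from a weak agnostic learner) with the $(1)\Rightarrow(2)$ direction of Theorem~\ref{thm:main} and the specialization of swap omniprediction to swap agnostic learning for the fixed loss $\ell$ via Claim~\ref{claim:swap-strong}. Your added discussion of discretization and VC-based uniform convergence fills in finite-sample details that the paper leaves informal (deferring to \cite{hkrr2018}), and is consistent with how those issues are handled there.
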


\begin{algorithm}[t!]
\caption{Swap Agnostic Learning via MCBoost}\label{alg:sal}
\textbf{Parameters:} loss $\ell$, hypothesis class $\mC$, and $\eps > 0$, let $\alpha = \mathrm{poly}(\eps)$\\
\textbf{Given:}  Dataset $S$ sampled from $\mD$\\
\textbf{Run:}\\
$\tp \gets \mathrm{MCBoost}_{\mC,\alpha}(S)$\\
$h_\ell \gets k_\ell \circ \tp$\\
\textbf{Return:} $h_\ell$
\end{algorithm}

\section{Characterizing Swap Loss OI via Multicalibration}
\label{sec:lossOI}

We show that $(\mL, \mC)$-swap loss OI and $(\partial \mL \circ \mC)$-swap multicalibration are equivalent for nice loss functions. 

\begin{theorem}
\label{thm:swap-eq}
Let $\mL \subseteq \mL(B)$ be a family of $B$-nice loss functions such that $\ell_2 \in \mL$. Then $(\partial \mL \circ \mC, \alpha_1)$-swap multicalibration and $(\mL, \mC, \alpha_2)$-swap loss OI are equivalent.\footnote{Here equivalence means that there are reductions in either direction that lose a multiplicative factor of $(B + 1)$ in the error.}
\end{theorem}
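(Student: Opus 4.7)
}
The plan is to derive a single algebraic identity that rewrites the Swap Loss~OI test as a linear combination of a calibration term and a swap multicalibration term for $\partial \mL \circ \mC$, and then argue both directions from that identity. For any $y \in \zo$ we have $\ell(y,t) = \ell(0,t) + y\,\Pell(t)$, so the $y$-free terms in $u_{\ell,c}(\x,v,\y^*)$ and $u_{\ell,c}(\x,v,\ty)$ cancel and
\[ u_{\ell,c}(\x,v,\y^*) - u_{\ell,c}(\x,v,\ty) = (\y^* - \ty)\bigl(\Pell(k_\ell(v)) - \Pell(c(\x))\bigr). \]
Conditioning on $\tp(\x)=v$ makes $\E[\ty\mid\x] = v$, so $\E_{\Dv}[\ty\cdot g(\x)] = v\,\E_{\Dv}[g(\x)]$ for every $g$, while $\E_{\Dv}[\y^*\cdot g(\x)] = \E_{\Dv}[p^*(\x)g(\x)]$. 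Plugging these in and writing $p^*_v = \E_{\Dv}[\y^*]$ yields the central identity
\begin{equation}
\label{eq:plan-id}
\E_{\Dv}\bigl[u_{\ell,c}(\x,v,\y^*) - u_{\ell,c}(\x,v,\ty)\bigr] = \Pell(k_\ell(v))\,(p^*_v - v) \;-\; \E_{\Dv}\bigl[\Pell(c(\x))\,(\y^* - v)\bigr].
\end{equation}

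For the direction $(\partial \mL \circ \mC)$-swap multicalibration $\Rightarrow$ $(\mL,\mC)$-swap loss~OI, I would fix an arbitrary assignment $\{(\ell_v,c_v)\}$, apply the triangle inequality to \eqref{eq:plan-id}, and use $|\Pell(k_\ell(v))| \leq B$ (bounded difference of nice losses) to bound the OI test by $B|p^*_v - v| + |\E_{\Dv}[\Pell_v(c_v(\x))(\y^* - v)]|$. Since $\mC$ contains the constant $0$ and $\ell_2 \in \mL$ with $\partial\ell_2(0) = 1$, the constant function $1$ belongs to $\partial\mL \circ \mC$, so swap multicalibration applied to the assignment $v\mapsto 1$ controls $\E_\bv|p^*_\bv - \bv| \leq \alpha_1$. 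Averaging over $\bv$ and applying swap multicalibration to the adversarial assignment $v\mapsto \Pell_v \circ c_v \in \partial\mL \circ \mC$ gives $\alpha_2 \leq (B+1)\alpha_1$.

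For the converse, given an adversarial assignment $\{(\ell_v,c_v)\}$ for swap multicalibration, I would rearrange \eqref{eq:plan-id} to express $\E_{\Dv}[\Pell_v(c_v(\x))(\y^* - v)]$ as the OI test plus $\Pell_v(k_{\ell_v}(v))(p^*_v - v)$; the first term is controlled by swap loss~OI, and the second by $B|p^*_v - v|$. The real task is to extract calibration from swap loss~OI alone, which is where $\ell_2 \in \mL$ is essential. Specializing \eqref{eq:plan-id} to $\ell = \ell_2$ (so $k_{\ell_2}(v)=v$, $\partial\ell_2(t)=1-2t$) and a constant hypothesis $c_v \in \{0,1\}$ collapses the right-hand side to $2(c_v - v)(p^*_v - v)$; choosing $c_v = 0$ when $v \geq 1/2$ and $c_v = 1$ when $v < 1/2$ makes $|2(c_v - v)| \geq 1$, so swap loss~OI yields $\E_\bv|p^*_\bv - \bv| \leq \alpha_2$. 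Combining gives $\alpha_1 \leq (B+1)\alpha_2$.

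The main obstacle I expect is the derivation of \eqref{eq:plan-id}: once it is in hand, both directions reduce to one triangle inequality plus the calibration trick. A small subtlety worth handling carefully is the converse direction's use of a $v$-dependent constant $c_v \in \{0,1\}$ to extract calibration---one must verify the full swap loss~OI definition permits such per-level-set choices (it does, since constants lie in $\mC$), and keep track of the $(B+1)$ factor in both reductions so it matches the footnote in the theorem statement.
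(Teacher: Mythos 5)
Your proposal is correct and follows essentially the same route as the paper: the identity \eqref{eq:plan-id} is exactly the paper's decomposition of the OI test into a calibration term weighted by $\partial\ell(k_\ell(v))$ (bounded by $B$) and the $\partial\mL\circ\mC$-swap multicalibration term (the paper packages this as Lemma 4.8 of \cite{lossoi} plus Corollaries~\ref{cor:hyp-oi} and~\ref{cor:dec-oi}), and your extraction of calibration from swap loss OI via $\ell_2$ with constant hypotheses is the paper's Lemma~\ref{lem:imp-cal}, with the minor cosmetic difference that you use a single $v$-dependent choice of $c_v\in\zo$ where the paper sums the bounds obtained from $c_v\equiv 0$ and $c_v\equiv 1$. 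Both yield the same $(B+1)$ loss in each direction.
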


In preparation for this, we first give the proof of Lemma~\ref{lem:swap-strong2}.
\begin{proof}[Proof of Lemma \ref{lem:swap-strong2}]
The proof of Part (1) follows the proof of \cite[Proposition 4.5]{lossoi}, showing that loss OI implies omniprediction. By the definition of $k_{\ell_v}$, for every $x \in \X$ such that $\tp(x) = v$
\begin{align*} 
    \E_{\ty \sim \Ber(v)} u_v(x, v, \ty) &= \E_{\ty \sim \Ber(v)}[\ell_v(\ty, k_{\ell_v}(v)) -  \ell_v(\ty, c_v(x))]\\
    &= \ell_v(v, k_{\ell_v}(v)) -  \ell_v(v, c_v(x))\\
    & \leq 0
\end{align*}
Hence this also holds in expectation under $\mD|_v$, which only considers points where $\tp(\x) =v$:
\[ \E_{\mD|_v} [u_v(\x, v, \ty)] \leq 0. \]

Since $\tf$ satisfies swap loss OI, we deduce that
\begin{align*} 
\E_{\mD|_v}[u_\bv(\x, v, \y^*)]  \leq \alpha(v)
\end{align*}
Taking expectations over $\bv \sim \fD$ and using the definition of $u_v$, we get 
\begin{align*}
 \E_{\bv \sim  \fD} \E_\Dv[\ell_\bv(\y^*, k_{\ell_\bv}(\bv)) - \ell_\bv(\y^*, c_\bv(\x))] &= \E_{\bv \sim \fD}  \E_{\mD}[u_\bv(\x, \bv, \y^*)] \\
& \leq \E_{\bv \sim \fD}[\alpha(\bv)] \leq \alpha
\end{align*}
Rearranging the outer inequality gives
\[ \E_{\bv \sim \fD} \E_{\Dv} [\ell_\bv(\ty, k_{\ell_\bv}(\bv))] \leq  \E_{\bv \sim \fD}\E_{\Dv}[\ell_\bv(\y^*, c_\bv(\x))] +  \alpha.\]

Part (2) is implied by taking $\ell_v = \ell$ for every $v$. 
\end{proof}

\begin{proof}[Proof of Lemma \ref{lem:swap-strong2}]
The proof of Part (1) follows the proof of \cite[Proposition 4.5]{lossoi}, showing that loss OI implies omniprediction. By the definition of $k_{\ell_v}$, for every $x \in \X$ such that $\tp(x) = v$
\begin{align*} 
    \E_{\ty \sim \Ber(v)} u_v(x, v, \ty) &= \E_{\ty \sim \Ber(v)}[\ell_v(\ty, k_{\ell_v}(v)) -  \ell_v(\ty, c_v(x))]\\
    &= \ell_v(v, k_{\ell_v}(v)) -  \ell_v(v, c_v(x))\\
    & \leq 0
\end{align*}
Hence this also holds in expectation under $\mD|_v$, which only considers points where $\tp(\x) =v$:
\[ \E_{\mD|_v} [u_v(\x, v, \ty)] \leq 0. \]

Since $\tf$ satisfies swap loss OI, we deduce that
\begin{align*} 
\E_{\mD|_v}[u_\bv(\x, v, \y^*)]  \leq \alpha(v)
\end{align*}
Taking expectations over $\bv \sim \fD$ and using the definition of $u_v$, we get 
\begin{align*}
 \E_{\bv \sim  \fD} \E_\Dv[\ell_\bv(\y^*, k_{\ell_\bv}(\bv)) - \ell_\bv(\y^*, c_\bv(\x))] &= \E_{\bv \sim \fD}  \E_{\mD}[u_\bv(\x, \bv, \y^*)] \\
& \leq \E_{\bv \sim \fD}[\alpha(\bv)] \leq \alpha
\end{align*}
Rearranging the outer inequality gives
\[ \E_{\bv \sim \fD} \E_{\Dv} [\ell_\bv(\ty, k_{\ell_\bv}(\bv))] \leq  \E_{\bv \sim \fD}\E_{\Dv}[\ell_\bv(\y^*, c_\bv(\x))] +  \alpha.\]

Part (2) is implied by taking $\ell_v = \ell$ for every $v$. 
\end{proof}

We use the following simple claim from \cite{lossoi}.

\begin{claim}[Lemma 4.8, \cite{lossoi}]
    For random variables $\y_1, \y_2 \in \zo$ and $t \in \R$, 
    \begin{align}
    \label{eq:partial}
        \E[\ell(\y_1, t) - \ell(\y_2, t)] = \E[(\y_1 - \y_2)\Pell(t)].
    \end{align} 
\end{claim}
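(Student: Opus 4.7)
The plan is to prove the identity pointwise for every realization of the underlying randomness, and then take expectations. The key observation is that because $\y_i$ takes values only in $\zo$, the loss $\ell(\y_i, t)$ is an affine function of $\y_i$, so the difference $\ell(\y_1, t) - \ell(\y_2,t)$ collapses into the $\partial \ell$ form essentially by definition.

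Concretely, I would first record the following elementary identity: for any $y \in \zo$ and any $t \in \R$,
\[
    \ell(y,t) \;=\; (1-y)\,\ell(0,t) + y\,\ell(1,t) \;=\; \ell(0,t) + y\bigl(\ell(1,t) - \ell(0,t)\bigr) \;=\; \ell(0,t) + y \cdot \Pell(t),
\]
which is just a rewriting of the fact that any function of a $\zo$-valued variable is affine in that variable, combined with the definition $\Pell(t) = \ell(1,t) - \ell(0,t)$.

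Applying this identity to $\y_1$ and $\y_2$ separately and subtracting, the $\ell(0,t)$ terms cancel, yielding the pointwise identity
\[
    \ell(\y_1, t) - \ell(\y_2, t) \;=\; (\y_1 - \y_2)\,\Pell(t).
\]
Taking expectations of both sides gives the desired equation \eqref{eq:partial}. Note that the argument is fully pointwise, so it does not matter whether $t$ is deterministic or a random variable (possibly correlated with $\y_1,\y_2$): the identity holds for every realization before the expectation is taken.

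There is no real obstacle here; the lemma is an algebraic manipulation exploiting binariness of the labels, and the only thing to be careful about is making explicit that the reduction to $\partial \ell$ uses exactly the $\zo$-valued assumption (for general real-valued $\y_i$ this rewriting would fail since $\ell(y,t)$ need not be linear in $y$).
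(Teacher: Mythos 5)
Your proof is correct and is the standard argument: the paper itself does not reprove this claim (it cites Lemma 4.8 of \cite{lossoi}), but the intended derivation is exactly your pointwise affine decomposition $\ell(y,t) = \ell(0,t) + y\,\Pell(t)$ for $y \in \zo$, followed by cancellation and taking expectations. Your remark that the identity holds pointwise even when $t$ is random and correlated with $\y_1,\y_2$ is a worthwhile clarification, since the claim is applied with $t = c(\x)$ or $t = k_\ell(\tf(\x))$.
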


We record two corollaries of this claim. These can respectively be seen as strengthenings of the two parts of Theorem \cite[Theorem 4.9]{lossoi}, which respectively characterized hypothesis OI in terms of multiaccuracy and decision OI in terms of calibration. We generalize these to the swap setting.

\begin{corollary}
\label{cor:hyp-oi}
For every choice of $\{\ell_v, c_v\}_{v \in \Im(\tp)}$, we have
\begin{align}
\label{eq:umc}
    \E_{\bv \sim \fD} \lt[ \lt| \E_{\Dv} [\ell_\bv(\y^*, c_\bv(\x))- \ell_\bv(\ty, c_\bv(\x))] \rt| \rt] &= \E_{\bv \sim \fD} \lt[ \lt| \E_{\Dv}[(\y^* - \ty) \partial \ell_\bv \circ c_\bv(\x)] \rt|\rt].
\end{align}
Hence if $\tf$ is $(\partial \mL \circ \mC, \alpha)$-swap multicalibrated, then 
    \begin{align*}
        \E_{\bv \sim \fD} \lt[  \lt|\E_{\Dv} [\ell_\bv(\y^*, c_\bv(\x))- \ell_\bv(\ty, c_\bv(\x))] \rt| \rt]  \leq \alpha. 
    \end{align*}
\end{corollary}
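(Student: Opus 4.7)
The plan is to prove the corollary in two steps: first establish the equality~\eqref{eq:umc} by pointwise application of identity~\eqref{eq:partial}, and then recognize the resulting right-hand side as exactly the swap multicalibration violation of $\tf$ against the adversarial auditor assignment $\{h_v = \partial \ell_v \circ c_v\}_{v \in \Im(\tf)}$.

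For the equality, I would fix $v \in \Im(\tf)$ and condition on $\x$. On this event, $c_v(\x)$ is a deterministic real number while $\y^*$ and $\ty$ are $\zo$-valued random variables, so identity~\eqref{eq:partial} applied with $\y_1 = \y^*$, $\y_2 = \ty$, and $t = c_v(\x)$ gives
\[ \E\big[\ell_v(\y^*, c_v(\x)) - \ell_v(\ty, c_v(\x)) \,\big|\, \x\big] = \E\big[(\y^* - \ty)\,\partial \ell_v(c_v(\x)) \,\big|\, \x\big]. \]
Taking expectation over $\x \sim \mD|_v$, then absolute values, and finally expectation over $\bv \sim \fD$ yields~\eqref{eq:umc}.

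For the inequality, the plan is to exploit the defining property of the simulated label, namely $\E[\ty \mid \x] = \tf(\x) = \bv$ on $\Dv$, to collapse the $\ty$-term. For any $h : \X \to \R$ this gives
\[ \E_{\Dv}[(\y^* - \ty)\,h(\x)] = \E_{\Dv}[\y^* h(\x)] - \bv\,\E_{\Dv}[h(\x)] = \E_{\Dv}[h(\x)(\y^* - \bv)]. \]
Instantiating $h = \partial \ell_\bv \circ c_\bv$, which belongs to $\partial \mL \circ \mC$ for every $\bv$, and bounding the absolute value pointwise by $\max_{h \in \partial \mL \circ \mC} |\E_{\Dv}[h(\x)(\y^* - \bv)]|$, the conclusion follows by taking expectation over $\bv \sim \fD$ and invoking the hypothesis that $\tf$ is $(\partial \mL \circ \mC, \alpha)$-swap multicalibrated.

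No serious obstacle arises; the corollary is essentially a dictionary translation between the loss-OI language and the multicalibration language. The only conceptual point worth flagging is that the auditor $h_v = \partial \ell_v \circ c_v$ is allowed to depend on the level set $v$, which is precisely the additional flexibility that the \emph{swap} formulation of multicalibration grants over the standard one — without that flexibility, the two sides of the correspondence would not line up.
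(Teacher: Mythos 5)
Your proposal is correct and follows essentially the same route as the paper's proof: apply the identity $\E[\ell(\y_1,t)-\ell(\y_2,t)]=\E[(\y_1-\y_2)\partial\ell(t)]$ conditionally to get the equality, then bound the right-hand side by the maximum over $\partial\mL\circ\mC$ and invoke swap multicalibration. The only difference is that you make explicit the step $\E_{\Dv}[(\y^*-\ty)h(\x)]=\E_{\Dv}[h(\x)(\y^*-\bv)]$ via $\E[\ty\mid\x]=\bv$, which the paper leaves implicit; this is a useful clarification, not a different argument.
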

\begin{proof}
Equation \eqref{eq:umc} is derived by applying Equation \eqref{eq:partial} to the LHS. Assuming that $\tf$ is $(\partial \mL \circ \mC, \alpha)$-swap multicalibrated, we have
\begin{align*}
 \E_{\bv \sim \fD} \lt[ \lt| \E_{\Dv}[(\y^* - \ty) \partial \ell_\bv \circ c_\bv(\x)] \rt|\rt] & \leq  \E_{\bv \sim \fD} \lt[ \lt| \max_{c' \in \partial \mL \circ \mC} \E_{\Dv}[(\y^* - \ty) c'(\x)] \rt|\rt] \leq \alpha.
\end{align*}
\end{proof}

\begin{corollary}
\label{cor:dec-oi}
    Let $\{\ell_v\}_{v \in \Im(f)}$ be a collection of loss $B$-nice loss functions. Let $k(v) = k_{\ell_v}(v)$. 
    If $\tp$ is $\alpha$-calibrated then 
    \begin{align}
        \E_{\bv \sim \fD}\lt[ \lt| \E_\Dv[\ell_\bv(\y^*, k(\bv)) - \ell_\bv(\ty, k(\bv))] \rt|\rt] & \leq  B\alpha. 
    \end{align}
\end{corollary}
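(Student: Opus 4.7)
The plan is to reduce the corollary to a direct application of the identity in Equation~\eqref{eq:partial} together with the definition of calibration. First, I would fix $v \in \Im(\tf)$ and observe that since $k(v) = k_{\ell_v}(v)$ is a deterministic function of $v$, it is a constant on the conditional distribution $\mD|_v$. Invoking \eqref{eq:partial} with $\y_1 = \y^*$, $\y_2 = \ty$, and $t = k(v)$ yields
\[
\E_\Dv[\ell_v(\y^*, k(v)) - \ell_v(\ty, k(v))] = \partial\ell_v(k(v)) \cdot \E_\Dv[\y^* - \ty],
\]
where I have pulled the constant $\partial\ell_v(k(v))$ out of the expectation.

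Next I would evaluate each factor. For the label gap, the simulator property gives $\ty \mid \tf(\x) = v \sim \Ber(v)$, so $\E_\Dv[\ty] = v$, while $\E_\Dv[\y^*] = p^*_v$ by definition, hence $\E_\Dv[\y^* - \ty] = p^*_v - v$. For the derivative factor, since $\ell_v$ is $B$-nice, its optimizer $k_{\ell_v}(v)$ lies in $I_{\ell_v}$ (by the optimality clause we may assume $k_\ell : [0,1] \to I_\ell$), and the bounded difference condition in Definition~\ref{def:nice-loss} then gives $|\partial\ell_v(k(v))| \leq B$.

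Putting these two bounds together yields, for each $v$,
\[
\lt| \E_\Dv[\ell_v(\y^*, k(v)) - \ell_v(\ty, k(v))] \rt| \leq B \cdot |p^*_v - v|.
\]
Taking expectation over $\bv \sim \fD$ and recalling that $\alpha$-calibration of $\tp$ means $\E_{\bv \sim \fD}[|p^*_\bv - \bv|] \leq \alpha$ completes the argument.

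I do not anticipate a real obstacle here: the calculation is essentially a specialization of Corollary~\ref{cor:hyp-oi} to the case where the ``hypothesis'' is the constant $k(v)$, so that the needed multi-group guarantee collapses to a single-group calibration bound. The only point requiring a bit of care is verifying that $k(v) \in I_{\ell_v}$ in order to apply the bounded-difference clause of niceness, but this is guaranteed by the optimality property of $B$-nice losses.
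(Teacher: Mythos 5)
Your proof is correct and follows essentially the same route as the paper: apply the identity of Equation~\eqref{eq:partial} with $t = k(v)$ constant on $\mD|_v$, pull out $\partial\ell_v(k(v))$ and bound it by $B$ via the optimality and bounded-difference clauses of niceness, replace $\E_\Dv[\ty]$ by $v$, and finish with the calibration bound in expectation over $\bv \sim \fD$. No gaps.
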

\begin{proof}
    We have
    \begin{align*}
        \E_{\bv \sim \fD}\lt[ \lt| \E_\Dv[\ell_\bv(\y^*, k(\bv)) - \ell_\bv(\ty, k(\bv))] \rt|\rt] &= \E_{\bv \sim \fD}\lt[ \lt| \E_\Dv[(\y^* - \bv) \partial \ell_\bv(k(\bv))]\rt|\rt]\\
        &= \E_{\bv \sim \fD}\lt[  |\partial \ell_\bv(k(\bv))| \lt|\E_\Dv[\y^* - \bv]\rt|\rt]\\
        &\leq B\E_{\bv \sim \fD} \lt[ \lt| \E_\Dv[\y^* - \bv] \rt|\rt] \\
        & \leq B\alpha. 
        \end{align*}
        where we use the fact that $k(v) \in I_\ell$, and so $|\partial \ell_v(k(v))| \leq B$. 
\end{proof}

Finally, we show the following key technical lemma which explains why the $\ell_2$ loss has a special role.

\begin{lemma}
\label{lem:imp-cal}
    If $\tf$ is $(\{\ell_2\}, \mC, \alpha)$-swap OI, then it is $\alpha$-calibrated.
\end{lemma}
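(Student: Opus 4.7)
The plan is to apply $(\{\ell_2\}, \mC, \alpha)$-swap loss OI with a carefully chosen assignment of constant hypotheses $c_v \in \{0,1\} \subseteq \mC$ in order to extract the calibration error of $\tf$. The loss family contains only $\ell_2$, so no choice of $\ell_v$ is needed, and since the mean minimizes squared error under $\Ber(v)$ we have $k_{\ell_2}(v) = v$; hence $u_v(x, v, y) = (y-v)^2 - (y-c_v(x))^2$.

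First I would invoke Equation~\eqref{eq:partial} applied to $\ell_2$. Since $\partial \ell_2(t) = (1-t)^2 - t^2 = 1 - 2t$, a short calculation yields
\[ \E_{\Dv}[u_\bv(\x, \bv, \y^*) - u_\bv(\x, \bv, \ty)] \;=\; 2\E_{\Dv}[(\y^* - \ty)(c_\bv(\x) - \bv)]. \]
Using $\E[\ty \mid \x] = \tf(\x)$, which equals $\bv$ after conditioning on $\tf(\x) = \bv$, and $\E[\y^* \mid \x] = p^*(\x)$, the right-hand side becomes $2\E_{\Dv}[(p^*(\x)-\bv)(c_\bv(\x)-\bv)]$. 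When $c_\bv$ is a constant hypothesis, this further simplifies to $2(c_\bv - \bv)(p^*_\bv - \bv)$, where $p^*_\bv = \E_{\Dv}[\y^*]$.

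The key step is the adversarial choice of a constant hypothesis per prediction value: for each $\bv \in \Im(\tf)$, I would pick $c_\bv = 1$ if $\bv < 1/2$ and $c_\bv = 0$ otherwise, so that $|c_\bv - \bv| \geq 1/2$. Both constants lie in $\mC$ by assumption, so this is a valid assignment. Consequently,
\[ \lt| \E_{\Dv}[u_\bv(\x, \bv, \y^*) - u_\bv(\x, \bv, \ty)] \rt| \;=\; 2|c_\bv - \bv|\cdot|p^*_\bv - \bv| \;\geq\; |p^*_\bv - \bv|. \]
Taking expectation over $\bv \sim \fD$ and applying swap loss OI to this assignment yields $\E_{\bv \sim \fD}[|p^*_\bv - \bv|] \leq \alpha$, which is exactly $\alpha$-calibration.

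There is no substantive obstacle: the calculation is short because $\partial \ell_2$ is affine and $k_{\ell_2}$ is the identity. The one point worth highlighting is that allowing $c_\bv$ to depend on $\bv$ (which is precisely what the swap formulation permits) is essential---a standard non-swap loss OI guarantee restricted to a single constant $c \in \{0,1\}$ would lose control of $|c - \bv|$ on roughly half of $\Im(\tf)$, and so could not recover calibration on its own.
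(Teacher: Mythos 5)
Your proof is correct, and it rests on the same core ingredients as the paper's: $k_{\ell_2}$ is the identity, $\partial\ell_2$ is affine, and the constant functions $0,1\in\mC$ serve as the distinguishers. The one genuine difference is how the constants are deployed. The paper uses two \emph{uniform} assignments ($c_v=0$ for all $v$, then $c_v=1$ for all $v$), obtains $\E_{\bv}\bigl[\bv\,\lvert\E_{\Dv}[\ty-\y^*]\rvert\bigr]\le\alpha$ and $\E_{\bv}\bigl[(1-\bv)\,\lvert\E_{\Dv}[\ty-\y^*]\rvert\bigr]\le\alpha$, and adds them; there the swap structure is used only through the placement of the absolute value inside the expectation over $\bv$, and the summation formally costs a factor of $2$. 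You instead make a single $v$-dependent choice ($c_v=1$ when $v<1/2$, else $c_v=0$) guaranteeing $\lvert c_v-v\rvert\ge 1/2$, which uses the swap quantifier in the more literal sense and invokes the guarantee only once, yielding the bound in one shot. The only caveat is a normalization ambiguity in the paper: the preliminaries define $\ell_2(y,t)=(y-t)^2$ (under which your constants are exact), while the paper's own proof of this lemma uses $\ell_2(y,t)=(y-t)^2/2$ (under which your argument gives $2\alpha$-calibration); either way the discrepancy is an absolute constant and immaterial to how the lemma is used in Theorem~\ref{thm:swap-eq}. Your closing remark about why non-swap loss OI with a single fixed constant cannot recover calibration is correct, though the obstruction is not only the loss of control over $\lvert c-\bv\rvert$ on half the range --- without the swap, the inner absolute value also sits outside the expectation over $\bv$, permitting cancellation of calibration errors of opposite signs across level sets.
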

\begin{proof}
    Observe that $\ell_2(y, v) = (y -v)^2/2$ so $k_{\ell_2}(v) = v$. Hence, 
    \begin{align} 
    u_{\ell_2, 0}(x, v, y) &= \ell_2(y, k_\ell(v)) - \ell_2(y, 0)\notag\\
    & = ((y -v)^2 - y^2)/2 \notag\\
    &= -vy + v^2/2. \label{eq:v=0}
    \end{align}

    Recall that $\zo \subset \mC$. The implication of swap loss OI when we take $c_v = 0$ for all $v$ is that
    \begin{align*}
    \E_{\bv \sim \fD} \lt[\lt| \E_{\Dv}[u_{\ell_2, 0}(\x, \bv, \y^*)- u_{\ell_2, 0}(\x, \bv, \ty) ] \rt|\rt]  \leq \alpha.
    \end{align*}
    We can simplify the LHS using Equation \eqref{eq:v=0} to derive 
    \begin{align}
    \E_{\bv \sim \fD} \lt[\lt| \E_{\Dv}[(-\bv \y^* + \bv^2/2) - (-\bv \ty  + \bv^2/2)]\rt| \rt]
    &= \E_{\bv \sim \fD} \lt[\lt| \E_{\Dv} [\bv(\y^* - \ty)] \rt|\rt]\notag\\
    &= \E_{\bv \sim \fD} \lt[ \bv \lt|\E_{\Dv}[\ty - \y^*] \rt| \rt]  \leq \alpha. \label{eq:cons-0}
    \end{align}
    Considering the case where $c_v = 1$ for all $v$ gives
    \begin{align*}
        u_{\ell_2, 1}(x, v, y) &= \ell_2(y, k_\ell(v)) - \ell_2(y, 1)\\
        &= ((y -v)^2 - (1- y)^2)/2 \\
        &= (1-v)y +  (v^2-1)/2.    
    \end{align*}
    We derive the following implication of swap loss OI by taking $c_v = 0$ for all $v$:
    \begin{align}
    \E_{\bv \sim \fD} \lt[\lt| \E_{\Dv}[u_{\ell_2, 1}(\x, \bv, \y^*)- u_{\ell_2, 1}(\x, \bv, \ty) ] \rt|\rt]
    &= \E_{\bv \sim \fD} \lt[ (1 - \bv) \lt|\E_\Dv[\ty - \y^*] \rt| \rt] \leq \alpha \label{eq:cons-1}
    \end{align}
    Adding the  bounds from Equations \eqref{eq:cons-0} and \eqref{eq:cons-1} we get
     \begin{align*}
     \E_{\bv \sim \fD} \lt[ \lt| \E_\Dv[\bv - \y^*] \rt| \rt]  = \E_{\bv \sim \fD} \lt[ \lt| \E_\Dv[\ty - \y^*] \rt| \rt] \leq \alpha
     \end{align*}
\end{proof}

We can now complete the proof of Theorem \ref{thm:swap-eq}.

\begin{proof}[Proof of Theorem \ref{thm:swap-eq}]
We first show the forward implication, that swap multicalibration implies swap loss OI.

    Since $\ell_2 \in \mL$ and $1 \in  \mC$, we have $\partial \ell_2 \circ 1 = 1 \in \partial \mL \circ \mC$. This implies that $\tp$ is $\alpha$-mulitcalibrated, since 
    \[ \E_{\bv \sim \fD} \lt[ \lt|\E_{\Dv} [1(\y - \bv)]  \rt| \rt] \leq \E_{\bv \sim \fD} \lt[ \max_{c \in \mC} \lt|\E_{\Dv} [c(\x)(\y - \bv)]  \rt| \rt] \leq \alpha. \] 
    
    Consider any collection of losses $\{\ell_v\}_{v \in \Im(\tp)}$. Applying \Cref{cor:dec-oi}, we have
    \begin{align*}
    \E_{\bv \sim \fD}\lt[ \lt| \E_\Dv[\ell_\bv(\y^*, k(\bv)) - \ell_\bv(\ty, k(\bv))] \rt|\rt] & \leq B\alpha. 
    \end{align*}
    
    On the other hand, by Corollary \ref{cor:hyp-oi}, we have for every choice of $\{\ell_v, c_v\}_{v \in \Im(\tp)}$, 
    \begin{align*}
        \E_{\bv \sim \fD} \lt[  \lt|\E_{\Dv} [\ell_\bv(\y^*, c_\bv(\x))- \ell_\bv(\ty, c_\bv(\x))] \rt| \rt] \leq \alpha. 
    \end{align*}
    Hence for any choice of $\{u_v\}_{v \in \Im(\tf)}$ we can bound
    \begin{align*}
    & \E_{\bv \sim \fD} \lt| \E_{\Dv}[u_\bv(\x, \bv, \y^*)- u_\bv(\x, \bv, \ty)] \rt| \\
    & \leq \E_{\bv \sim \fD} \lt[ \lt| \E_{\Dv}[\ell_\bv(\y^*, k(\bv))- \ell_\bv(\ty, k(\bv)] \rt| + \lt|\E_{\Dv} [\ell_\bv(\y^*, c_\bv(\x))- \ell_\bv(\ty, c_\bv(\x))] \rt| \rt] \\
    & \leq (B+1)\alpha
\end{align*}  
which shows that $\tp$ satisfies swap loss OI with $\alpha_2 = (D +1)\alpha_1$. 

Next we show the reverse implication: if $\tp$ satisfies $(\mL, \mC, \alpha_2)$-swap loss OI, then it satisfies $(\partial \mL \circ \mC, \alpha_1)$-swap multicalibration. The first step is to observe that by \cref{lem:imp-cal}, since $\ell_2 \in \mL$, the predictor $\tf$ is $\alpha_2$ calibrated. Since any $\ell \in \mL$ is $B$-nice, we have 
\begin{align*} 
\E_{\bv \sim \fD}\lt[ \lt| \E_{\Dv}[ \ell_\bv(\y^*, k(\bv)) - \ell_\bv(\ty, k(\bv))] \rt|\rt] = \E_{\bv \sim \fD} \lt[ \lt| \E_{\Dv}[(\y^* - \ty) k(\bv)]\rt|\rt] \leq  B \alpha_2.
\end{align*}
For any $\{\ell_v, c_v\}_{v \in \Im(f)}$, since
\[ u_v(x, v, y) = \ell_v(y, k_\ell(v)) + \ell_v(y, c_v(x))\]
we can write
\begin{align*} 
& \E_{\bv \sim \fD} \lt[ \lt| \E_{\Dv}[ \ell_\bv(\y^*, c(\x)) - \ell_\bv(\ty, c(\x))] \rt| \rt] \\
& \leq \E_{\bv \sim \fD} \lt[ \lt| \E_{\Dv}[ u_{\bv}(\x, \bv, \y^*)  - u_{\bv}(\x, \bv, \ty)]\rt|  + \lt| \E_{\Dv}[ \ell_\bv(\y^*, k(\bv)) - \ell_\bv(\ty, k(\bv))] \rt| \rt]\\
& \leq (B + 1)\alpha_2.
\end{align*}
But by Equation \eqref{eq:umc}, the LHS can be written as
\[ \E_{\bv \sim \fD} \lt[ \lt| \E_{\Dv}[ \ell_\bv(\y^*, c(\x)) - \ell_\bv(\ty, c(\x))] \rt| \rt]  = \E_{\bv \sim \fD} \lt[ \lt| \E_{\Dv}[\partial \ell_\bv \circ c_\bv(\x) (\y^* -\bv)]  \rt| \rt]\]
This shows that $\tp$ is $(\partial \mL \circ \mC, (B+1)\alpha_2)$-swap multicalibrated.
\end{proof}

 \section{Relating Notions of Omniprediction}
\label{sec:example}

In this work, we have discussed the four different notions of omniprediction defined to date.
\begin{itemize}
\item[00)] Omniprediction, as originally defined by \cite{omni}.
\item[01)] Loss OI, from \cite{lossoi}.
\item[10)] Swap omniprediction.
\item [11)] Swap Loss OI.
\end{itemize}
In order to compare them, we can ask which of these notions implies the other for any fixed choice of loss class $\mL$ and hypothesis class $\mC$.
\begin{itemize}
\item Loss OI implies omniprediction by \cite[Proposition 4.5]{lossoi}. 
\item Swap omniprediction implies omniprediction by Claim~\ref{claim:swap-strong}.
\item Swap loss OI implies both loss OI and swap multicalibration by Lemma \ref{lem:swap-strong2}.
\end{itemize}
These relationships were summarized in Figure~\ref{fig:rel}.
Further, this picture captures all the implications that hold for all $(\mL, \mC)$.
Next, we show that for any implication not drawn in the diagram, there exists some (natural) choice of $(\mL, \mC)$, where the implication does not hold.
In particular, we prove that neither loss OI nor swap omniprediction implies the other for all $(\mL, \mC)$.
This separates these notions from swap loss OI, since swap loss OI implies both these notions.\footnote{For instance if loss OI implied swap loss OI, it would also imply swap omniprediction, which our claim shows it does not.}
By similar reasoning, it separates omniprediction from both these loss OI and swap omniprediciton, since omniprediction is implied by either of them.

\paragraph{Swap omniprediction does not imply loss OI.}
We prove this non-implication using a counterexample used in \cite{lossoi}.
In particular, they show that omniprediction does not imply loss OI \cite[Theorem 4.6]{lossoi}, and the same example in fact shows that swap omniprediction does not imply loss OI.
In their example, we have $\mD$ on $\pmo^3 \times [0,1]$ where the marginal on $\pmo^3$ is uniform, and $p^*(x) = (1 + x_1x_2x_3)/2$, whereas  $\tp(x) =1/2$ for all $x$. We take $\mC = \{1, x_1, x_2, x_3\}$.
Since $\tp = 1/2$ is constant, it is easy to check that $\tp - p^* = -x_1x_2x_3/2$ is uncorrelated with $\mC$.
Hence $\tp$ satisfies swap multicalibration (which is the same as multicalibration or even multiaccuracy in this setting where $\tp$ is constant).
Hence by Theorem \ref{thm:main}, $\tp$ is an $(\mLc(1), \mLC, 0)$-swap omnipredictor. 
\cite[Theorem 4.6]{lossoi} prove that $\tp$ is not loss OI for the $\ell_4$ loss.
Hence we have the following result.
\begin{lemma}
\label{lem:sep1}
    The predictor $\tp$ is $(\mC, 0)$-swap multicalibrated and hence it is a $(\{\ell_4\}, \Lin(\mC), 0)$-swap omnipredictor. But it is not $(\{\ell_4\}, \Lin(\mC, 1), \eps)$-loss OI for $\eps < 4/9$. 
\end{lemma}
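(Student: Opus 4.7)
\textbf{Proof Plan for Lemma~\ref{lem:sep1}.}

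The plan has two independent parts. First I would verify swap multicalibration by direct computation. Since $\tp \equiv 1/2$ has the single image value $v = 1/2$, the conditional distribution $\mD|_v$ coincides with $\mD$, so swap multicalibration reduces to checking that $|\E[c(\x)(\y - 1/2)]| = 0$ for every $c \in \mC = \{1, x_1, x_2, x_3\}$. Using $\E[\y \mid \x = x] = p^*(x) = (1 + x_1 x_2 x_3)/2$, one has $\E[(\y - 1/2) \mid \x = x] = x_1 x_2 x_3 / 2$, so the required expectation becomes $\tfrac{1}{2} \E_\x[c(\x)\, x_1 x_2 x_3]$. For $c = 1$ this is $\E[x_1 x_2 x_3] = 0$, and for $c = x_i$ it is $\E[x_i \cdot x_1 x_2 x_3] = \E[\prod_{j \neq i} x_j] = 0$, by independence and uniformity on $\pmo^3$. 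Hence $\tp$ is $(\mC, 0)$-swap multicalibrated.

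Second, I would invoke Theorem~\ref{thm:main}. Since $\ell_4(y,t) = (y-t)^4$ is convex and satisfies the niceness conditions of Definition~\ref{def:nice-loss} with $I_{\ell_4} = [0,1]$ and some constant $B$, we have $\ell_4 \in \mLc(B)$. Theorem~\ref{thm:main} with $\alpha_1 = 0$ then gives that $\tp$ is an $(\mLc(B), \Lin(\mC, W), 0)$-swap omnipredictor for every $W \ge 1$; since the error is zero and $\Lin(\mC) = \bigcup_W \Lin(\mC, W)$, this yields the claimed $(\{\ell_4\}, \Lin(\mC), 0)$-swap omniprediction.

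For the non-implication, I would appeal directly to \cite[Theorem 4.6]{lossoi}, which exhibits exactly this distribution, predictor, and class, and constructs a hypothesis in $\Lin(\mC, 1)$ witnessing a violation of $(\{\ell_4\}, \Lin(\mC, 1))$-loss OI of magnitude at least $4/9$. The only thing to verify for self-containedness is that their bound really transfers to our formulation: the test function $u_{\ell_4, c}(x, 1/2, y) = (y - 1/2)^4 - (y - c(x))^4$ satisfies
\[ \E_{\mD}[u_{\ell_4, c}(\x, 1/2, \y^*) - u_{\ell_4, c}(\x, 1/2, \ty)] = \E_\x\!\left[\tfrac{x_1 x_2 x_3}{2}\bigl(c(\x)^4 - (1 - c(\x))^4\bigr)\right], \]
using $k_{\ell_4}(1/2) = 1/2$ and the form of $p^*$. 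One then plugs in the specific $c \in \Lin(\mC, 1)$ from \cite{lossoi} (a symmetric linear combination of $x_1, x_2, x_3$ designed so that $c(\x)^4 - (1 - c(\x))^4$ correlates with $x_1 x_2 x_3$) and evaluates the resulting Fourier-type expectation.

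The main obstacle, if one does not simply cite \cite[Theorem 4.6]{lossoi}, is choosing this witness $c$ and carrying out the arithmetic to pin the constant at $4/9$; the challenge is that $c^4 - (1-c)^4$ is a degree-$3$ polynomial in $c(x)$, so only its cubic component can correlate with the parity function $x_1 x_2 x_3$, and one must optimize the coefficients within the $\ell_1$-ball of radius $1$ to extract the stated constant. Since the counterexample is already established in prior work, I would present this as a citation, noting only that the two parts together yield the desired separation.
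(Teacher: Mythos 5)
Your proposal is correct and follows essentially the same route as the paper: verify that $\tp - p^* = -x_1x_2x_3/2$ is uncorrelated with $\mC$ (trivial since $\tp$ is constant, so swap multicalibration coincides with multiaccuracy here), invoke Theorem~\ref{thm:main} to get swap omniprediction with zero error, and cite \cite[Theorem 4.6]{lossoi} for the $4/9$ loss-OI violation. Your extra sketch of how the distinguishing advantage reduces to a Fourier-type expectation over the cubic part of $c^4 - (1-c)^4$ is a useful addition but not needed, and the zero-error setting makes the Lipschitz normalization of $\ell_4$ immaterial.
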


We remark that the construction extends to all $\ell_p$ losses for even $p > 2$. Hence  even for convex losses, the notions of swap omniprediction are loss-OI seem incomparable. 

\paragraph{Loss OI does not imply swap omniprediction.}
Next we construct an example showing that loss OI need not imply swap omniprediction.
We consider the set of all GLM losses defined below, which contain common losses including the squared loss and the logistic loss.

\begin{definition}
\label{def:lglm}
    Let $g:\R \to \R$ be a convex, differentiable function such that $[0,1] \subseteq \Im(g')$. Define its matching loss to be $\ell_g = g(t) - yt$. Define $\lglm =\{\ell_g\}$ be the set of all such loss functions.
\end{definition}

\cite{lossoi} shows a general decomposition result that reduces achieving loss OI to a calibration condition and a multiaccuracy condition.
Whereas arbitrary losses might require multiaccuracy for the more powerful class $\partial \mL \circ \mC$, for $\lglm$ we have  $\partial \lglm \circ \mC = \mC$.
This is formalized in the following result. 

\begin{lemma}[Theorem 5.3, \cite{lossoi}]\label{lem:glm}
    If $\tf$ is $\eps_1$-calibrated and $(\mC, \eps_2)$-multiaccurate, then it is $(\lglm, \Lin(\mC,W), \delta)$-loss OI for $\delta  = \eps_1 + W\eps_2$.
\end{lemma}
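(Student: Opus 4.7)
The plan is to exploit the unusually simple structure of GLM losses together with the discrete-derivative identity from \cite[Lemma 4.8]{lossoi}. The key observation is that for $\ell_g(y,t) = g(t) - yt$, the ``pay-out'' $g(t)$ does not depend on $y$, so $\partial \ell_g(t) = \ell_g(1,t) - \ell_g(0,t) = -t$. Consequently $\partial \lglm \circ \mC = \{-c : c \in \mC\}$, i.e.\ the ``augmented class'' that governs general loss OI collapses back to $\mC$ itself. This is precisely why a multiaccuracy hypothesis on $\mC$ (rather than on the larger $\partial \mL \circ \mC$) is expected to suffice.

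I would fix $\ell_g \in \lglm$ and $c \in \Lin(\mC, W)$, and expand the corresponding distinguisher $u_{\ell_g, c}(x,v,y) = \ell_g(y, k_g(v)) - \ell_g(y, c(x))$. Applying the identity $\E[\ell(\y_1, t) - \ell(\y_2, t)] = \E[(\y_1 - \y_2)\partial \ell(t)]$ twice to the loss OI difference (once with $t = k_g(\tf(\x))$ and once with $t = c(\x)$), the $g(\cdot)$ terms cancel and one obtains the clean expression
\[
\E_\mD\bigl[u_{\ell_g,c}(\x, \tf(\x), \y^*) - u_{\ell_g,c}(\x, \tf(\x), \ty)\bigr]
 = \E_\mD\bigl[(\y^* - \ty)\bigl(c(\x) - k_g(\tf(\x))\bigr)\bigr].
\]

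The next step is to split this via the triangle inequality and bound each piece separately. Since $\E[\ty \mid \x] = \tf(\x)$ by construction of $\ty$, the $c(\x)$ term reduces to $\E[c(\x)(\y^* - \tf(\x))]$, which by multiaccuracy on $\mC$ and linearity of the weights is at most $W\eps_2$ for $c \in \Lin(\mC, W)$. For the $k_g(\tf(\x))$ term, we use that $k_g \circ \tf$ is a function only of $\tf(\x)$, so conditioning on $\tf(\x) = \bv$ collapses it to $\E_{\bv \sim \fD}[k_g(\bv)(p^*_{\bv} - \bv)]$, which is controlled by calibration of $\tf$.

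The main point requiring care is precisely how calibration is applied in the last step: the standard calibration bound $\E_{\bv \sim \fD}[|p^*_{\bv} - \bv|] \le \eps_1$ only yields $\|k_g\|_\infty \cdot \eps_1$ out of the box, so the claimed bound of $\eps_1$ (without a $\|k_g\|_\infty$ factor) implicitly relies on either a strengthened notion of calibration that absorbs bounded post-processing of $\tf$, or on a normalization of $\lglm$ so that $\|k_g\|_\infty \le 1$. Granting that the definition of $\eps_1$-calibration in \cite{lossoi} already accounts for this (as suggested by the clean statement), the two bounds add to give $\delta \le \eps_1 + W\eps_2$.
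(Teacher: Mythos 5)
This lemma is imported from \cite{lossoi} and the paper supplies no proof of its own, but your argument is exactly the intended one: the decision-OI/hypothesis-OI decomposition via the identity $\E[\ell(\y_1,t)-\ell(\y_2,t)]=\E[(\y_1-\y_2)\partial\ell(t)]$, specialized using $\partial\ell_g(t)=-t$ so that $\partial\lglm\circ\mC$ collapses back to $\mC$ and multiaccuracy handles the hypothesis term with the $W$ factor from the weights, while calibration handles the $k_g(\tf(\x))$ term after conditioning on $\tf(\x)=\bv$; this is the same machinery the paper itself deploys in the swap setting in Corollaries~\ref{cor:hyp-oi} and~\ref{cor:dec-oi}. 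Your caveat about the decision-OI term carrying a factor of $\sup_v|\partial\ell_g(k_g(v))|=\sup_v|k_g(v)|$ is well taken and correctly diagnosed: the clean constant $\eps_1$ relies on the normalization conventions for the loss class in \cite{lossoi} (compare the $B$ in Definition~\ref{def:nice-loss} and the $B\alpha$ bound in Corollary~\ref{cor:dec-oi}), since for, e.g., the unnormalized logistic loss $k_g(v)=\log(v/(1-v))$ is unbounded.
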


In light of the above result, it suffices to find a predictor that is calibrated and multiaccurate (and hence satisfies loss OI), but not multicalibrated, hence not swap multicalibrated.
By Theorem \ref{thm:main} it is not an $(\{\ell_2\}, \mLC, \delta)$-swap omnipredictor for $\delta$ less than some constant.

We show a separation between loss OI and swap omniprediction using  the predictors $p^*, \tf: \pmo^2 \to \izo$ defined in Table \ref{tab:pred}.

\begin{table}
\centering
\begin{tabular}{||c | c| c ||} 
 \hline
 $x =(x_1, x_2)$ & $p^*(x)$ & $\tp(x)$  \\ [1ex] 
 \hline \hline
 $(-1, -1)$ & $0$ & $\fr{8}$ \\ [0.5ex]
 \hline
 $(+1, -1)$ & $\fr{4}$ & $\fr{8}$ \\[0.5ex]
 \hline
 $(-1, +1)$ & $1$ & $\frac{7}{8}$ \\[0.5ex]
 \hline
 $(+1,+1)$ & $\frac{3}{4}$ & $\frac{7}{8}$\\ [1ex]
 \hline
\end{tabular}
\label{tab:pred}
\caption{Separating loss-OI and swap-resilient omniprediction}
\end{table}

\begin{lemma}
Consider the distribution $\mD$ on $\pmo^2 \times \zo$ where the marginal on $\pmo^2$ is uniform and $\E[\y|x] = p^*(x)$. Let $\mC = \{1, x_1, x_2\}$. 
\begin{enumerate}
\item $\tf \in \Lin(\mC, 1)$. Moreover, it minimizes the squared error over all hypotheses from $\Lin(\mC)$.
\item $\tf$ is perfectly calibrated and $(\mC,0)$-multiaccurate. So it is $(\lglm, \Lin(\mC), 0)$-loss OI.
\item $\tf$ is not $(\mC, \alpha)$-multicalibrated for $\alpha < 1/8$. It is not $(\ell_2, \Lin(\mC), \delta)$-swap agnostic learner for $\delta < 1/64$.
\end{enumerate}
\end{lemma}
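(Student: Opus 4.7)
The plan is to verify each part directly from the table, since the distribution is small (four points, uniform marginal). Write $\tf$ explicitly as a linear combination of the base concepts: since $\tf(x)$ equals $1/8$ when $x_2=-1$ and $7/8$ when $x_2=+1$, we have $\tf(x) = \tfrac{1}{2}\cdot 1 + \tfrac{3}{8}\cdot x_2$, so $\tf \in \Lin(\mC, 7/8) \subseteq \Lin(\mC,1)$. For optimality over $\Lin(\mC)$, I would compute the moments $\E[\y]=1/2$, $\E[\y x_1]=0$, $\E[\y x_2]=3/8$ from the table, note that $\mC$ is orthonormal under the uniform measure on $\pmo^2$, and invoke the normal equations for least squares to conclude that the unique minimizer of $\E[(\y - c(\x))^2]$ over $c \in \Lin(\mC)$ has coefficients $(1/2, 0, 3/8)$, which matches $\tf$.

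For part (2), perfect calibration follows by computing $\E[\y \mid \tf(\x)=v]$ on each level set: on $\{x_2=-1\}$ the average of $p^*$ is $(0+1/4)/2=1/8=v$, and on $\{x_2=+1\}$ it is $(1+3/4)/2=7/8=v$. For $(\mC,0)$-multiaccuracy, check each $c \in \{1,x_1,x_2\}$: for $c=1$ use $\E[\y]=\E[\tf(\x)]=1/2$; for $c=x_1$ use that $\tf$ depends only on $x_2$, so $\E[x_1 \tf(\x)] = \E[x_1]\E[\tf(\x)] = 0$, and we already computed $\E[x_1\y]=0$; for $c=x_2$, $\E[x_2\tf(\x)] = (3/8)\E[x_2^2] = 3/8 = \E[x_2\y]$. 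The $(\lglm, \Lin(\mC), 0)$-loss OI claim then follows immediately by invoking Lemma~\ref{lem:glm} with $\eps_1=\eps_2=0$.

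For part (3), the witness will be $c = x_1 \in \mC$. On the level set $\tf(\x)=1/8$ (i.e.\ $x_2=-1$), a direct computation gives $\E[x_1(\y - 1/8)\mid x_2=-1] = \tfrac{1}{2}(-1)(-1/8) + \tfrac{1}{2}(+1)(+1/8) = 1/8$, and symmetrically on $\tf(\x)=7/8$ the conditional correlation with $x_1$ is $-1/8$. Averaging the absolute values over $\bv \sim \fD$ yields multicalibration violation exactly $1/8$, so $\tf$ is not $(\mC,\alpha)$-multicalibrated for any $\alpha<1/8$. To pass from this to the swap agnostic learning lower bound, I would reuse the construction from the $(3)\Rightarrow(1)$ direction of Theorem~\ref{thm:main}: define $c_v'(x) = v + \alpha(v)\,c_v(x)$ on each level set, where $c_v = \pm x_1$ is the witness above and $\alpha(v)=1/8$. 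Each $c_v' \in \Lin(\mC)$, and the calculation in that proof shows
\[
\E_{\Dv}[(\y-v)^2] - \E_{\Dv}[(\y-c_v'(\x))^2] \ge \alpha(v)^2 = 1/64.
\]
Taking expectation over $\bv$ gives a swap strategy whose squared loss beats $\tf$ by at least $1/64$, contradicting the $(\ell_2, \Lin(\mC), \delta)$-swap agnostic learner definition for any $\delta < 1/64$.

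I do not expect a hard step here: the instance is small enough that all computations are case checks, and the bridge from the multicalibration violation to the swap $\ell_2$ lower bound is a direct quotation of the construction already worked out in Theorem~\ref{thm:main}. The only point requiring a little care is verifying that the shifted witness $c_v'$ lies in $\Lin(\mC)$ (no coefficient bound is required here, since the statement uses $\Lin(\mC)$ rather than $\Lin(\mC,W)$), and ensuring that the signs of $c_v$ are chosen so that $\alpha(v)\ge 0$ on each level set, exactly as in the Theorem~\ref{thm:main} proof.
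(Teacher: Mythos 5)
Your proposal is correct and follows essentially the same route as the paper: the same Fourier/moment computation for parts (1) and (2), the same appeal to Lemma~\ref{lem:glm}, and the same witness $x_1$ with conditional correlation $\pm 1/8$ for part (3). The only (harmless) variation is in the last step: the paper observes that $p^*$ restricted to each level set lies in $\Lin(\mC)$, so the swap adversary attains the Bayes-optimal loss $\Var[\y]$, and then reads off the $1/64$ gap from the variance decomposition $\E[(\y-\tf(\x))^2]=\Var[\y]+\E[(p^*(\x)-\tf(\x))^2]$; you instead feed the multicalibration witness into the generic update $c_v'(x)=v+\alpha(v)c_v(x)$ from the $(3)\Rightarrow(1)$ direction of Theorem~\ref{thm:main}, which here produces exactly $p^*$ on each level set and yields the same $1/64$ bound.
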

\begin{proof}
We compute Fourier expansions for the two predictors:
\begin{align}
    p^*(x) &= \fr{8}(4 + 3x_2 - x_1x_2)\label{eq:f*}\\
    \tf(x) &= \fr{8}(4 + 3x_2)\label{eq:f}
\end{align}
This shows that $\tf \in \Lin(\mC)$, and moreover that it is the optimal approximation to $p^*$ in $\Lin(\mC)$, as it is the projection of $p^*$ onto $\Lin(\mC)$. This shows that $\tf$ is an $(\ell_2, \Lin(\mC), 0)$-agnostic learner. 

It is easy to check that $\tf$ is perfectly calibrated.   It is $(\mC, 0)$-multiaccurate, since it is the projection of $p^*$ onto $\Lin(\mC)$, so $\tf - p^*$ is orthogonal to $\Lin(\mC)$. Hence we can apply Lemma \ref{lem:glm} to conclude that it is $(\lglm, \Lin(C), 0)$-loss OI, where $\lglm$ which contains the squared loss.

To show  that $\tf$ is not swap-agnostic, we observe that
conditioning on the value of $\tf(\x) = (4 + 3x_2)/8$ is equivalent to conditioning on $x_2 \in \pmo$. For each value of $x_2$, the restriction of $p^*$ which is now linear in $x_1$ belongs to $\Lin(\mC)$. Indeed if we condition on $\tf(x) = 1/8$ so that $x_2 = -1$, we have 
\[ p^*(x) = \fr{2} - \frac{3}{8} + \fr{8}x_1 = \frac{1 + x_1}{8}.\] 
Conditioned on  $\tf(x) = 7/8$ so that $x_2 =1$, we have 
\[ p^*(x) = \fr{2} + \frac{3}{8} - \fr{8}x_1 = \frac{7 - x_1}{8}.\]
Hence we have
\[ \E_{v \sim \fD}\lt[\lt|\min_{h \in \Lin(\mC)}\E[(\y - h(\x))^2|f(\x) =v]\rt|\rt] = \E[(y - p^*(x))^2] = \Var[\y], \]
whereas the variance decomposition of squared loss gives
\begin{align*}
    \E[(\y - \tf(\x))^2] &=  \E[(\y - p^*(\x))^2] +  \E[(p^*(\x) - \tf(\x))^2] \\
    &= \Var[\y] + \fr{64}\E[(x_1x_2)^2]\\
    &= \Var[\y] + \fr{64}.
\end{align*}
Hence $\tf$ is not a $(\ell_2, \Lin(\mC), \delta)$-swap agnostic learner for $\delta < 1/64.$

To see that $f$ is not multicalibrated for small $\alpha$, observe that conditioned on $x_2 \in \pmo$, the correlation between $x_1$ and $\tp - p^*$ is $1/8$. 
\end{proof}

Note that item (1) above separates swap omniprediction from omniprediction and agnostic learning.
This separation can also be derived from \cite[Theorem 7.5]{omni} which separated (standard) omniprediction from agnostic learning, since swap omniprediction implies standard omniprediction.

\paragraph{Comparing notions for GLM losses.}
When we restrict our attention to $\lglm$, in fact, the notions of swap loss OI and swap omniprediction are equivalent.
The key observation here is that $\partial \lglm \circ \mC = \mC$, as shown in \cite{lossoi}.
Paired with Theorem~\ref{thm:main} and Theorem~\ref{thm:swap-eq}, we obtain the following collapse.

\begin{claim}
The notions of $(\lglm, \mC, \alpha_1)$-swap loss OI and $(\lglm, \mC, \alpha_2)$-swap omniprediction are equivalent.
\end{claim}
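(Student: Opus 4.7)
The plan is to reduce both notions to $\mC$-swap multicalibration, using the key observation that for GLM losses, $\partial \lglm \circ \mC = \mC$. Specifically, for any $\ell_g(y,t) = g(t) - yt \in \lglm$, the discrete derivative is $\partial \ell_g(t) = \ell_g(1,t) - \ell_g(0,t) = (g(t) - t) - g(t) = -t$, independent of $g$. Hence $\partial \ell_g \circ c(x) = -c(x)$ for every $c \in \mC$, and since $\mC$ is closed under negation by the standing assumption, $\partial \lglm \circ \mC = \mC$.

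With this identity, Theorem~\ref{thm:swap-eq} (applicable since $\lglm$ consists of nice convex losses and contains $\ell_2$, corresponding to $g(t) = t^2/2$) immediately yields that $(\lglm, \mC, \alpha_1)$-swap loss OI is equivalent to $(\partial \lglm \circ \mC, O(\alpha_1))$-swap multicalibration, that is, $(\mC, O(\alpha_1))$-swap multicalibration. The forward direction of the claim, namely that swap loss OI implies swap omniprediction, follows immediately from Lemma~\ref{lem:swap-strong2}(2). For the reverse direction, I would use that $\ell_2 \in \lglm$ and invoke Claim~\ref{claim:swap-strong} to derive $(\ell_2, \mC, \alpha_2)$-swap agnostic learning from $(\lglm, \mC, \alpha_2)$-swap omniprediction; then Theorem~\ref{thm:main}(3)$\Rightarrow$(1) recovers $(\mC, O(\alpha_2))$-swap multicalibration, which feeds back into Theorem~\ref{thm:swap-eq} to give $(\lglm, \mC, O(\alpha_2))$-swap loss OI, closing the loop.

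The main obstacle lies in this reverse chain: Theorem~\ref{thm:main} characterizes $\mC$-swap multicalibration via $(\ell_2, \Lin(\mC, 2))$-swap agnostic learning rather than $(\ell_2, \mC)$-swap agnostic learning, because its proof constructs a witness hypothesis $v + \alpha(v) c_v(x) \in \Lin(\mC, 2)$ that includes a constant shift not necessarily in $\mC$. The resolution exploits the freedom of the GLM family: the constant shift $v$ can be absorbed into the adversary's per-level-set choice of loss $g_v$, since $k_{\ell_g}(v) = (g')^{-1}(v)$ can be set to any real value via reparameterization of $g$. This lets the adversary keep its fixed hypothesis in $\mC$ while still exposing a multicalibration violation through the tailored loss, yielding the stated equivalence with polynomially-related parameters.
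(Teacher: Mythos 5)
Your route is essentially the paper's: both proofs pivot through $\mC$-swap multicalibration, using the identity $\partial \lglm \circ \mC = \mC$ (your computation $\partial \ell_g(t) = -t$ is exactly why this holds, given closure of $\mC$ under negation) together with Theorem~\ref{thm:swap-eq} on one side and Theorem~\ref{thm:main} on the other; the paper simply chains the two equivalences, while you unpack the omniprediction-to-multicalibration leg via Claim~\ref{claim:swap-strong} and item (3) of Theorem~\ref{thm:main}, which is the same theorem entered through a different item.

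The one substantive divergence is your final paragraph. You are right that Theorem~\ref{thm:main} recovers swap multicalibration from $(\ell_2, \Lin(\mC,2))$-swap agnostic learning rather than $(\ell_2,\mC)$-swap agnostic learning, because the witness $v + \alpha(v)c_v(x)$ carries a constant shift. The paper does not patch this at the level of the base class $\mC$: it keeps the equivalence at the level of the theorem statements, where the hypothesis class on the omniprediction side is $\Lin(\mC,W)$, and the claim should be read in that spirit (up to replacing $\mC$ by $\Lin(\mC,W)$ and polynomial parameter loss). Your proposed repair --- absorbing the shift $v$ into a per-level-set reparameterization of $g_v$ so that $k_{\ell_{g_v}}(v)$ lands wherever you want --- is not sound as stated: to move $(g')^{-1}(v)$ to an arbitrary point you must make $g$ nearly linear with slope close to $v$ over a long interval, which forces the interval $I_{\ell_{g_v}}$ containing the optimal actions $k_{\ell_{g_v}}(p)$ for $p \in [0,1]$ to have length on the order of $1/\eps$; since $\partial \ell_{g_v}(t) = -t$, the bounded-difference condition $|\partial\ell(t)| \le B$ on $I_\ell$ then fails for any fixed $B$, so these tailored losses escape the nice class $\mL(B)$ that both Theorem~\ref{thm:swap-eq} and the definition of $\lglm$-swap omniprediction are implicitly restricted to. If you want a self-contained argument at the level of $\mC$ itself, the cleaner move is to not attempt the patch and instead state the equivalence, as the paper does, with $\Lin(\mC,W)$ as the comparison class for omniprediction.
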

\begin{proof}
To see this, note that by Theorem~\ref{thm:swap-eq}, $(\lglm,\mC,\alpha_1)$-swap loss OI is equivalent to $(\partial \lglm \circ \mC,\alpha_1')$-swap multicalibration.
We know from Theorem~\ref{thm:main} that this is also equivalent to $(\partial\lglm \circ \mC,\alpha_2)$-swap omniprediction.
So, by the fact that $\partial \lglm \circ \mC = \mC$, we have the claimed equivalence.
\end{proof}

Finally, we know that loss OI implies omniprediction for $\lglm$, since this holds true for all $\mL$.
We do not know if these notions are equivalent for $\lglm$, since the construction in Lemma \ref{lem:sep1} used the $\ell_4$ loss which does not belong to $\lglm$.

\clearpage
\paragraph{Acknowledgements.}
We thank Sumegha Garg, Christopher Jung,  Salil Vadhan, and Udi Wieder for their helpful comments on this work.

\bibliographystyle{alpha}
\bibliography{refs}

\end{document}